\setlist{itemsep=0pt}
\theoremstyle{definition}
\newtheorem{theorem}{Theorem}[section]
\newtheorem{proposition}{Proposition}
\newtheorem{lemma}{Lemma}
\newtheorem{conjecture}[theorem]{Conjecture}
\theoremstyle{remark}
\newtheorem{remark}[theorem]{Remark}
\newcommand{\keywords}[1]{\textbf{Keywords:} #1}
\newcommand{\R}{\mathbb{R}}
\newcommand{\E}{\mathbb{E}}
\DeclareMathOperator{\Var}{Var}
\DeclareMathOperator{\Dkl}{D_\mathrm{KL}}
\renewcommand{\d}{\mathrm{d}}
\newcommandx{\oz}[2][1=]{\todo[linecolor=OliveGreen,backgroundcolor=OliveGreen!25,bordercolor=OliveGreen,#1]{#2}}
\newcommand{\lb}{\left(}
\newcommand{\rb}{\right)}
\newcommand{\trans}{\mathsf{T}}
\newcommand{\mcN}{\mathcal{N}}
\newcommand{\opt}{\mathrm{opt}}
\newcommand{\Ent}{\textsc{Ent}}
\newcommand{\mcJ}{\mathcal{J}}
\newcommand{\x}{x}
\renewcommand{\d}{\mathrm{d}}
\newcommand{\df}{\mathrm{df}}
\newcommand{\KL}{\mathrm{KL}}
\newcommand{\Hell}{\mathrm{Hell}}
\newcommand{\pioptKLr}{\widetilde{\pi}^{\KL}}
\DeclareMathOperator{\trace}{tr\,}
\DeclareMathOperator{\dhell}{d_{\Hell}^2}
\begin{document}

\title{Sharp detection of low-dimensional structure in probability measures via dimensional logarithmic Sobolev inequalities}

\author{Matthew T.C. Li\footnote{{Center for Computational Science and Engineering}, {MIT}, {{USA}}},
Tiangang Cui\footnote{{School of Mathematics and Statistics}, {University of Sydney}, {{Australia}}},
Fengyi Li$^{*}$,Youssef Marzouk$^{*}$ and
Olivier Zahm\footnote{{Universit\'e Grenoble Alpes}, {Inria, CNRS, Grenoble INP, LJK}, {Grenoble}, {France}}
}
\date{}

\maketitle

\begin{abstract}
 Identifying low-dimensional structure in high-dimensional probability measures is an essential pre-processing step for efficient sampling.
To identify this structure, we approximate the target measure as a perturbation of an arbitrary reference measure along a few directions in $\R^{d}$. These directions are determined by minimizing an upper bound on the Kullback–Leibler (KL) divergence between the target and its approximation. Our contribution improves upon previous works by leveraging \emph{dimensional} logarithmic Sobolev inequalities (LSI) to refine the bound on the KL divergence. These inequalities lead to a uniformly tighter bound on the KL divergence, thereby enhancing the identification of the most significant perturbation directions.
In particular, when the target and reference are both Gaussian, minimizing the resulting bound is equivalent to minimizing the KL divergence.
We further demonstrate the applicability of this analysis to the squared Hellinger distance, where analogous reasoning shows that the \emph{dimensional} Poincar\'e inequality offers improved bounds.
\end{abstract}

\keywords{Bayesian inference, dimensional logarithmic Sobolev inequality, dimensional Poincar\'e inequality, feature detection, gradient-based dimension reduction, reaction coordinates.}

\section{Introduction} \label{sec:intro}

{High-dimensional probability measures often exhibit an inherent low-dimensional structure that can be leveraged to improve computational efficiency in various tasks, including sampling. Our objective is to identify such a low-dimensional structure by exploiting the fact that a given target measure $\pi$ is close, in some sense, to a low-dimensional perturbation of an arbitrary reference measure $\mu$.
When properly identified, this structure can significantly enhance the performance of sampling algorithms, which is known to be challenging as the dimension~$d$ increases  \cite{roberts1998optimal,pillai2012,andrieu2022explicit,mangoubi2018dimensionally,Snyder_Bengtsson_Bickel_Anderson_2008, Rebeschini_van_Handel_2015}.}
Typically, $\mu$ takes the form of a product measure, a Gaussian measure, or a nonlinear transformation of it, such as those used in GANs or other generative models~\cite{GANs:2014}.
Depending on the application, $\mu$ can either be specified by domain experts, or $\mu$ can be constructed from empirical samples. {Throughout this manuscript we consider only probability measures admitting a Lebesgue density, and we further assume that $\tfrac{\d\pi}{\d\mu}(x) \propto \ell(x)$ for some smooth positive function $\ell:\R^d\rightarrow\R$.}

In many realistic scenarios, $\pi$ is close to $\mu$ in that both measures essentially differ only along a few significant dimensions of $\R^d$. This suggests that $\pi$ can be approximated by a probability measure $\widetilde\pi$ defined as
\begin{equation}\label{eq:piTildeIntro}
    \d\widetilde\pi(x) \propto \ell_r(U_r^\top x)\d\mu(x),
\end{equation}
with the \emph{profile function} $\ell_r:\R^r\rightarrow\R_+$,  a low-dimensional function of $r\ll d$ variables, acting on the range of the \emph{linear features} $U_r \in \R^{d \times r}$, both to be determined.
{Without loss of generality we consider $U_r$ with orthonormal columns, so that $U_r^\top U_r = I_r$ is the identity matrix of size~$r$, since any scalings or dilations  can be absorbed into the definition of~$\ell_r$.}
Such structured approximation --- which we interchangeably describe as \emph{dimension reduction} for the probability measure~$\pi$ --- bears relevance to at least the following contexts:
\begin{itemize}
\setlength{\itemsep}{1em}
 \item In Bayesian inverse problems \cite{stuartActa,KaipioSomersalo} one is interested in the posterior measure $\d\pi(x) \propto \ell(x) \d\mu(x)$, where $\mu$ denotes the prior measure of the model parameters~$x \in \R^d$, and $\ell$ the likelihood function associated with observed data. The approximation \eqref{eq:piTildeIntro} replaces the likelihood~$\ell$ with a ridge function $x\mapsto \ell_r(U_r^\top x)$, and the linear features $U_r$ capture low-dimensionality in the prior-to-posterior update. {These features correspond to the directions of the parameter $x$ for which data are most informative.}

 \item In generative modeling, probability measures~$\mu$
 are modeled as the push-forward of lower-dimensional measures~$\nu$ in~$\R^\kappa$ with $\kappa \ll d$ latent coordinates. The transport map $T: \R^\kappa \to \R^d$, so that $\mu = T_\# \nu$, is learned from empirical samples~\cite{GANs:2014}. Given a likelihood model specified in the nominal $d$~dimensional parameter space, one then wishes to perform conditional sampling, which requires only sampling $\d\pi(z) = (\ell \circ T) (z) \d\nu(z)$ in the latent space~\cite{patelSolutionPhysicsbasedBayesian2022a}. Here, the approximation~\eqref{eq:piTildeIntro} replaces the composed likelihood $\ell \circ T$ with a ridge function capturing the low-dimensional prior-to-posterior updates, further complementing the dimensionality reduction brought on by the latent space compression.

 \item In molecular dynamics, $\d\pi(x)\propto\exp(-\beta V(x))\d x$ is a Boltzmann distribution associated with a molecular system with energy potential $x\mapsto V(x)$.
 Such distributions are often multi-modal, with each mode corresponding to a stable physical conformation.
{If $\pi$ is well-approximated by a measure $\widetilde\pi$ as in \eqref{eq:piTildeIntro} using a unimodal reference measure~$\mu$, then the features $U_r$ have a physical interpretation as \emph{reaction coordinates}: directions in configuration space from which the multi-modality of $\pi$ emerges. Discovering meaningful reaction coordinates is an active research field, see e.g., \cite{pillaud-vivienStatisticalEstimationPoincar2019}.}

\end{itemize}

With some abuse of notation, we denote the measures in \eqref{eq:piTildeIntro} by $\widetilde\pi(U_r,\ell_r)$.
Optimal approximations within this class can be determined by minimizing the Amari $\alpha$-divergence~\cite{Amari_2009}
\begin{equation}
\label{eq:loss}
\min_{\substack{U_r \in \R^{d \times r} \\ U_r^\top U_r = I_r}} \, \min_{\ell_r : \R^r \to \R_+} D_\alpha(\pi\,||\,\widetilde\pi(U_r,\ell_r)) \;,
\end{equation}
though other probability divergences or metrics are also of interest.
Choosing $\alpha \in \R$ interpolates the Kullback-Leibler divergence ($\alpha=1$), the squared Hellinger distance ($\alpha=1/2$), the $\chi^2$~divergence ($\alpha=2$), {as well as each divergence with the reverse ordering of arguments}\footnote{{Here and throughout, our indexing follows the definition of $\alpha$-divergences used in~\cite{Chafai_2004, Bolley_Gentil_2010}, which matches the original $\alpha'$-divergence in~\cite{Amari_2009} by a multiplicative factor, and by setting $\alpha = \tfrac{1}{2}(1-\alpha')$.}}.
Since \cite[Thm 1]{Li_Marzouk_Zahm_2023} shows that the optimal profile function~$\ell_r^*=\ell_r^*(U_r)$ admits a closed-form solution depending on $\alpha$ and $U_r$, the problem in~\eqref{eq:loss} reduces to a finite dimensional optimization program for features~$U_r$ minimizing $D_\alpha(\pi\,||\,\widetilde\pi(U_r,\ell_r^*(U_r)))$. Nevertheless, this objective function requires access to the normalizing constants of~$\widetilde\pi(U_r,\ell_r^*(U_r))$ and~$\pi$, and gradients thereof, and thus remains computationally intractable in higher dimensions.

Zahm et al.~\cite{ZCLSM22} and subsequent works circumvent this problem by \emph{majorizing} the inner optimization problem in~\eqref{eq:loss} following
\begin{equation}\label{eq:majorant}
  \min_{\ell_r : \R^r \to \R_+} D_\alpha(\pi\,||\,\widetilde\pi(U_r,\ell_r)) \leq \mcJ_\alpha(U_r) \;.
\end{equation}
The majorant $U_r \mapsto \mcJ_\alpha(U_r)$, whose dependence on~$\pi$ we drop from our notation, is constructed to be both easier to evaluate numerically and to minimize with respect to~$U_r$ compared to~\eqref{eq:loss}.
{The minimizer~$U_r^*$ of the right-hand side of \eqref{eq:majorant} is not guaranteed to be a global minimizer of the Amari $\alpha$-divergence, but one can guarantee
any desired error tolerance $D_\alpha(\pi\,||\,\widetilde\pi(U_r^*,\ell_r^*(U_r^*))) \leq \epsilon$ by selecting sufficiently large~$r \in \{1, \cdots, d \}$ which ensures~$\mcJ_\alpha(U_r^*) \leq \epsilon$.
In many practical situations, it has been observed that it suffices to choose~$r \ll d$ to achieve ~$\epsilon \ll 1$.
For this reason, we refer to this as a \emph{computationally certifiable approximation}, analogous to the use of dual certificates in optimization.
}

These majorizations are constructed by relating each $\alpha$-divergence to a functional inequality satisfied by the reference measure~$\mu$, and they only require the Radon-Nikodym derivative $\d\pi/\d\mu$ to be a smooth function. The seminal contribution of Zahm et al.~\cite{ZCLSM22} related the Kullback-Leibler (KL) divergence to the \emph{logarithmic Sobolev inequality} (LSI, \cite{stamInequalitiesSatisfiedQuantities1959a, Gross_1975}), while  \cite{Cui_Tong_2021,Cui_Dolgov_Zahm_2022,flock2023certified} related the squared Hellinger distance to the \emph{Poincar\'e inequality}.
Later~\cite{Li_Marzouk_Zahm_2023} generalized these results by relating each Amari $\alpha$-divergence, with $\alpha \in (0,1]$, to a corresponding \emph{$\phi$-Sobolev inequality}~\cite{Chafai_2004, Bolley_Gentil_2010}.
Surprisingly, the minimizer of $U_r^*$ of $U_r \mapsto \mcJ_\alpha(U_r)$ is identical for all $\alpha \in (0,1]$, and is given by the matrix whose  columns contain the dominant eigenvectors of the matrix
\begin{equation}\label{eq:defH}
 H(\pi||\mu) =
 \E_{\pi}\left[\nabla \ln \left(\frac{\d\pi}{\d\mu}\right)\nabla \ln \left(\frac{\d\pi}{\d\mu}\right)^\top\right] \;,
\end{equation}
also known as the \emph{relative Fisher information matrix of $\pi$ to $\mu$}.

An interesting open question is whether better majorants---and thus features with better certifiable approximation guarantees---can be derived for specific~$\alpha \in (0,1]$. A heuristic motivation for exploring alternative functional inequalities stems from the observation that the $\phi$-Sobolev inequalities, including the LSI or Poincar\'e inequality,  are `dimensionally independent', and thus formally valid even for infinite-dimensional distributions~\cite{Bakry2014, Chafai_2004}. However, in constructing majorants for the purpose of dimension \emph{reduction} this dimensional independence property is not explicitly used.

To that end, our main contribution in Theorem~\ref{thm:dimCDR} demonstrates that an improved majorant and minorant
\begin{equation}
\label{eq:dimCDRbound_intro}
\mcJ_\KL^\uparrow(U_r) \leq
\min_{\ell_r : \R^r \to \R_+}
D_\KL(\pi\,||\,\widetilde\pi(U_r,\ell_r))
\leq \mcJ_\KL^\downarrow(U_r)
\end{equation}
can be derived for the KL divergence using the \emph{dimensional logarithmic Sobolev inequality}~\cite{Bakry_Ledoux_2006, Dembo_1990},
where above we have suppressed implicit dependencies of the functions $U_r \mapsto \mathcal{J}^{\downarrow (\uparrow)}_\KL(U_r)$ on~$\pi$ for notational simplicity. This functional inequality improves upon the LSI by explicitly accounting for the nominal dimension~$d$ of the reference measure~$\mu$. Consequently, we can show that
$\mcJ^\downarrow_\KL(U_r)  \leq \mcJ_{\alpha =1}(U_r)$ for all $U_r$ with orthonormal columns,
demonstrating that the new majorant uniformly improves on the bound obtained in \cite{ZCLSM22} from the LSI by as much as an exponential factor.
{Surprisingly, in Proposition \ref{thm:dimCDRexactgauss} we show that
the dimensional majorant~$\mcJ^\downarrow_\KL(U_r)$
is equal to
the KL divergence~\eqref{eq:dimCDRbound_intro}
when $\pi$ is a Gaussian measure, even though the dimensional LSI was not originally derived with dimension reduction in mind.}
While the minorant $\mcJ_\KL^\uparrow(\cdot)$ is not used to determine the minimizer $U_r^\downarrow$ of the right-hand side of~\eqref{eq:dimCDRbound_intro}, it provides a useful certificate for an achievable best case error.

Evaluating and optimizing the improved majorant $\mcJ_\KL^\downarrow(\cdot)$ requires computing
\begin{align}\label{eq:defHandM}
 H(\pi) &= \E_{\pi}\left[\nabla \ln \left(\frac{\d\pi}{\d x}\right)\nabla \ln \left(\frac{\d\pi}{\d x}\right)^\top\right]
 \quad \text{and} \quad
 M(\pi) = \E_{\pi}[ XX^\top],
\end{align}
where $H(\pi)$ is the Fisher information matrix of $\pi$,
and $M(\pi)$ is the second moment matrix of $\pi$.
Evaluating the minorant $\mcJ_\KL^\uparrow(\cdot)$ requires in addition the computation of $m(\pi)=\E_\pi[X]$, the mean of the measure $\pi$.
{The numerical estimation of $H(\pi)$, $M(\pi)$ and $m(\pi)$ is a classical problem in computational statistics, see \cite{ZCLSM22,haario2001adaptive,uribe2021cross,chen2020projected}.}
To the best of our knowledge,
there is no closed-form expression for the global minimizer of $\mcJ_\KL^\downarrow(\cdot)$. Nevertheless, we observe numerically that first order gradient descent methods converge to the same minimizer irrespective of the initialization. We conjecture that $\mcJ_\KL^\downarrow(\cdot)$ is \emph{benignly non-convex}~\cite{sunWhenAreNonconvex2016} on the Grassmann manifold,
but we cannot prove this and leave this as a problem of independent interest for future work.

Alternatively, in Theorem~\ref{thm:dimCDR_tilted}  we consider the reference measure~$\mu = \mcN(m(\pi),\, C(\pi))$ matching the mean~$m(\pi)$ and covariance~$C(\pi)$ of the target measure~$\pi$. In this setting, the global minimizer of the majorization derived from the dimensional LSI admits a closed form solution.
Intriguingly, this bound suggests that relative deviations between $\pi$ and its moment-matched Gaussian approximation are reflected in the relative changes
between the Fisher information matrix~$H(\pi)$ and the precision matrix~$C(\pi)^{-1}$, in that
\begin{equation}
 {\min_{\ell_r : \R^r \to \R_+} \Dkl(\pi || \pioptKLr(U_r^\downarrow,\ell_r)) }
 \leq \frac{1}{2}\sum_{k=r+1}^d \ln (   \lambda_k(H(\pi) , C(\pi)^{-1})  ),
\end{equation}
where $\lambda_k(A,B) $ is the $k$-th largest eigenvalue of the generalized eigenvalue problem $Av = \lambda Bv$, and the columns of $U_r^\downarrow$ contain the first~$r$ generalized eigenvectors.

For
Bayesian inverse problems, an alternative notion of low-dimensional structure can also be considered. Taking the target measure to be the posterior~$\pi_{X \mid Y}$ of state~$X$ conditioned on observations~$Y$,
\cite{Cui_Zahm_2021} show that the LSI can be used to determine features~$V_r^*$ along which the posterior differs most from the prior~$\mu_X$ for the \emph{average} realization of data~$Y$.
We show in Theorem~\ref{thm:dimCDRdatafree} that the dimensional LSI further improves on these bounds. Specifically, we recover the same features $V_r^*$ as in~\cite{Cui_Zahm_2021}, corresponding to the leading eigenvectors of
$H_\mathrm{df}(\pi_{X \mid Y}) := \E_{X,Y}[\nabla_x \ln \pi_{x \mid y}\nabla_x \ln \pi_{x \mid y}^\top]$, but
with the improved certificate of approximation
\begin{equation}
\label{eq:dimCDRdatafree_intro}
 \E_Y\left[ \Dkl(\pi_{X \mid Y} \,||\, \widetilde{\pi}_{X \mid Y}(V_r^*)  \right] \leq \frac{1}{2} \sum_{k=r+1}^d \ln(1 + \lambda_k(H_\mathrm{df}(\pi_{X \mid Y}))),
\end{equation}
where $\lambda_k(A)$ denotes the $k$-th largest eigenvalue of $A$. Practically, \eqref{eq:dimCDRdatafree_intro} implies that significantly fewer features are necessary to attain a certifiable error level than previously believed.
The form of the certificate~\eqref{eq:dimCDRdatafree_intro} also highlights a connection to the \emph{expected information gain} (EIG) appearing in optimal experimental design~\cite{alexanderianBayesianDOptimalExperimental2016,ghattasLearningPhysicsbasedModels2021,huanOEDacta2024}.
The application of functional inequalities to optimal experimental design is also a topic of recent interest~\cite{liNonlinearBayesianOptimal2024, qiao2024oed}.

Lastly, we also explore structured approximations in~\eqref{eq:loss} with the squared Hellinger distance corresponding to~$\alpha = \tfrac{1}{2}$.  Motivated by the same heuristic considerations behind exploring the dimensional LSI, we show in Theorem~\ref{thm:dimCDRhell} that the \emph{dimensional Poincar\'e inequality} proposed in~\cite{Bolley_Gentil_Guillin_2018} permits us to obtain improved majorizations over the bounds obtained from the standard Poincar\'e inequality. However, the form of this improved majorization is implicit, which is difficult to optimize directly. Instead, we consider its use in obtaining improved certificates for the optimal features~$U_r^*$. Intriguingly, leveraging this improvement requires the use of \emph{lower bounds} on the approximation error, which presents significant computational challenges.  We leave addressing these issues, as well as potential applications of the dimensional Poincar\'e to more general ridge-based regression (see, e.g., \cite{bigoniNonlinearDimensionReduction2021,verdiere2023diffeomorphism}), to future work.

\subsection{Notations.} {Throughout we use $U_r \in \R^{d \times r}$ to denote matrices with orthonormal columns, so that $U_r^\top U_r = I_r$, where $I_r$ denotes the $r$ dimensional identity matrix. Often we write $U_\perp \in \mathbb{R}^{d \times (d-r)}$ as any orthonormal complement to $U_r$, which though is not unique, is sufficient for our purposes. For any $x \in \R^d$ we thus have the decomposition $x = U_r x_r + U_\perp x_\perp$, where $x_r \in \R^r$ and $x_\perp \in \R^{d-r}$; it will be clear from context that the subscript $\perp$ refers to a $d-r$ dimensional object.  We write $\lambda_k(A,B)$ to denote the $k$-th largest eigenvalue of the matrix pencil $(A,B)$, and $\lambda_k(A)$ to denote the $k$-th largest eigenvalue of matrix~$A$. The set $\R_+$ is used to denote non-negative real numbers.}

\section{Dimension Reduction with the KL Divergence}
\label{sec:KL}

\subsection{Certifiable bounds using the logarithmic Sobolev inequality}\label{sec:logSob}

We recall some properties of the optimal profile function from \cite[\S 2.1]{ZCLSM22} for the KL divergence. For any matrix $U_r \in \R^{d \times r}$ with $r \leq d$ orthonormal columns, and with $\widetilde\pi$ as in~\eqref{eq:piTildeIntro}, the minimizer of $\ell_r \mapsto \Dkl(\pi \,||\, \widetilde\pi) = \int\log(\frac{\d\pi}{\d\widetilde\pi})\d\pi$  is the conditional expectation
\begin{equation}
\label{eq:optKL}
\ell^{\KL}_r(\theta_r ) = \E_{X \sim \mu}\left[ \left.\frac{\d\pi}{\d\mu}(X) \,\right|\, U_r^\top X = \theta_r \right] .
\end{equation}
We denote by $\pioptKLr(U_r) = \pioptKLr(\cdot \mid U_r)$ the resulting probability measure given by
\begin{equation}
\label{eq:pioptKL}
 \d\pioptKLr(x \mid U_r) \propto \ell^{\KL}_r(U_r^\top x)\d\mu(x) \;.
\end{equation}
Let us emphasize that, by construction, the law of $U_r^\top \widetilde X$ and of $U_r^\top X$ are the same for any $U_r$, where $\widetilde X\sim\widetilde\pi^\KL_r(U_r)$ and $X\sim\pi$. In other words, the KL optimal profile \eqref{eq:optKL} ensures that the marginal laws of $\widetilde\pi^\KL_r(U_r)$ and $\pi$ are the same along the linear feature $U_r$.
Using the above notations, problem~\eqref{eq:loss} becomes
\begin{equation}\label{eq:lossKL}
    \min_{\substack{U_r \in \R^{d \times r} \\ U_r^\top U_r = I_r}} \Dkl(\pi \,||\,  \pioptKLr(U_r) ) .
\end{equation}
In the following we assume that $\mu=\mathcal{N}(0,I_d)$ is the standard normal measure.
In order to derive a majorant for $\Dkl(\pi \,||\,  \pioptKLr(U_r) )$ we employ the \emph{logarithmic Sobolev inequality} (LSI).

\begin{proposition}[Gaussian LSI]
\label{thm:LSI}
The entropy $\Ent_\mu(f) = \int f \ln f \d\mu - \int f \d\mu \ln \int f \d \mu$ of any smooth positive function $f: \R^d \to \R_+$ with respect to the standard Gaussian~$\mu=\mathcal{N}(0,I_d)$ satisfies
\begin{equation}
\label{eq:LSI}
\frac{1}{2 \int f \d\mu} \, \left\| \int \nabla f \d\mu \right\|_2^2 \leq \Ent_\mu(f) \leq \frac{1}{2} \int \|\nabla \ln f\|_2^2 f \d\mu.
\end{equation}
We refer to the upper bound as the \emph{logarithmic Sobolev inequality (LSI)}, and to the lower bound as the \emph{reverse logarithmic Sobolev inequality (reverse LSI)}.
\end{proposition}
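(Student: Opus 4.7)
I would use the standard Ornstein--Uhlenbeck (OU) semigroup / entropy-dissipation framework. Let $P_t$ denote the OU semigroup with invariant measure $\mu=\mcN(0,I_d)$, i.e.\ $P_t g(x)=\E[g(e^{-t}x+\sqrt{1-e^{-2t}}Z)]$ with $Z\sim\mcN(0,I_d)$, whose generator is $Lg=\Delta g - x\cdot\nabla g$. The cornerstone of the argument is the de Bruijn-type identity
\begin{equation*}
\Ent_\mu(f) \;=\; \int_0^\infty \int \frac{\|\nabla P_t f\|_2^{\,2}}{P_t f}\,\d\mu\,\d t.
\end{equation*}
I would obtain this by differentiating $t\mapsto\int P_tf\,\ln P_tf\,\d\mu$, using $\int Lg\,\d\mu=0$ together with the OU integration-by-parts formula $\int (Lg)\,h\,\d\mu=-\int\nabla g\cdot\nabla h\,\d\mu$, which gives $\frac{\d}{\d t}\Ent_\mu(P_tf)=-\int \|\nabla P_tf\|_2^{\,2}/P_tf\,\d\mu$; then I integrate from $0$ to $\infty$ using $P_tf\to\int f\,\d\mu$ so that $\Ent_\mu(P_tf)\to 0$.

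The second ingredient is the OU commutation relation $\nabla P_t f = e^{-t}\, P_t \nabla f$ (immediate from the Mehler-kernel representation of $P_t$), which transforms the identity above into
\begin{equation*}
\Ent_\mu(f) \;=\; \int_0^\infty e^{-2t}\int \frac{\|P_t\nabla f\|_2^{\,2}}{P_t f}\,\d\mu\,\d t.
\end{equation*}
The two bounds of the proposition now follow by applying Cauchy--Schwarz to this integrand in the two natural ``directions.'' For the upper bound, I apply the \emph{pointwise} Cauchy--Schwarz inequality, writing $P_t\nabla f = P_t(\sqrt{f}\cdot\nabla f/\sqrt{f})$ to get $\|P_t\nabla f(x)\|_2^{\,2}\leq P_tf(x)\cdot P_t(\|\nabla f\|_2^{\,2}/f)(x)$ for each $x$. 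Integrating against $\mu$ and using its invariance under $P_t$ yields $\int \|P_t\nabla f\|_2^{\,2}/P_tf\,\d\mu\leq \int \|\nabla \ln f\|_2^{\,2}\,f\,\d\mu$, and $\int_0^\infty e^{-2t}\d t=1/2$ produces the LSI.

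For the reverse LSI, I apply instead a \emph{global} Cauchy--Schwarz: for any vector field $g$ and positive $h$, $(\int g_i\,\d\mu)^2\leq \int g_i^2/h\,\d\mu\cdot \int h\,\d\mu$, summed over coordinates, gives $\|\int g\,\d\mu\|_2^{\,2}\leq \int \|g\|_2^{\,2}/h\,\d\mu\cdot \int h\,\d\mu$. Taking $g=P_t\nabla f$ and $h=P_tf$, together with the invariance $\int P_t\nabla f\,\d\mu = \int\nabla f\,\d\mu$, yields $\int \|P_t\nabla f\|_2^{\,2}/P_tf\,\d\mu\geq \|\int\nabla f\,\d\mu\|_2^{\,2}/\int f\,\d\mu$. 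Inserting this into the identity and evaluating $\int_0^\infty e^{-2t}\d t=1/2$ delivers the reverse LSI lower bound.

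The main obstacle is technical rather than conceptual: the entropy-dissipation identity and the various integration-by-parts steps implicitly assume enough smoothness, decay, and a strict lower bound on $f$ to justify differentiating under the integral sign and controlling boundary terms at infinity. The clean route is to prove everything first for $f$ smooth, compactly supported perturbations of a positive constant (so that $f$ is bounded away from $0$ and $\infty$) and then pass to the limit for general smooth positive $f$ by replacing $f$ with $f+\varepsilon$ and letting $\varepsilon\downarrow 0$, using monotone/dominated convergence on both sides of each inequality. Once that truncation-and-approximation scaffold is in place, the algebraic core reduces to the two Cauchy--Schwarz applications above.
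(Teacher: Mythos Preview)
Your argument is correct. Both inequalities follow cleanly from the de Bruijn identity for the Ornstein--Uhlenbeck semigroup together with the commutation $\nabla P_t f=e^{-t}P_t\nabla f$, and your two Cauchy--Schwarz applications (pointwise for the upper bound, global for the lower bound) are the right moves.

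The paper, however, proceeds differently. Rather than the OU semigroup and the ergodic de Bruijn representation, it specializes the \emph{local} logarithmic Sobolev inequalities of Bakry--Gentil--Ledoux (valid under $\mathrm{CD}(0,\infty)$) to the \emph{heat} semigroup $\d X_t=\d W_t$, which has no invariant measure but whose transition law at time $t=1$ from $x_0=0$ is exactly $\mcN(0,I_d)$. Evaluating the local LSI and the reverse local LSI at $t=1$, $x_0=0$, with carr\'e du champ $\Gamma f=\tfrac12\|\nabla f\|_2^2$, produces both bounds in one stroke. What this buys is brevity: the paper treats the result as a corollary of a cited black-box theorem, and the same machinery is reused later for the dimensional LSI. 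What your approach buys is self-containment: you derive both bounds from first principles without invoking curvature--dimension conditions. It is also worth noting that the paper explicitly remarks that the \emph{local} formulation is essential for the reverse inequality, since letting $t\to\infty$ in the reverse local LSI yields a vacuous bound; your route avoids this issue entirely by bounding the full time-integrated Fisher information rather than a single-time quantity, which is a nice observation in its own right.
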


{The logarithmic Sobolev inequality was independently proposed by Stam~\cite{stamInequalitiesSatisfiedQuantities1959a} and Gross~\cite{Gross_1975}, though we are unaware of the seminal reference for the reverse inequality.} Both bounds are formulated for general Markov diffusion operators in \cite[Ch. 5]{Bakry2014}, which we apply in Appendix~\ref{sec:proofLSI} to obtain the above proposition.
Note the choice of $f = \d\pi / \d\mu$ in \eqref{eq:LSI} permits us to bound the divergence $\Dkl(\pi \,||\,\mu)$ following
\[
\frac{1}{2} \left\| \E_\pi\left[ \nabla \ln \left(\frac{\d\pi}{\d\mu}\right) \right]\right\|_2^2
\leq \Dkl(\pi \,||\,\mu)
\leq \frac{1}{2}\E_\pi\left[ \left\| \nabla \ln \left(\frac{\d\pi}{\d\mu}\right) \right\|_2^2 \right] \;.
\]
The next theorem further motivates the connection between dimension reduction for the KL divergence and the logarithmic Sobolev inequality, and we leave the proof to Appendix~\ref{sec:lowerCDRproof}. While this strategy was originally proposed in~\cite[Corr. 2.10]{ZCLSM22}, the formulation of the lower bound is a new contribution.

\begin{theorem}
\label{thm:CDR}
Consider the probability measure $\d\pi(x) \propto \ell(x)\d\mu(x)$ for some smooth $\ell:\R^d \to \R_+$ and standard Gaussian~$\mu=\mathcal{N}(0,I_d)$. Then, for any matrix~$U_r \in \R^{d \times r}$ with $r \leq d$ orthonormal columns, the measure $\pioptKLr(U_r)$, as in \eqref{eq:pioptKL}, satisfies
\begin{equation}
\label{eq:CDRbound}
\frac{1}{2}\| (I_d - U_rU_r^\top) m(\pi) \|_2^2  \leq
\Dkl(\pi \,||\, \pioptKLr(U_r)) \leq
\frac{1}{2}  \trace\lb  (I_d - U_rU_r^\top) H(\pi||\mu) \rb ,
\end{equation}
where $m(\pi)=\E_\pi[X]$ is the mean of $\pi$ and $H(\pi||\mu) = \E_{X \sim \pi}[\nabla \ln \ell(X) \nabla \ln \ell(X)^\top]$ is the Fisher information matrix of $\pi$ relative to $\mu$.
\end{theorem}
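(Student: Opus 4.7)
}
The plan is to reduce both bounds to Gaussian LSI / reverse LSI applied on the orthogonal complement of $\mathrm{range}(U_r)$, via the chain rule for KL divergence. First I would exploit the product structure of $\mu=\mcN(0,I_d)$. Complete $U_r$ with $U_\perp\in\R^{d\times(d-r)}$ so that $[U_r,U_\perp]$ is orthogonal, and change variables via $x = U_r y + U_\perp z$. Under $\mu$, the coordinates $Y = U_r^\top X$ and $Z=U_\perp^\top X$ are independent with laws $\mu_Y = \mcN(0,I_r)$ and $\mu_Z = \mcN(0,I_{d-r})$. By definition of $\ell^\KL_r$ in \eqref{eq:optKL}, the marginal laws of $Y$ under $\pi$ and under $\pioptKLr(U_r)$ coincide, and the conditional law of $X$ given $Y=y$ under $\pioptKLr(U_r)$ is exactly $\mu_Z$ (regarded in the $z$-fiber).

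With this decomposition in hand, the chain rule for KL gives
\begin{equation*}
\Dkl(\pi\,\|\,\pioptKLr(U_r)) \;=\; \E_{y\sim\pi_Y}\bigl[\, \Dkl(\pi_{X\mid Y=y}\,\|\,\mu_Z)\,\bigr],
\end{equation*}
where the conditional density is $g_y(z) = \ell(U_ry+U_\perp z)/\ell^\KL_r(y)$ (so $\int g_y\,\d\mu_Z=1$ and $\Ent_{\mu_Z}(g_y) = \Dkl(\pi_{X\mid Y=y}\,\|\,\mu_Z)$). The key calculation is that $\ln\ell^\KL_r(y)$ does not depend on $z$, so
\begin{equation*}
\nabla_z \ln g_y(z) \;=\; U_\perp^\top \nabla_x \ln \ell(U_r y + U_\perp z),\qquad \|\nabla_z \ln g_y\|_2^2 = \trace\bigl((I_d-U_rU_r^\top)\nabla\ln\ell\,\nabla\ln\ell^\top\bigr).
\end{equation*}

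For the \emph{upper bound}, I apply the LSI of Proposition \ref{thm:LSI} on $\R^{d-r}$ to $g_y$, obtaining
\begin{equation*}
\Dkl(\pi_{X\mid Y=y}\,\|\,\mu_Z) \;\leq\; \tfrac12 \E_{\pi_{X\mid Y=y}}\bigl[\,\trace\bigl((I_d-U_rU_r^\top)\nabla\ln\ell\,\nabla\ln\ell^\top\bigr)\bigr].
\end{equation*}
Taking expectation over $y\sim\pi_Y$ and using the tower property recognizes the right-hand side as $\tfrac12\trace((I_d-U_rU_r^\top)H(\pi\|\mu))$, which is the desired bound. For the \emph{lower bound}, I apply the reverse LSI to $g_y$: since $\int g_y\,\d\mu_Z = 1$, Gaussian integration by parts (Stein's identity) turns $\int \nabla_z g_y\,\d\mu_Z$ into $\int z\, g_y\,\d\mu_Z = \E_{\pi_{X\mid Y=y}}[Z]$. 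Thus
\begin{equation*}
\Dkl(\pi_{X\mid Y=y}\,\|\,\mu_Z) \;\geq\; \tfrac12 \bigl\|\E_{\pi_{X\mid Y=y}}[Z]\bigr\|_2^2.
\end{equation*}
Averaging over $y\sim\pi_Y$ and applying Jensen's inequality to the outer expectation yields $\tfrac12\|\E_\pi[Z]\|_2^2 = \tfrac12\|(I_d-U_rU_r^\top)m(\pi)\|_2^2$.

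The only place requiring real care is the reverse direction: one needs (i) the Stein identity to bring $m(\pi)$ out of the $\nabla f$ term in the reverse LSI (this is what singles out the mean rather than some other moment), and (ii) a clean justification that pulling the outer expectation and Jensen together is sharp enough to preserve the projection onto $U_r^\perp$. Aside from this, the argument is essentially bookkeeping around the conditional decomposition of $\mu$, which works cleanly precisely because $\mu$ is standard Gaussian and $U_r$ is orthonormal, so the perpendicular conditional is again a standard Gaussian to which Proposition \ref{thm:LSI} directly applies.
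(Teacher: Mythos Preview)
Your proposal is correct and follows essentially the same route as the paper: both arguments define the conditional density $g_y(z)=\ell(U_ry+U_\perp z)/\ell^\KL_r(y)$ on $\R^{d-r}$, apply the Gaussian (reverse) LSI of Proposition~\ref{thm:LSI} to $g_y$, integrate against the marginal law $\d\pi_r=\ell^\KL_r\,\d\mu_r$, and use Jensen together with Gaussian integration by parts to identify the projected mean. The only cosmetic difference is that you invoke Stein's identity at the conditional level before averaging, whereas the paper first averages $\int\nabla g_y\,\d\mu_\perp$ and applies the integration by parts $\int\nabla\ln\ell\,\d\pi=\E_\pi[X]$ at the end.
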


Minimizing the right-hand side of~\eqref{eq:CDRbound} corresponds to maximizing $U_r\mapsto\trace(U_r^\top H(\pi||\mu) U_r)$ and yields
the matrix~$U_r^*$ whose columns contain the $r$~leading eigenvectors of~$H(\pi||\mu)$, see \cite{Fan_1949}.
The approximation of $\pi$ by $\pioptKLr(U_r^*)$ is therefore furnished with the error certificates
\begin{equation}
\label{eq:CDRerror}
\Dkl(\pi \,||\, \pioptKLr(U_r^*))
\leq \frac{1}{2}\sum_{k=r+1}^d \lambda_k(H(\pi||\mu)),
\end{equation}
where $\lambda_k(A)$ denotes the $k$-th largest eigenvalue of~$A$.  While $U_r^*$ may not be the global minimizer of~\eqref{eq:lossKL}, this result \emph{certifies} that
its approximation error is no larger than $\frac{1}{2}\sum_{k=r+1}^d \lambda_k(H(\pi||\mu))$, which can be made arbitrarily small as $r \to d$.
When the spectrum of $H$ decays rapidly, it suffices to choose
rank~$r \ll d$ to obtain an accurate approximation.
The lower bound~\eqref{eq:CDRbound} provides a guarantee of the smallest achievable approximation error, which may also inform the practitioner's choice of~$r$.

\subsection{Improved bounds using dimensional inequalities}\label{sec:dimlogSob}

We now introduce a functional inequality which is uniformly sharper than the LSI for Gaussian measures.
We defer its proof to Appendix~\ref{sec:fun_inequalities}. In the following, we employ the notation $v^{\otimes2} = vv^T$.

\begin{proposition}[Dimensional Gaussian LSI]
\label{thm:dimLSI}
For standard Gaussian~$\mu=\mathcal{N}(0,I_d)$ on $\R^d$, and for any smooth function $f : \R^d \to \R_+$ such that $\int f \d\mu = 1$, we have the \emph{dimensional logarithmic Sobolev} inequality and the \emph{reverse dimensional logarithmic Sobolev} inequality
\begin{align}
\Ent_\mu \lb f \rb
&\leq {\frac{1}{2} \int \|x\|_2^2 f \d\mu - \frac{d}{2}
+\frac{1}{2}\ln \det\lb \int \lb \nabla\ln f -x \rb^{\otimes2}f \d\mu \rb,
}\label{eq:dimLSI} \\
\Ent_\mu(f) &\geq \frac{1}{2}\int \|x\|_2^2 f \d\mu - \frac{d}{2} - \frac{1}{2} \ln\det \lb \int x^{\otimes 2} \,f \d\mu - \lb \int x f \d\mu \rb^{\otimes 2} \rb.\label{eq:revdimLSI}
\end{align}
\end{proposition}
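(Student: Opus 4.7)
The plan is to reduce both inequalities to familiar facts about the differential entropy of $\nu$, where $\d\nu = f\,\d\mu$. Since $\int f\,\d\mu = 1$, $\nu$ is a probability measure and $\Ent_\mu(f) = \Dkl(\nu\,||\,\mu)$. Writing the Lebesgue density of $\nu$ as $f(x)(2\pi)^{-d/2}\exp(-\|x\|_2^2/2)$ and expanding $\ln(\d\nu/\d\mu)$ gives the bridge identity
\begin{equation*}
\Ent_\mu(f) \;=\; \frac{d}{2}\ln(2\pi) \;+\; \frac{1}{2}\int\|x\|_2^2\,f\,\d\mu \;-\; h(\nu),
\end{equation*}
where $h(\nu) = -\int\ln(\d\nu/\d x)\,\d\nu$. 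Since $\nabla\ln(\d\nu/\d x) = \nabla\ln f - x$, the matrix inside the determinant in \eqref{eq:dimLSI} is exactly the Fisher information matrix $J(\nu) := \int(\nabla\ln(\d\nu/\d x))^{\otimes 2}\,\d\nu$ of $\nu$ with respect to Lebesgue measure, while the matrix in \eqref{eq:revdimLSI} is exactly the covariance $C(\nu)$ of $\nu$. Both claimed bounds therefore follow from the identity above once $h(\nu)$ is sandwiched between the appropriate scalar-valued quantities.

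For the reverse inequality \eqref{eq:revdimLSI}, I would invoke the maximum-entropy property of the Gaussian. The non-negativity of $\Dkl(\nu\,||\,\mcN(m(\nu), C(\nu)))$, combined with $\E_\nu[(X-m(\nu))^\top C(\nu)^{-1}(X-m(\nu))] = d$, gives in one line
\begin{equation*}
h(\nu) \;\leq\; \tfrac{1}{2}\ln\det\lb 2\pi e\,C(\nu) \rb.
\end{equation*}
Substituting into the bridge identity and simplifying produces \eqref{eq:revdimLSI}.

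For the dimensional LSI \eqref{eq:dimLSI}, I would appeal to the multivariate \emph{isoperimetric inequality for entropy},
\begin{equation*}
h(\nu) \;\geq\; \tfrac{d}{2}\ln(2\pi e) \;-\; \tfrac{1}{2}\ln\det J(\nu),
\end{equation*}
which saturates on every Gaussian and is the sharp, dimension-aware counterpart to the classical Gaussian LSI (Dembo~\cite{Dembo_1990}). Substituting into the bridge identity and simplifying produces \eqref{eq:dimLSI}. This inequality is the main obstacle: the upper bound on $h(\nu)$ used above is a one-line consequence of $\Dkl \geq 0$, but the lower bound is genuinely nontrivial. I would establish it either via the de Bruijn identity combined with the entropy power inequality along the heat semigroup, or via a Gamma-calculus argument along the Ornstein--Uhlenbeck semigroup in the style of Bakry--Ledoux, differentiating $t\mapsto h(P_t^*\nu) + \tfrac{1}{2}\ln\det J(P_t^*\nu)$ and using the concavity of $\ln\det$ to close the dimensional correction; the latter path aligns more naturally with the semigroup framework already invoked for Proposition~\ref{thm:LSI}. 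Once these two scalar inequalities on $h(\nu)$ are in hand, Proposition~\ref{thm:dimLSI} is obtained by direct substitution, without further analysis.
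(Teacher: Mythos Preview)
Your reduction is correct and lands on exactly the same pair of Euclidean inequalities as the paper: the bridge identity is the paper's ``Euclidean form'' substitution, the bound $h(\nu)\leq \tfrac{1}{2}\ln\det(2\pi e\,C(\nu))$ is the paper's reverse dimensional Euclidean LSI~\eqref{eq:revdimEuclLSI}, and the isoperimetric entropy inequality $h(\nu)\geq \tfrac{d}{2}\ln(2\pi e)-\tfrac{1}{2}\ln\det J(\nu)$ is the paper's dimensional Euclidean LSI~\eqref{eq:dimEuclLSI}. The only genuine difference is how the forward inequality is established. The paper uses Dembo's bootstrap: start from the ordinary Euclidean LSI (a rewriting of Proposition~\ref{thm:LSI}), apply it to the pushforward $(A^{-1})_\#\nu$ for $A=A^\top\succ 0$, and optimize over $A$; the optimal $A^{-2}=J(\nu)$ then produces the $\ln\det$ term in one line. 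This is more economical than your proposed routes (de~Bruijn $+$ EPI, or a full $\Gamma$-calculus monotonicity argument), since it reuses Proposition~\ref{thm:LSI} as the only analytic input and makes the origin of the matrix-valued correction transparent. Your approaches would also work, but they bring in strictly more machinery than needed here.
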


{The Gaussian dimensional LSI was discovered independently by  Dembo~\cite{Dembo_1990} and Bakry and Ledoux~\cite[Prop. 2]{Bakry_Ledoux_2006}, though the former provided a stronger proof technique which results in Proposition~\ref{thm:dimLSI}; see Appendix~\ref{sec:fun_inequalities} for further discussions. Equations~\eqref{eq:dimLSI} and~\eqref{eq:revdimLSI} can also be obtained using the more general `local intrinsic dimensional logarithmic Sobolev inequalities' given by equations (29) and (30) in~\cite{Eskenazis_Shenfeld_2023} by setting $T=1$ and $x=0$.}
{Logarithmic Sobolev inequalities in the spirit of Proposition~\ref{thm:dimLSI} can also be found in \cite[Section 2.2]{Eskenazis_Shenfeld_2023} for general measures with homogeneous densities beyond the standard Gaussian.}

\begin{remark}
We are only aware of the following measures satisfying the dimensional LSI in the Euclidean geometry: Gaussian measures, push-forwards of Gaussian measures by functions with Lipschitz gradients, and perturbations of Gaussians measures in the sense of Holley-Stroock~\cite{holleyLogarithmicSobolevInequalities1987}.
Formulations of the dimensional logarithmic Sobolev inequalities exist for measures on other manifolds, but we refer the interested reader to, e.g., \cite[Thm 6.7.3]{Bakry2014} for further details.
\end{remark}

\begin{remark}
{The equality cases of the original LSI from Proposition~\ref{thm:LSI} are given by translations of Gaussian with identity covariance~\cite[\S2]{Bakry_Ledoux_2006}. However, Proposition~\ref{thm:dimLSI} is sharp for all Gaussians.}
\end{remark}

{The dimensional logarithmic Sobolev inequality~\eqref{eq:dimLSI} uniformly improves on the logarithmic Sobolev inequality in Proposition~\ref{thm:LSI} for all test functions~$f$ satisfying the assumptions of the theorem. This can be seen by applying the log-determinant upper bound $\ln\det(S) \leq \trace S - d$ for positive semi-definite $S \in \R^{d \times d}$. Specifically, choosing $S = \int (\nabla \ln f - x)^{\otimes 2} f \d\mu$ shows that
\[
\text{r.h.s.~of}~\eqref{eq:dimLSI} \leq \frac{1}{2}\int \|\nabla \ln f\|_2^2 \d\mu + \int \|x\|_2^2 f \d\mu - d - \int \langle \nabla f, x \rangle \d \mu \leq \frac{1}{2}\int \|\nabla \ln f\|_2^2 \d\mu  \;,
\]
where the second inequality stems from noting $\int f d \mu = 1$  and applying Stein's identity~\cite[Lemma 1]{steinEstimationMeanMultivariate1981} to obtain
\[
\int \langle \nabla f, x \rangle \d \mu = \sum_{i=1}^d \left( \int x_i^2 f \d\mu - \int f \d\mu \right) = \int \|x\|_2^2 f \d\mu - d \;.
\]
A similar exercise, except choosing $S = \int x^{\otimes 2} f \d\mu - (\int x f \d\mu)^{\otimes 2}$ instead, shows that the reverse dimensional logarithmic Sobolev inequality~\eqref{eq:revdimLSI} similarly improves on the reverse LSI in Proposition~\ref{thm:LSI} uniformly.}

These improvements allow us to derive improved majorants for the KL divergence between~$\pi$ and its approximation~$\pioptKLr(U_r)$, as shown by the following theorem, whose proof we leave to Appendix~\ref{sec:dimCDRproof}.

\begin{theorem}
\label{thm:dimCDR}
Consider the standard Gaussian measure~$\mu=\mathcal{N}(0,I_d)$ on $\R^d$. Then, for any matrix~$U_r \in \R^{d \times r}$ with $r \leq d$ orthonormal columns, the measure $\pioptKLr(U_r)$, as in \eqref{eq:pioptKL}, satisfies
\begin{equation}
\label{eq:dimCDRbound}
 \mcJ_\KL^\uparrow(U_r) \leq D_\KL(\pi \,||\, \pioptKLr(U_r)) \leq \mcJ_\KL^\downarrow(U_r),
\end{equation}
where
\begin{align}
&\mcJ_\KL^\downarrow(U_r) =  \frac{\trace( M(\pi)  ) -d + \ln \det \lb H(\pi) \rb}{2} - \frac{\trace( U_r^\top M(\pi) U_r )-r + \ln \det (U_r^\top H(\pi)^{-1} U_r )}{2} ,  \label{eq:dimCDR_up} \\
&\mcJ_\KL^\uparrow(U_r) =   \frac{\trace(M(\pi))-d- \ln\det(C(\pi))}{2} - \frac{\trace(U_r^\top M(\pi) U_r)-r -\ln\det (U_r^\top C(\pi)^{-1} U_r)}{2} .
\label{eq:dimCDR_down}
\end{align}
Here $H(\pi) = \E_{\pi}[\nabla\ln(\frac{\d\pi}{\d x})\nabla\ln(\frac{\d\pi}{\d x})^\top]$ denotes the Fisher information matrix of~$\pi$ and $M(\pi) = \E_{\pi}[ XX^\top]$ is the second moment matrix of~$\pi$, and $ C(\pi) = M(\pi) - \E_{\pi}[ X]\E_{\pi}[X]^\top $ is the covariance matrix of~$\pi$.
\end{theorem}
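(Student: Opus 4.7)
The plan is to reduce the KL divergence to an expectation of conditional KL divergences along the orthogonal complement of $U_r$, and then apply Proposition~\ref{thm:dimLSI} to the resulting $(d-r)$-dimensional conditional problems. Complete $U_r$ to an orthogonal matrix $[U_r,V] \in \R^{d\times d}$, with $V \in \R^{d\times(d-r)}$ spanning the orthogonal complement. Because $\ell_r^{\KL}$ is the conditional expectation in~\eqref{eq:optKL}, the conditional law of $X$ given $U_r^\top X = \theta$ under $\pioptKLr(U_r)$ coincides with that under $\mu$ (a standard Gaussian $\mathcal{N}(0,I_{d-r})$ in the $\eta = V^\top x$ coordinates), while the marginals of $U_r^\top X$ under $\pi$ and $\pioptKLr(U_r)$ agree by construction. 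The chain rule for KL therefore gives
\begin{equation*}
\Dkl(\pi \,||\, \pioptKLr(U_r)) = \int \Dkl(\pi_{V^\top X \mid U_r^\top X = \theta} \,||\, \mathcal{N}(0, I_{d-r}))\, \d\pi_{U_r^\top X}(\theta).
\end{equation*}

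Next, apply Proposition~\ref{thm:dimLSI} in $\R^{d-r}$ to each conditional density $h_\theta(\eta) \propto \pi(U_r\theta + V\eta)$. A chain-rule calculation yields $\nabla_\eta \ln h_\theta(\eta) - \eta = V^\top \nabla_x \ln\pi(U_r\theta + V\eta)$, so the Fisher-like matrix in the dimensional LSI is $V^\top H_\theta V$, where $H_\theta := \E_\pi[\nabla\ln\pi\,(\nabla\ln\pi)^\top \mid U_r^\top X = \theta]$. Integrating over $\theta$, the tower property yields $\E_\theta[H_\theta] = H(\pi)$, and concavity of $\ln\det$ on the positive-definite cone (Jensen's inequality) gives $\E_\theta\ln\det(V^\top H_\theta V) \leq \ln\det(V^\top H(\pi) V)$. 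Combined with $\E_\theta \E[\|V^\top X\|^2 \mid \theta] = \trace(V^\top M(\pi) V)$, this produces the clean majorant
\begin{equation*}
\Dkl(\pi \,||\, \pioptKLr(U_r)) \leq \frac{1}{2}\trace(V^\top M(\pi) V) - \frac{d-r}{2} + \frac{1}{2}\ln\det(V^\top H(\pi) V).
\end{equation*}

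The stated form of $\mcJ_\KL^\downarrow$ is recovered algebraically: orthogonality of $[U_r,V]$ gives $\trace(V^\top M V) = \trace M - \trace(U_r^\top M U_r)$, and the Schur complement identity $\det(V^\top H V) = \det H \cdot \det(U_r^\top H^{-1} U_r)$ rewrites the log-determinant. The minorant $\mcJ_\KL^\uparrow$ follows by the same template with the reverse dimensional LSI in place of the dimensional LSI: integrate over $\theta$, use convexity of $-\ln\det$ via Jensen to bound $-\E_\theta\ln\det\Cov_{\pi|\theta}(V^\top X) \geq -\ln\det\E_\theta\Cov_{\pi|\theta}(V^\top X)$, and then the law of total covariance $\E_\theta[\Cov_{\pi|\theta}(V^\top X)] \preceq V^\top C(\pi) V$ together with monotonicity of $\ln\det$ to replace the expected conditional covariance by $V^\top C(\pi) V$; the Schur identity applied to $C(\pi)$ then yields the claimed form. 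The main obstacle is tracking the directions of the Jensen inequalities (combined with the law of total covariance in the minorant case) so that the upper and lower bounds each hold in the intended direction --- this is routine bookkeeping, but doubly delicate for the minorant where two inequalities accumulate in the same sense, each mildly loosening the bound while preserving its validity.
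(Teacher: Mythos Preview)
Your proposal is correct and follows essentially the same route as the paper's proof: apply the dimensional LSI (Proposition~\ref{thm:dimLSI}) to the $(d-r)$-dimensional conditional problem, integrate against the marginal of $U_r^\top X$, and invoke Jensen's inequality for the concave map $S\mapsto\ln\det S$ to pass the expectation inside. The paper works with the test function $f(x_\perp)=\ell(U_r x_r + U_\perp x_\perp)/\widetilde\ell_r^{\KL}(x_r)$ rather than phrasing things via the chain rule for KL, but this is the same computation in different notation.

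One point worth highlighting: your treatment of the minorant is in fact more complete than the paper's. The paper dispatches the lower bound with ``obtained in a similar way,'' but as you correctly note, after Jensen one arrives at $-\tfrac12\ln\det\bigl(\E_\theta[\Cov_{\pi|\theta}(V^\top X)]\bigr)$, and an additional step --- the law of total covariance $\E_\theta[\Cov_{\pi|\theta}(V^\top X)] \preceq V^\top C(\pi) V$ combined with monotonicity of $\ln\det$ --- is needed to reach the stated form involving $V^\top C(\pi) V$. Both inequalities loosen the minorant in the same direction, so the chain $\Dkl \geq (\cdot) - \tfrac12\E_\theta\ln\det\Cov_\theta \geq (\cdot)-\tfrac12\ln\det(\E_\theta\Cov_\theta) \geq (\cdot)-\tfrac12\ln\det(V^\top C(\pi)V)$ is valid, exactly as you describe.
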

\begin{remark}
Let $U_\perp$ denote any orthonormal completion to $U_r$ such that $[ U_r , \;  U_\perp ]\in \mathbb{R}^{d \times d}$ is unitary. Using standard matrix determinant identities detailed in Appendix~\ref{sec:matrixdeterminant}, we have the equivalent representations
\begin{align}
\mcJ_\KL^\downarrow(U_r) &= \frac{ \trace( U_\perp^\top M(\pi) U_\perp ) -(d-r) + \ln\det ( U_\perp^\top H(\pi) U_\perp ) }{2} \label{eq:dimCDRperp_up} \\
\mcJ_\KL^\uparrow(U_r) &=  \frac{ \trace( U_\perp^\top M(\pi) U_\perp ) -(d-r) -  \ln\det ( U_\perp^\top C(\pi) U_\perp) }{2},
\label{eq:dimCDRperp_down}
\end{align}
which do not require computing matrix inverses to evaluate.
The above expressions resemble the \emph{log-determinant divergence}, or \emph{Burg divergence}, $D_\text{burg}(X,Y) = \trace(XY^{-1})-d-\ln\det(XY^{-1})$ between two symmetric positive definite matrices $X,Y\in\R^{d\times d}$, see \cite{kulisLearningLowrankKernel2006}, and it would be interesting to establish whether there is a deeper connection between these ideas.
\end{remark}

The inequalities in Theorem \ref{thm:dimCDR} improve on those in Theorem \ref{thm:CDR}
just as the dimensional LSI improves on the LSI,
meaning that uniformly over all orthonormal matrices~$U_r \in \R^{d \times r}$ we have
$$
 \frac{1}{2}\| (I_d - U_rU_r^\top) m(\pi) \|_2^2 \leq \mcJ_\KL^\uparrow(U_r)
 \qquad\text{and}\qquad
 \mcJ_\KL^\downarrow(U_r) \leq \frac{1}{2}  \trace\lb  (I_d - U_rU_r^\top) H(\pi||\mu) \rb \;.
$$
{For the majorant, this can be demonstrated by applying the log-determinant upper bound $\ln \det(S)  \leq \trace(S) - (d-r)$ with~$S = U_\perp^\top H(\pi) U_\perp \in \R^{(d-r)\times(d-r)}$ to  \eqref{eq:dimCDRperp_up}, showing that
\[
\text{r.h.s. of}~\eqref{eq:dimCDRperp_up} \leq \frac{1}{2} \trace(U_\perp^\top M(\pi) U_\perp) - (d-r) + \frac{1}{2}\trace(U_\perp^\top H(\pi) U_\perp) \;.
\]
We then note that the relative Fisher information matrix $H(\pi||\mu)$ with $\mu=\mathcal{N}(0,I_d)$ decomposes as
\begin{align}
 H(\pi||\mu) &\overset{\eqref{eq:defH}}{=} \E_\pi\left[ \left(\nabla\ln\left(\frac{\d\pi}{\d x}\right) + x\right)\left(\nabla\ln\left(\frac{\d\pi}{\d x}\right) + x\right)^\top \right] \nonumber\\
 &\overset{\eqref{eq:defHandM}}{=}H(\pi) +\left(\int \nabla\ln\left(\frac{\d\pi}{\d x}\right)  x^\top +  x\nabla\ln\left(\frac{\d\pi}{\d x}\right)^\top \d\pi \right) +M(\pi) \nonumber\\
 &=H(\pi) -2 I_d + M(\pi) \;. \label{eq:FIMidentity}
\end{align}
The last step follows from integration by parts since, writing $p(x) = \tfrac{\d\pi}{\d x}$, we have
\[
\int \frac{\partial \ln p(x)}{\partial x_i} x_j \, p(x) \d x = \int x_j \frac{\partial}{\partial x_i}p(x) \d x = -\int p(x) \frac{\partial}{\partial x_i} x_j \; \d x = -\delta_{ij}
\]
for any $1 \leq i,j \leq d$, where in the last equality we make use of the fact that $\pi$ is a probability measure. Substituting this identity for~$H(\pi\,||\,\mu)$ above then recovers the desired inequality. The improvement of the minorant~\eqref{eq:dimCDRperp_down} can be similarly shown by choosing $S = U_\perp^\top C(\pi) U_\perp$.
}

Theorem~\ref{thm:dimCDR} sharpens the error certificates from Theorem~\ref{thm:CDR}, possibly by an \emph{exponential} factor as seen from the log-determinant term.
Minimizing $\mcJ_\KL^\downarrow(\cdot)$ thus results in features with tighter certifiable approximation guarantees compared to minimizing the majorant in \eqref{eq:CDRbound}, which we show in Appendix~\ref{sec:firstorder}.
The following proposition also shows that the majorization~\eqref{eq:dimCDR_up} monotonically decreases over sequences of nested features $\textrm{span}(U_r) \subseteq \textrm{span}(U_{r+1})$. This implies $\min_{U_{r+1}} \mcJ_\KL^\downarrow(U_{r+1}) \leq \min_{U_r} \mcJ_\KL^\downarrow(U_r)$, and thus one always obtains features with improved error certificates when increasing the feature rank~$r$, as intuitively desired. We defer the proof to Appendix~\ref{sec:proof_dim_monotone}.
However, unlike the minimizers $\{U_1^*,U_2^*,\hdots\}$ of the LSI bound \eqref{eq:CDRbound} which satisfy $\textrm{span}(U_r^*) \subseteq \textrm{span}(U_{r+1}^*)$, the minimizers of $\mcJ_\KL^\downarrow(\cdot)$ are not necessarily nested.

\begin{proposition}
\label{thm:dim_monotone}
Let $\{u_k\}_{k=1}^d$ denote any orthonormal basis of $\R^d$ and consider the sequence $\{U_r = (u_1, \ldots, u_r) \in \R^{d \times r}\}_{1 \leq r \leq d}$.
Then, we have
$\mcJ_\KL^\downarrow(U_{r+1}) \leq \mcJ_\KL^\downarrow(U_r)$ for all $r<d$.
As a consequence, for all $r<d$ we have
\begin{equation}\label{eq:dim_monotone}
 \min_{\substack{U_{r+1} \in \R^{d \times (r+1)} \\ U_{r+1}^\top U_{r+1} = I_{r+1}}} \mcJ_\KL^\downarrow(U_{r+1})
 \leq
 \min_{\substack{U_r \in \R^{d \times r} \\ U_r^\top U_r = I_r}} \mcJ_\KL^\downarrow(U_r) .
\end{equation}

\end{proposition}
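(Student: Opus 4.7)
My plan is to reduce the one-step inequality $\mcJ_\KL^\downarrow(U_{r+1}) \leq \mcJ_\KL^\downarrow(U_r)$ to a scalar inequality via Schur complements, and then establish the latter by combining the one-dimensional dimensional LSI with a Fisher-information projection argument. The consequence~\eqref{eq:dim_monotone} follows immediately: given any orthonormal $U_r$, extending it to $U_{r+1}$ by any unit vector orthogonal to its columns yields $\mcJ_\KL^\downarrow(U_{r+1}) \leq \mcJ_\KL^\downarrow(U_r)$, so $\min_{U'_{r+1}} \mcJ_\KL^\downarrow(U'_{r+1}) \leq \mcJ_\KL^\downarrow(U_r)$ for every $U_r$, and infimizing over $U_r$ gives~\eqref{eq:dim_monotone}.

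Using the $U_\perp$ representation~\eqref{eq:dimCDRperp_up} with compatible orthogonal completions $U_\perp^{(r)} = [u_{r+1}, U_\perp^{(r+1)}]$, the Schur complement formula applied to $U_\perp^{(r)\top} H(\pi) U_\perp^{(r)}$ about its top-left entry yields
\begin{equation*}
 2\bigl[\mcJ_\KL^\downarrow(U_r) - \mcJ_\KL^\downarrow(U_{r+1})\bigr] \;=\; u_{r+1}^\top M(\pi) u_{r+1} - 1 + \ln h,
\end{equation*}
where $h > 0$ is the Schur complement of $U_\perp^{(r+1)\top} H(\pi) U_\perp^{(r+1)}$ in $U_\perp^{(r)\top} H(\pi) U_\perp^{(r)}$. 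Setting $v = u_{r+1}$, the variational characterization of the Schur complement gives $h = \min\{\, u^\top H(\pi) u : u \in \mathrm{span}(U_\perp^{(r)}),\; u^\top v = 1\,\}$. The task thus reduces to proving the scalar inequality $v^\top M(\pi) v + \ln h \geq 1$.

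Let $\pi_v$ denote the one-dimensional marginal of $\pi$ along $v$. Applying Proposition~\ref{thm:dimLSI} in dimension one to $\d\pi_v/\d\mu_0$ with $\mu_0 = \mathcal{N}(0,1)$ and using $\Dkl(\pi_v\|\mu_0) \geq 0$ gives $v^\top M(\pi) v + \ln H(\pi_v) \geq 1$, so it suffices to prove $H(\pi_v) \leq h$. For this I would use the integration-by-parts identity
\begin{equation*}
 \E_\pi\bigl[(u^\top \nabla \ln \pi)\, g(v^\top X)\bigr] \;=\; (u^\top v)\, \E_\pi\bigl[(\ln \pi_v)'(v^\top X)\, g(v^\top X)\bigr],
\end{equation*}
valid for all $u \in \R^d$ and smooth test functions $g$, which characterizes the $L^2(\pi)$-projection of the directional score $u^\top \nabla\ln\pi$ onto $\sigma(v^\top X)$ as $(u^\top v)(\ln \pi_v)'(v^\top X)$. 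Pythagoras in $L^2(\pi)$ then yields $u^\top H(\pi) u \geq (u^\top v)^2\, H(\pi_v)$; choosing $u \in \mathrm{span}(U_\perp^{(r)})$ with $u^\top v = 1$ gives $u^\top H(\pi) u \geq H(\pi_v)$, and minimizing over such $u$ produces $h \geq H(\pi_v)$. Combined with the one-dimensional dim LSI bound, this yields $v^\top M(\pi) v + \ln h \geq v^\top M(\pi) v + \ln H(\pi_v) \geq 1$.

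The main obstacle is the intermediate bound $H(\pi_v) \leq h$, which bridges a constrained Fisher-information quantity ($h$, the smallest score variance in directions lying in $\mathrm{span}(U_\perp^{(r)})$ with unit projection onto $v$) and the Fisher information of a genuine one-dimensional marginal ($H(\pi_v)$). The bridge is provided by the integration-by-parts identity above, which functions as a data-processing inequality restricted to translations along $v$; the remainder of the proof consists of the Schur complement expansion and a single application of the dimensional LSI in one dimension.
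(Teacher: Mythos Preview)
Your argument is correct and takes a genuinely different route from the paper's. Both proofs reduce the claim to showing $v^\top M(\pi)v - 1 + \ln h \geq 0$, where $v=u_{r+1}$ and $h$ is the Schur complement of $U_{\perp,r+1}^\top H(\pi) U_{\perp,r+1}$ in $U_{\perp,r}^\top H(\pi) U_{\perp,r}$. The paper first invokes $\mcJ^\downarrow_\KL(V)\geq 0$ for the $(d{-}1)$-column frame $V$ omitting $u_{r+1}$, obtaining $v^\top M(\pi)v+\ln(v^\top H(\pi)v)\geq 1$, and then appeals to a block log-determinant lemma asserting $\ln(v^\top H(\pi)v)+\ln\det(U_{\perp,r+1}^\top H(\pi)U_{\perp,r+1})\leq\ln\det(U_{\perp,r}^\top H(\pi)U_{\perp,r})$. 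But this last inequality is Fischer's inequality in the \emph{reverse} direction and is false in general (equivalently it asserts $h\geq v^\top H(\pi)v$, whereas Schur complements satisfy $h\leq v^\top H(\pi)v$); the lemma's proof in the paper contains a matrix-identity slip that obscures this. Your approach avoids the pitfall by supplying a \emph{sharper} scalar input: the one-dimensional dimensional LSI applied to the marginal $\pi_v$ gives $v^\top M(\pi)v+\ln H(\pi_v)\geq 1$, and your score-projection identity yields the data-processing bound $H(\pi_v)\leq u^\top H(\pi)u$ for every $u$ with $u^\top v=1$, whence $H(\pi_v)\leq h$. In short, the paper tries to bound $h$ from below by the \emph{larger} quantity $v^\top H(\pi)v$, which cannot work; you bound it by the \emph{smaller} marginal Fisher information $H(\pi_v)$, which is exactly what is needed and closes the argument cleanly.
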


There are additional numerical considerations to minimizing $\mcJ_\KL^\downarrow(\cdot)$, however.
When viewed as a function over Euclidean vectors in~$\R^{d \times r}$, this objective is non-convex due to the ortho-normality constraint $U_r^\top U_r = I_r$.
Alternatively, when considered as optimization over
the Grassmann manifold~$\mathrm{Gr}(d,r)$ --- the manifold of $r$ dimensional subspaces equipped with the canonical Euclidean metric~\cite{edelmanGeometryAlgorithmsOrthogonality1998} --- it is also impossible for this function to be globally geodesically convex, as such smooth functions cannot exist on compact manifolds~\cite[Corr 11.10]{Boumal_2023}.
Both observations suggest that (Riemannian) gradient descent may stagnate at local minima.
Despite this, our numerical experiments in Pymanopt~\cite{pymanopt} --- with structured $M(\pi)$ and $H(\pi)$ arising from statistical problems, or more generally with random $M(\pi) \succ 0$ and $H(\pi) \succ 0$ --- suggest that Riemannian gradient descent~\cite{edelmanGeometryAlgorithmsOrthogonality1998} converges to the same solution for all initial conditions. This leads us to the following.

\begin{conjecture}
\label{conjecture:benign}

The function $\bar{\mcJ}_\KL^\downarrow:\mathrm{Gr}(d,r)\rightarrow\R$ defined by $\bar{\mcJ}_\KL^\downarrow( \mathrm{range}\{U_r\} ) = \mcJ_\KL^\downarrow(U_r) $ as in \eqref{eq:dimCDR_up} for all orthonormal matrix~$U_r \in \R^{d \times r}$, is benignly non-convex on the Grassmann manifold $\mathrm{Gr}(d,r)$.
\end{conjecture}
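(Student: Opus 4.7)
My plan is to prove Conjecture~\ref{conjecture:benign} by establishing the \emph{strict saddle property}: every Riemannian critical point of $\bar{\mcJ}_\KL^\downarrow$ on $\mathrm{Gr}(d,r)$ either attains the global minimum or has a Riemannian Hessian with a strictly negative eigenvalue. Since $\mathrm{Gr}(d,r)$ is compact, this rules out spurious local minima and matches the numerical observations reported in the paper. I would work in the complementary formulation \eqref{eq:dimCDRperp_up}: up to an additive constant independent of the subspace, minimizing $\mcJ_\KL^\downarrow$ is equivalent to minimizing
\[
 g(V) = \trace(V^\top M V) + \ln\det(V^\top H V), \qquad V \in \R^{d\times(d-r)},\ V^\top V = I_{d-r},
\]
with $M = M(\pi)\succ 0$ and $H = H(\pi)\succ 0$, over $\mathrm{Gr}(d,d-r)$. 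The invariance of $g$ under $V \mapsto VQ$ for orthogonal $Q$ makes it well-defined on the Grassmannian and exhibits it as a sum of two classical, individually benignly non-convex functionals: the trace is a Rayleigh-type functional minimized at the $d-r$ smallest eigenvectors of $M$, and $\ln\det(V^\top H V)$ is likewise minimized at the $d-r$ smallest eigenvectors of $H$. The whole difficulty therefore lies in controlling their interaction.

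Next I would compute the Riemannian gradient along horizontal tangent vectors $\Delta$ with $V^\top \Delta = 0$, obtaining the critical-point equation
\[
(I - VV^\top)\bigl( MV + HV(V^\top HV)^{-1}\bigr) = 0,
\]
an implicit, nonlinear generalized eigenvalue condition stating that $\mathrm{range}(V)$ is an invariant subspace of the data-dependent operator $M + HV(V^\top HV)^{-1}V^\top$. I would classify the critical subspaces through this invariance, identify the global minimizer, and at every other critical subspace $S$ construct an explicit descent direction via a Givens-style rotation: for $v = Ve_i \in S$ and unit $w \in S^\perp$, the family
$V(\theta) = V + (\cos\theta - 1)v e_i^\top + \sin\theta\, w e_i^\top$
preserves orthogonality, and Taylor-expanding both pieces of $g(V(\theta))$ to second order (using a Schur-complement identity for $\det(V(\theta)^\top H V(\theta))$) produces a quadratic form in the swap direction. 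An ``exchange lemma'' would then assert that whenever $S$ is not a global minimum, some choice of $(v, w)$ renders this second derivative strictly negative, furnishing the required negative Hessian direction.

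The principal obstacle is the \emph{spectral incompatibility} between the two terms: the swap that decreases $\trace(V^\top MV)$ is dictated by the eigendecomposition of $M$, while the one that decreases $\ln\det(V^\top HV)$ is dictated by $H$, and these bases disagree in general. A single elementary swap may therefore improve one term at the expense of the other, so the descent direction at a non-global critical subspace is likely a nontrivial linear combination of elementary swaps, most naturally indexed by the generalized eigenvectors of the pencil $(M, H)$. A complementary strategy would be a homotopy argument, deforming $g_t(V) = \trace(V^\top M V) + t \ln\det(V^\top H V)$ from $t=0$ (a pure Rayleigh quotient, benignly non-convex) to $t=1$, and invoking Morse-theoretic continuity on the compact Grassmannian to transfer the benign structure --- but this would require ruling out saddle-minimum bifurcations along the path, which is itself delicate, since the critical set of $g_t$ jumps discontinuously when $M$ and $tH$ possess coincidental spectral alignments. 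Either way, the essential technical difficulty is to certify that the coupled Hessian of $g$ is indefinite at every non-global critical subspace, and I expect this is precisely the step at which the conjecture currently resists proof.
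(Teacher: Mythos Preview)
The paper provides no proof of this statement: Conjecture~\ref{conjecture:benign} is explicitly left open. The authors supply only numerical evidence from Riemannian gradient descent, together with the first-order critical-point analysis in Appendix~\ref{sec:firstorder}, and they remark that known proof techniques for benign non-convexity on the Grassmannian require the global minimizer in closed form, which is unavailable here.

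Your proposal is not a proof either, and you correctly flag this yourself. Your critical-point equation $(I-VV^\top)\bigl(MV + HV(V^\top HV)^{-1}\bigr)=0$ is the complementary ($U_\perp$) version of the paper's conditions \eqref{eq:lagrangefirst}--\eqref{eq:fixedpoint2}; the paper's observation in Appendix~\ref{sec:firstorder} that neither the eigenvectors of $M(\pi)$ nor those of $H(\pi\|\mu)$ generically satisfy these conditions is precisely your ``spectral incompatibility'' obstruction. The Givens-swap descent construction and the homotopy deformation $g_t$ are reasonable research directions, but the exchange lemma you would need --- indefiniteness of the coupled Hessian at every non-global critical subspace --- is exactly the step you leave unproved, and for the homotopy route you offer no mechanism to exclude saddle--minimum bifurcations along the path. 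In short, your write-up is an accurate diagnosis of why the conjecture is hard, fully consistent with the paper's own assessment, but it does not close the gap; there is nothing in the paper to compare it against because the authors do not claim a proof.
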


Benign non-convexity refers to non-convex functions for which every local minimum is a global minimum; see, e.g., \cite{sunWhenAreNonconvex2016,benignnonconvexitylist} for a curated list of examples. Geometrically, all saddle points, if any, must therefore have at least one escape direction, providing a mechanism for first-order descent methods to reach a global minimizer almost surely.
While there has been significant interest in the optimization community on discovering such functions, to our knowledge the proof techniques for demonstrating benign non-convexity on the Grassmannian require knowing the global minima in closed form; see, e.g., \cite{ahnRiemannianPerspectiveMatrix2021, alimisisGeodesicConvexitySymmetric2023}. We are unable to prove such a characterization, and we can only derive necessary and sufficient conditions for first order critical points in Appendix~\ref{sec:firstorder}.

We now revisit the central question concerning the choice of functional inequality for dimension reduction.
Evidently, the use of the dimensional LSI improves upon the LSI: Theorem~\ref{thm:dimCDR} shows that sharper error certificates are obtained which
are also computationally straightforward to minimize, assuming Conjecture~\ref{conjecture:benign} holds.
In fact, the following proposition suggests that this is, in some sense, the \emph{optimal} choice of functional inequality for dimension reduction under the KL divergence. Specifically, minimizing the majorant in~\eqref{eq:dimCDR_up} is equivalent to minimizing the KL divergence when the target is Gaussian, even though the dimensional LSI was not originally derived for the purpose of dimension reduction for Gaussian measures. Accordingly, Theorem~\ref{thm:dimCDR} can be interpreted as the extension of dimension reduction towards non-Gaussian distributions.

\begin{proposition}
\label{thm:dimCDRexactgauss}
For any non-degenerate Gaussian measure~$\pi$, we have the equality $\mcJ_\KL^\downarrow(U_r) = \Dkl(\pi \,||\, \pioptKLr(U_r))$ for all matrices $U_r \in \R^{d \times r}$ with orthonormal columns.
\end{proposition}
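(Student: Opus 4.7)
The plan is to exploit the fact that when $\pi = \mathcal{N}(m,\Sigma)$ is a non-singular Gaussian (so that $H(\pi) = \Sigma^{-1}$, $M(\pi) = \Sigma + mm^\top$, and $C(\pi) = \Sigma$), the optimal approximation $\pioptKLr(U_r)$ is itself Gaussian and can be written in closed form, after which both sides of the claimed identity reduce to explicit expressions that match via a Schur-complement determinant identity. Throughout I would work with the $U_\perp$ representation \eqref{eq:dimCDRperp_up} of $\mcJ_\KL^\downarrow$, since it avoids inverses and cleanly separates terms associated with the complementary subspace.

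First I would identify $\pioptKLr(U_r)$ explicitly. By \eqref{eq:optKL}, the profile function $\ell^{\KL}_r(\theta)$ is the ratio of marginal densities of $U_r^\top X$ under $\pi$ and under $\mu$; both marginals are Gaussian, so $\ell^{\KL}_r(\theta)/\mu(\cdot)$ multiplied by $\mu$ yields a Gaussian density. Completing the square and using the block decomposition in the orthonormal basis $[U_r,U_\perp]$, I would show
\begin{equation*}
\pioptKLr(U_r) = \mathcal{N}(U_r U_r^\top m, \, \widetilde\Sigma), \qquad \widetilde\Sigma = U_r (U_r^\top \Sigma U_r) U_r^\top + U_\perp U_\perp^\top,
\end{equation*}
so that the precision matrix is $\widetilde\Sigma^{-1} = U_r (U_r^\top \Sigma U_r)^{-1} U_r^\top + U_\perp U_\perp^\top$. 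This is the Gaussian measure on $\R^d$ that matches the $U_r^\top$-marginal of $\pi$ but whose conditional given $U_r^\top X$ agrees with that of $\mu=\mathcal{N}(0,I_d)$.

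Next I would apply the standard Gaussian-to-Gaussian KL formula to $\Dkl(\pi \,\|\, \pioptKLr(U_r))$. Using the block structure of $\widetilde\Sigma^{-1}$, the quadratic term collapses to $\|U_\perp^\top m\|_2^2$, the trace term yields $r + \trace(U_\perp^\top \Sigma U_\perp)$, and $\ln\det\widetilde\Sigma = \ln\det(U_r^\top \Sigma U_r)$. Collecting these,
\begin{equation*}
\Dkl(\pi \,\|\, \pioptKLr(U_r)) = \tfrac{1}{2}\bigl[\trace(U_\perp^\top \Sigma U_\perp) + \|U_\perp^\top m\|_2^2 -(d-r) + \ln\det(U_r^\top \Sigma U_r) - \ln\det \Sigma\bigr].
\end{equation*}
On the other side, substituting the Gaussian values of $H(\pi)$ and $M(\pi)$ into \eqref{eq:dimCDRperp_up} gives
\begin{equation*}
\mcJ_\KL^\downarrow(U_r) = \tfrac{1}{2}\bigl[\trace(U_\perp^\top \Sigma U_\perp) + \|U_\perp^\top m\|_2^2 -(d-r) + \ln\det(U_\perp^\top \Sigma^{-1} U_\perp)\bigr].
\end{equation*}

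The last step, and really the only nontrivial one, is the matrix identity
\begin{equation*}
\ln\det(U_\perp^\top \Sigma^{-1} U_\perp) = \ln\det(U_r^\top \Sigma U_r) - \ln\det\Sigma.
\end{equation*}
I would prove this using the Schur complement: writing $\Sigma$ in the basis $[U_r,U_\perp]$ with blocks $A = U_r^\top\Sigma U_r$, $B = U_r^\top\Sigma U_\perp$, $D = U_\perp^\top\Sigma U_\perp$, standard block inversion yields $U_\perp^\top \Sigma^{-1} U_\perp = (D - B^\top A^{-1} B)^{-1}$, and the block determinant formula gives $\det\Sigma = \det A \cdot \det(D - B^\top A^{-1} B)$, so taking logs produces the claimed identity. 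This fixes the expression for $\mcJ_\KL^\downarrow(U_r)$ to coincide exactly with that of $\Dkl(\pi\,\|\,\pioptKLr(U_r))$, completing the proof. The main obstacle is purely bookkeeping — keeping track of the rotation to the $[U_r,U_\perp]$ basis in both the Gaussian KL formula and the determinant identity — rather than any conceptual difficulty.
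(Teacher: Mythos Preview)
Your proposal is correct and follows essentially the same route as the paper: identify $\pioptKLr(U_r)$ as the Gaussian with mean $U_rU_r^\top m$ and block-diagonal covariance $U\,\mathrm{diag}(U_r^\top\Sigma U_r,\,I_{d-r})\,U^\top$, compute the Gaussian-to-Gaussian KL directly, and match it against $\mcJ_\KL^\downarrow$ using $H(\pi)=\Sigma^{-1}$ and $M(\pi)=\Sigma+mm^\top$. The only cosmetic difference is that the determinant identity $\det(U_\perp^\top\Sigma^{-1}U_\perp)=\det(U_r^\top\Sigma U_r)/\det\Sigma$, which you derive via the Schur complement, is obtained in the paper (Appendix~\ref{sec:matrixdeterminant}) via repeated applications of Sylvester's determinant theorem; the two arguments are equivalent.
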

\begin{proof}
~Let $\pi=\mathcal{N}(m,C)$ be a Gaussian measure with mean~$m$ and covariance~$C\succ 0$. For any orthonormal matrix $U = [U_r, \,\, U_\perp]$ the measure $\pioptKLr(U_r)$ defined as in~\eqref{eq:pioptKL} is also Gaussian, but with mean~$U_rU_r^\top m$ and covariance
$U \mathrm{diag}( U_r^\top C U_r , \, I_{d-r}) U^\top$. {Recall the KL divergence between two Gaussian measures is given by
\[
\Dkl(\mathcal{N}(\mu_1, \Sigma_1) \,||\, \mathcal{N}(\mu_2, \Sigma_2)) =  \frac{1}{2} \left(\ln\det \Sigma_2 - \ln\det\Sigma_1 -d + \trace(\Sigma_2^{-1} \Sigma_1) +  (\mu_2-\mu_1)^\top \Sigma_2^{-1} (\mu_2-\mu_1) \right) \;.
\]
Noting
\[
\trace \left(\begin{pmatrix} (U_r^\top C U_r)^{-1} & \\ & I_{d-r}\end{pmatrix} U^\top C U \right) = r + \trace( U_\perp^\top C U_\perp) \;,
\]
direct computation then shows that
\[
\Dkl(\pi\,||\,\widetilde\pi_r) =  \frac{1}{2}( \ln\det(U_r^\top C U_r) - \ln\det C -(d-r) + \trace U_\perp^\top (C + mm^\top) U_\perp).
\]
Combining this with the observation $H(\pi) = \mathbb{E}_\pi[C^{-1}(X-m)(X-m)^\top C^{-1}] = C^{-1}$ and the standard identity $C = \E_{X\sim \pi}[XX^\top] - mm^\top$ demonstrates the equivalence to~\eqref{eq:dimCDR_up}.
}
\end{proof}

\subsubsection{Numerical example: Linear Gaussian inverse problems}

We consider linear Gaussian inverse problems in this section, and leave applications to a broader class of Bayesian inverse problems to~\S\ref{sec:bayesian}. We choose the reference measure~$\mu$ to be the prior distribution of the model parameters~$X$, assumed to be standard Gaussian in $\R^{d}$, and we assume data generated by the model $Y = AX + \epsilon$, where $A \in \R^{n_y \times d}$ and $\epsilon \sim \mcN(0, I_d)$. The posterior measure of $X|Y=y$ is our target measure~$\pi$, given by a Gaussian with mean~$m(\pi) = (I_{d} + A^\top A)^{-1}A^\top y$ and covariance~$C(\pi) = (I_{d} + A^\top A)^{-1}$. For any feature $U_r \in \R^{d \times r}$, the approximation $\pioptKLr(U_r)$ is also Gaussian with mean $U_rU_r^\top m(\pi)$ and covariance~$I_{d} + U_rU_r^\top(C(\pi) - I_{d})U_rU_r^\top$.

{Under the assumption $\mu_{X \mid Y} = \E_{\pioptKLr}[X]$, or equivalently, $(I_d - U_rU_r^\top) m(\pi) = 0$, Cui et al.~\cite[Lemma 2.2]{Cui_Martin_Marzouk_Solonen_Spantini_2014} show that the $r$ leading eigenvectors of the relative Fisher information matrix~$H(\pi\,||\,\mu)$ are globally optimal for~\eqref{eq:loss} with the KL divergence. However, note that the matching mean condition, which depends on the observed data~$Y$, almost surely cannot be satisfied. Hence, in general these features are not provably optimal for linear Gaussian inverse problems.}

Instead, we claim that globally optimal features can be determined by minimizing $U_r \mapsto \mcJ^{\downarrow}(U_r)$, assuming the benign convexity conjecture. We evaluate the stability of this objective when the Fisher information matrix and the second moment matrix are empirically estimated from samples. While we use standard Monte Carlo for $\widehat{H}(\pi || \mu)$ and $\widehat{M}(\pi)$, we use the estimator $\widehat H(\pi) := \widehat H(\pi || \mu) - \widehat M(\pi) + 2I_d$ following~\eqref{eq:FIMidentity} rather than the Monte Carlo estimate of $\E_{\pi}[\nabla \ln \pi \; \nabla \ln \pi^\top]$ as we observed this produced more stable numerical results.

Figure~\ref{fig:lingaussKL} compares the performance of the estimator for a linear Gaussian inverse problem {with $d=50$ unknowns, and $n_y = 50$ observations}. The solid black curve depicts the result of minimizing~\eqref{eq:dimCDR_up} with analytical~$H(\pi)$ and $M(\pi)$, whereas the solid gray curve depicts the loss corresponding to~\eqref{eq:CDRerror}. The dashed lines show the result of minimizing~\eqref{eq:dimCDR_up} using empirically estimated matrices of sample size~$n_s$ --- observe that they underestimate the true error due to overfitting. Instead, when evaluated on a `testing' set of $\widehat{H}(\pi)$ and $\widehat M(\pi)$ computed with $200$ samples, the solid coloured curves show that the estimated $\widehat U_r^\downarrow$ are close to the global minimizer.

\begin{figure}[!h]
\centering\includegraphics[width=\textwidth]{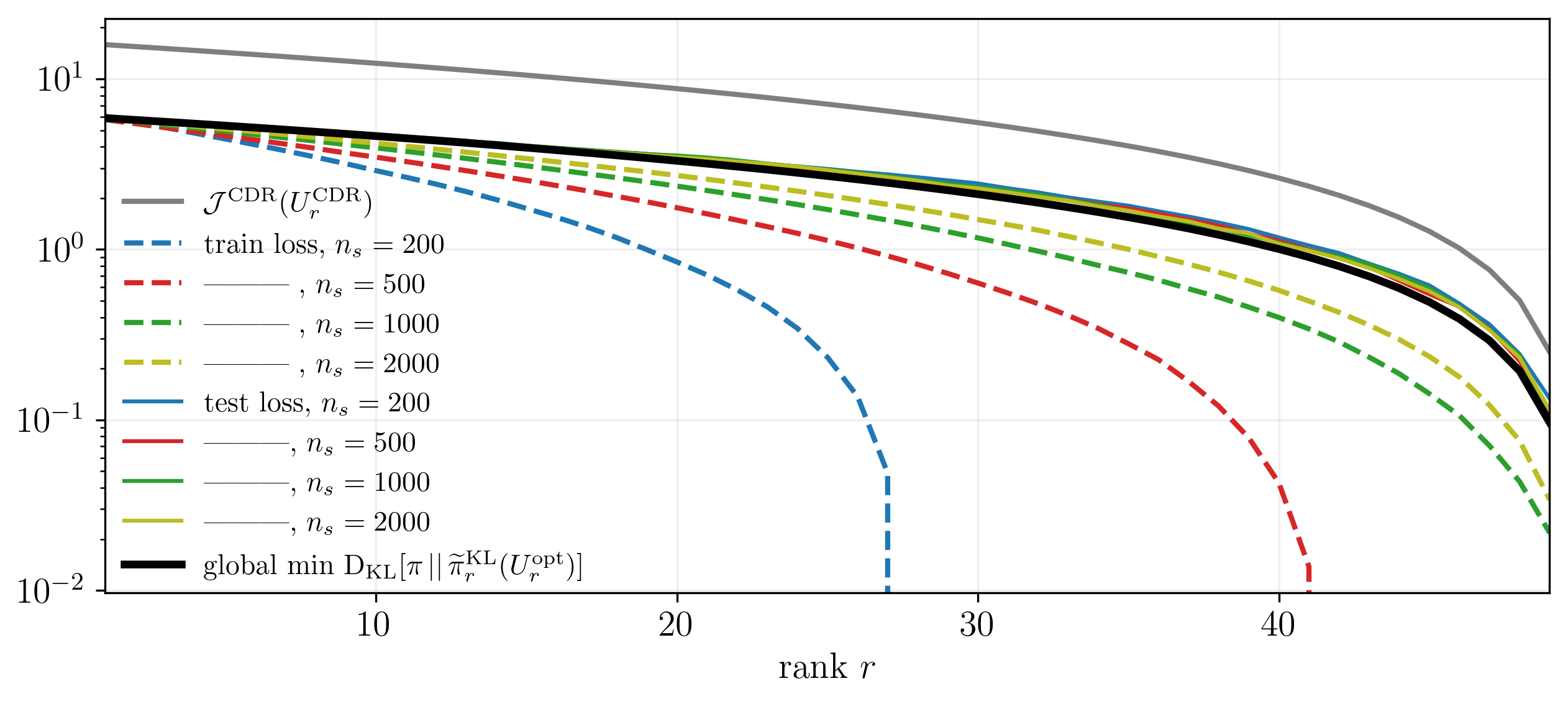}
\caption{Comparison of approximation error certificates for a Gaussian target measure of size $d = n_y = 50$. The globally optimal features~$U_r^\opt$ are obtained by minimizing~\eqref{eq:dimCDR_up}, and their achieved KL error is shown in black. The certificate obtained from the LSI majorization~\eqref{eq:CDRbound} is shown in gray. The minimizers of the empirically estimated dimensional LSI majorant~\eqref{eq:dimCDR_up} is shown by the coloured curves, where solid lines correspond to testing error, and dashed lines correspond to training error.}
\label{fig:lingaussKL}
\end{figure}

\subsubsection{Numerical example: Rosenbrock}
\label{sec:rosenbrock1}

Consider the two-dimensional Rosenbrock distribution $\pi = T_\#\mu$ with push-forward $T(z_1, z_2) = [z_1-0.5,~(z_1-0.5)^2 + \sqrt{0.2}z_2]$ and standard Gaussian~$\mu$. Figure~\ref{fig:bananapiopt} shows the probability density of~$\pi$, alongside its approximations~$\pioptKLr(U_1^*)$ with feature $U_1^*$, obtained by minimizing the majorant~\eqref{eq:CDRbound}, and~$\pioptKLr(U_1^\downarrow)$ with feature $U_1^\downarrow$ obtained by minimizing the majorant~\eqref{eq:dimCDR_up}.
The computation of each approximation via~\eqref{eq:pioptKL} uses Gauss-Hermite quadratures.
The relative Fisher information matrix~$H(\pi||\mu)$ and moment matrix~$M(\pi)$ are similarly computed with quadrature, yielding the Fisher information matrix $H(\pi)$ via~\eqref{eq:FIMidentity}.

\begin{figure}[!h]
\centering\includegraphics[width=0.95\textwidth]{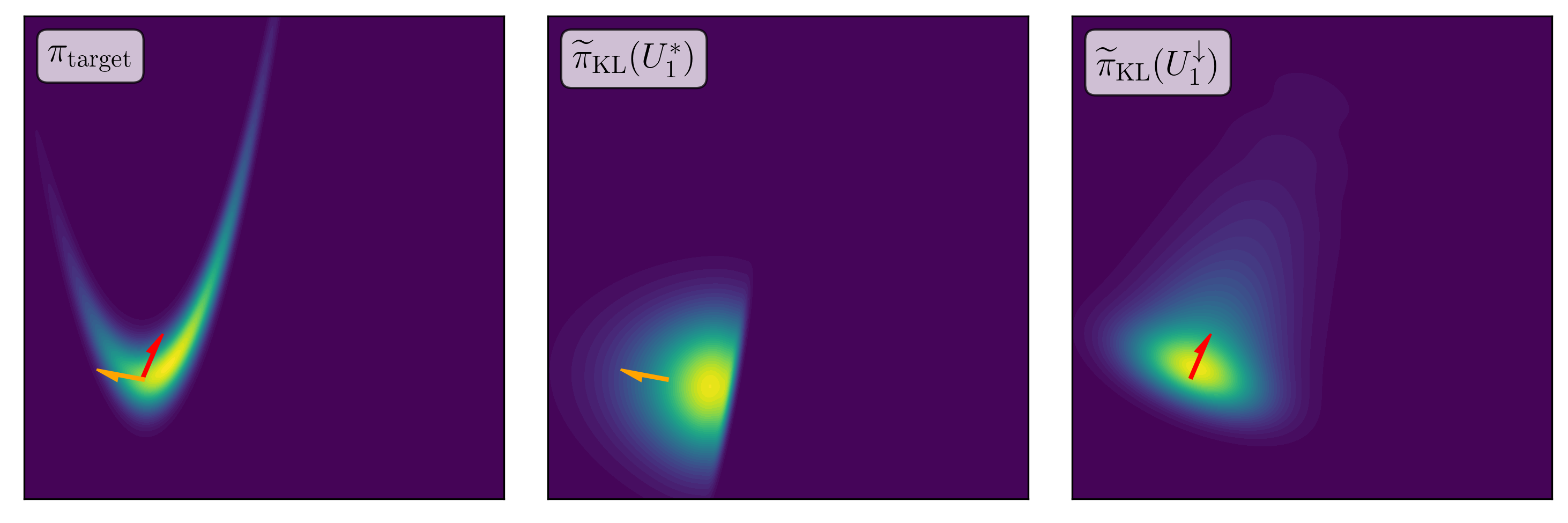}
\caption{ (Left) Probability density of target measure. (Centre) Approximate measure using feature~$U_1^*$ that minimizes the majorant in~\eqref{eq:CDRbound}. (Right) Approximate measure using feature~$U_1^\downarrow$ that minimizes the majorant in ~\eqref{eq:dimCDRbound}.}
\label{fig:bananapiopt}
\end{figure}

We parameterize features $U_1=(\cos(\theta),\sin(\theta))\in\R^2$ by a unique angle $\theta \in [0, \pi)$. For each $\theta$, we compute $\Dkl(\pi||\pioptKLr(U_1(\theta)))$ using numerical quadratures, resulting in the solid blue curve in Figure~\ref{fig:bananaKL-loss}. The shaded gray region corresponds to the majorization and minorization provided in Theorem~\ref{thm:CDR}. Minimizing the majorant yields $\theta(U_1^*)$, depicted by the dashed yellow line. In contrast, the shaded blue region corresponds to the majorization and minorization from Theorem~\ref{thm:dimCDR}. The minimizer $\theta(U_1^\downarrow)$ of~\eqref{eq:dimCDR_up} is shown by the dashed red line.

We observe that the bounds corresponding to the dimensional LSI are significantly tighter than those corresponding to the non-dimensional LSI, confirming our analysis. {Moreover, though the upper blue envelope corresponding to $\mcJ_\KL^\downarrow(\cdot)$ is non-convex, it exhibits only a single global minimum, consistent with the benign non-convexity conjecture stated in Conjecture \ref{conjecture:benign}}. Surprisingly, we also observe that the dimensional LSI minimizer~$\theta(U_1^\downarrow)$ coincides with the global minimizer of the KL divergence, {whereas the LSI minimizer $\theta(U_1^*)$ achieves nearly the worst possible error with respect to the true divergence shown by the solid blue curve.}

\begin{figure}[!h]
\centering\includegraphics[width=0.8\textwidth]{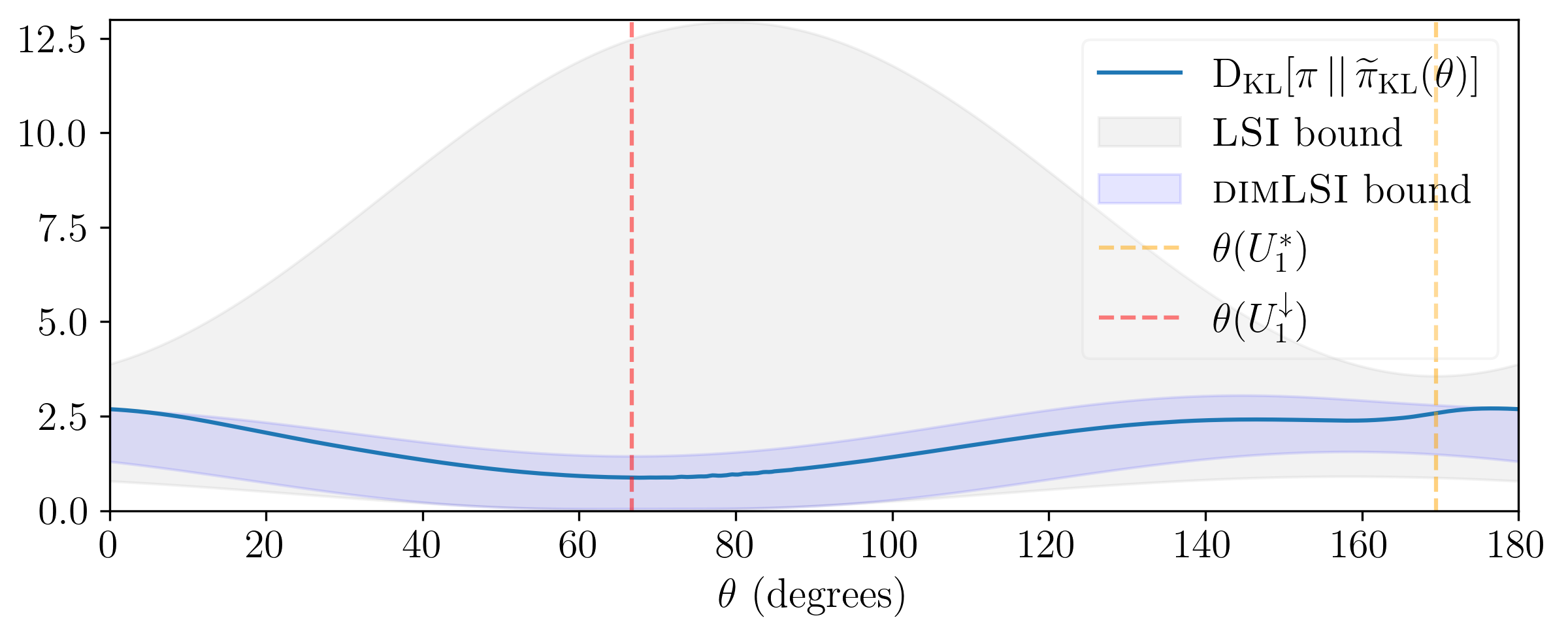}
\caption{(Rosenbrock) KL divergence $\Dkl(\pi|| \pioptKLr(U_1(\theta)))$ with $U_1(\theta)=(\cos(\theta),\sin(\theta))$ for $\theta\in[0^\circ,180^\circ)$, shown by the solid blue line, and its bounds obtained either by the LSI of Theorem \ref{thm:CDR}, depicted by the gray shaded region, or by the dimensional LSI of Theorem \ref{thm:dimCDR}, depicted by the blue shaded region.}
\label{fig:bananaKL-loss}
\end{figure}

\subsection{Choosing the reference measure $\mu$ as the best Gaussian approximation to $\pi$}

As motivation, note that for an \emph{isotropic} measure $\pi$, so that $M(\pi)=I_d$, the upper bound in \eqref{eq:dimCDRbound} of Theorem~\ref{thm:dimCDR} simplifies as
\begin{align*}
 \mcJ_\KL^\downarrow(U_r)
 &= \frac{1}{2}\ln\det (  H(\pi)  ) - \frac{1}{2}\ln\det ( U_r^\top H(\pi) U_r ) \;.
\end{align*}
If in addition $\pi$ is centered so that $m(\pi)=0$, the lower bound vanishes, i.e., $\mcJ_\KL^\uparrow(U_r)=0$ for any $U_r$ with orthonormal columns.
In this setting,  the matrix $U_r^\downarrow$ with columns containing the $r$ leading eigenvectors of the Fisher information matrix $H(\pi)$ is the global minimizer of $\mcJ_\KL^\downarrow(\cdot)$ \cite{Fan_1949}, and evaluating
the upper bound \eqref{eq:dimCDRbound} achieves
$$
\Dkl(\pi \,||\, \pioptKLr(U_r^\downarrow))
\leq \frac{1}{2}\sum_{k=r+1}^d \ln (  \lambda_k(H(\pi)) ) ,
$$
where $\lambda_k(A)$ is the $k$-th largest eigenvalue of $A$.
Recalling \eqref{eq:FIMidentity}, we have $H(\pi)=H(\pi||\mu)+I_d$ so that $U_r^\downarrow=U_r^*$, where $U_r^*$ is the minimizer of the LSI majorant \eqref{eq:CDRbound}. This further implies that we have $\lambda_k(H(\pi))= 1+\lambda_k(H(\pi||\mu))$, and using the inequality $\ln(1+t)\leq t$ demonstrates that the above sharpens \eqref{eq:CDRerror}.

The same reasoning can be generalized to non-isotropic/non-centered measures $\pi$ by changing the reference measure $\mu$ to be the Gaussian measure $\mu=\mathcal{N}(m(\pi),C(\pi))$. This choice is natural since it is the best Gaussian approximation to $\pi$, in the sense of minimizing $\mu\mapsto\Dkl(\pi||\mu)$ over the set of Gaussian measures.
The proof is given in Appendix \ref{proof:dimCDR_tilted}.

\begin{theorem}\label{thm:dimCDR_tilted}
 Let $\pi$ be a probability measure on $\R^d$ with mean $m(\pi)$ and non-singular covariance matrix $C(\pi)$. Consider the Gaussian measure $\mu=\mathcal{N}(m(\pi),C(\pi))$ with the same mean and covariance. Then, for any $U_r\in\R^{d\times r}$ such that $U_r^\top C(\pi) U_r = I_r$, the approximation $\pioptKLr(U_r)$ as in \eqref{eq:pioptKL} satisfies
 \begin{equation}\label{eq:dimCDR_tilted}
  \Dkl(\pi \,||\, \pioptKLr(U_r))
  \leq \frac{1}{2} \ln\det(C(\pi)H(\pi) ) - \frac{1}{2} \ln\det( V_r^\top H(\pi) V_r)  ,
 \end{equation}
 where $V_r = C(\pi)U_r$ and where $H(\pi)=\E_\pi[\nabla\ln(\frac{\d\pi}{\d x})\nabla\ln(\frac{\d\pi}{\d x})^\top]$ is the Fisher information matrix of $\pi$.
 In particular, the minimizer of the above right-hand side is given by $U_r^\downarrow = C(\pi)^{-1} V_r^\downarrow$ where $V_r^\downarrow = [v_1,\hdots,v_r]$ contains the $r$ largest generalized eigenvectors of the matrix pair $(H(\pi),C(\pi)^{-1})$, meaning
 $
  H(\pi) v_k = \lambda_k(H(\pi), C(\pi)^{-1}) C(\pi)^{-1} v_k ,
 $
 with the ortho-normality constraint $v_i^\top C(\pi)^{-1} v_j = \delta_{i,j}$.
 In this case, we obtain
 \begin{equation}\label{eq:dimCDR_tilted_optimal}
 \Dkl(\pi || \pioptKLr(U_r^\downarrow))
 \leq \frac{1}{2}\sum_{k=r+1}^d \ln (  { \lambda_k(H(\pi), C(\pi)^{-1})}  ).
 \end{equation}
\end{theorem}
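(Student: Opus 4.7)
The plan is to reduce the general statement to the isotropic, centered case treated in the motivation paragraph preceding the theorem, via an affine pushforward that sends $\mu = \mcN(m(\pi), C(\pi))$ to the standard Gaussian. Define $T(x) = C(\pi)^{-1/2}(x - m(\pi))$ and let $\pi_T := T_\#\pi$ and $\mu_T := T_\#\mu$. Then $\mu_T = \mcN(0, I_d)$ and $\pi_T$ has zero mean and identity covariance. The feature $U_r$ corresponds in the transformed space to $\widetilde U_r := C(\pi)^{1/2} U_r$; the hypothesis $U_r^\top C(\pi) U_r = I_r$ is exactly $\widetilde U_r^\top \widetilde U_r = I_r$, so $\widetilde U_r$ has orthonormal columns as required by Theorem~\ref{thm:dimCDR}.

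Next, I would verify that the KL-optimal construction commutes with pushforward: if $\widetilde{\pi}_T(\widetilde U_r)$ denotes the analogous KL-optimal approximation built from the triple $(\pi_T, \mu_T, \widetilde U_r)$, then $T_\#\pioptKLr(U_r) = \widetilde{\pi}_T(\widetilde U_r)$. This follows by writing $U_r^\top x = \widetilde U_r^\top T(x) + U_r^\top m(\pi)$ and tracking how the conditional expectation defining the optimal profile \eqref{eq:optKL} transforms under the change of variables; the additive shift is absorbed into the profile's argument. Invariance of the KL divergence under diffeomorphism then gives $\Dkl(\pi \,||\, \pioptKLr(U_r)) = \Dkl(\pi_T \,||\, \widetilde{\pi}_T(\widetilde U_r))$. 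Theorem~\ref{thm:dimCDR} in its simplified isotropic-centered form (stated just before Theorem~\ref{thm:dimCDR_tilted}) bounds the right-hand side by $\frac{1}{2}\ln\det H(\pi_T) - \frac{1}{2}\ln\det(\widetilde U_r^\top H(\pi_T) \widetilde U_r)$. The transformation rule for the score $\nabla\ln\pi_T(\widetilde x) = C(\pi)^{1/2}\,\nabla\ln\pi(T^{-1}(\widetilde x))$ yields $H(\pi_T) = C(\pi)^{1/2} H(\pi) C(\pi)^{1/2}$, and after substituting $\widetilde U_r = C(\pi)^{1/2} U_r$ and $V_r = C(\pi) U_r$ this is exactly \eqref{eq:dimCDR_tilted}.

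For the minimizer, the standardized problem is to maximize $\ln\det(\widetilde U_r^\top H(\pi_T) \widetilde U_r)$ over orthonormal $\widetilde U_r$, whose optimum is given by the $r$ leading eigenvectors of $H(\pi_T)$ via the Ky Fan / Poincar\'e separation principle. The eigenequation $C(\pi)^{1/2} H(\pi) C(\pi)^{1/2} \widetilde v_k = \lambda_k \widetilde v_k$, on setting $v_k := C(\pi)^{1/2} \widetilde v_k$, rewrites as $H(\pi) v_k = \lambda_k C(\pi)^{-1} v_k$, with the orthonormality $\widetilde v_i^\top \widetilde v_j = \delta_{ij}$ translating to $v_i^\top C(\pi)^{-1} v_j = \delta_{ij}$. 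Evaluating the bound at $V_r^\downarrow = [v_1,\ldots,v_r]$ and using the diagonality of $V_r^{\downarrow\top} H(\pi) V_r^\downarrow = \mathrm{diag}(\lambda_1,\ldots,\lambda_r)$ collapses it to $\frac{1}{2}\sum_{k=r+1}^d \ln\lambda_k$, yielding \eqref{eq:dimCDR_tilted_optimal}.

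The main obstacle I anticipate lies in the first two steps: correctly handling the additive shift that the affine map introduces in the feature coordinate when transporting the KL-optimal profile, and carefully verifying the Jacobian-based transformation $H(\pi_T) = C(\pi)^{1/2} H(\pi) C(\pi)^{1/2}$ for the Fisher information. Both are routine to state but easy to get wrong in signs or in factors of $C(\pi)^{\pm 1/2}$.
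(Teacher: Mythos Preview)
Your proposal is correct and follows essentially the same approach as the paper's proof: an affine pushforward $T(x)=C(\pi)^{-1/2}(x-m(\pi))$ to the centered isotropic setting, identification $\widetilde U_r = C(\pi)^{1/2}U_r$ (equivalently the paper's $\overline{U_r}$), verification that the KL-optimal approximation commutes with pushforward, and the transformation rule $H(\pi_T)=C(\pi)^{1/2}H(\pi)C(\pi)^{1/2}$. The paper is slightly terser about the minimizer step, simply evaluating at $U_r^\downarrow$, whereas you explicitly invoke Ky Fan / Poincar\'e separation; otherwise the arguments coincide.
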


{In the same spirit as selecting the features~$U_r$ which minimize majorizations of the KL divergence in~\cite{ZCLSM22}}, Theorem \ref{thm:dimCDR_tilted} suggests that the deviations between~$\pi$ and its best Gaussian approximation~$\mu=\mathcal{N}(m(\pi),C(\pi))$ are controlled by the relative deviations --- in the sense of generalized Rayleigh quotients --- of the Fisher information matrix $H(\pi)$ and the precision matrix $C(\pi)^{-1}$.
Note that for Gaussian target measures $\pi=\mathcal{N}(m,C)$, we have $H(\pi)=C^{-1}$ so that the right-hand side of \eqref{eq:dimCDR_tilted} vanishes for any $U_r$ such that $U_r^\top C(\pi) U_r = I_r$. This is unsurprising since, in this case, $\pi=\mu = \pioptKLr(U_r)$ for any $U_r$.

\subsection{Numerical example: Rosenbrock with best Gaussian reference measure~$\mu$}

We revisit the two-dimensional Rosenbrock distribution from Section~\ref{sec:rosenbrock1} with the reference measure~$\mu = \mcN(m(\pi), C(\pi))$. Figure~\ref{fig:bananapiopt-bestgaussianprior} depicts the density of the optimal Gaussian reference, with highly anisotropic covariance structure, as well as an approximation~$\pioptKLr(U_1)$ with	arbitrarily chosen feature~$U_1$. Observe that the approximation retains the tail properties from the reference measure.

In Figure~\ref{fig:bananaKL-loss-bestgaussianprior} we compare the approximation properties when using the standard reference measure $\mu = \mcN(0,I)$ compared to the optimal Gaussian reference~$\mu = \mcN(m(\pi), C(\pi))$. The red line shows that the optimal reference achieves uniformly lower approximation error than with the standard reference measure shown by the blue line. However, in comparing the performances of the majorants, we note that the majorant in Theorem~\ref{thm:dimCDR_tilted}, shown by the upper red envelope,
suggests selecting the globally sub-optimal feature direction at $\theta = 180$ degrees. In contrast, the minimizer of the majorant from Theorem~\ref{thm:dimCDR}, shown by the upper blue envelope, certifies a lower approximation error, while also coinciding with the globally optimal direction, as previously discussed in Section~\ref{sec:rosenbrock1}. It is an interesting research direction to understand when one should or should not construct approximations based on the standard reference measure.

\begin{figure}[!h]
\centering\includegraphics[width=0.95\textwidth]{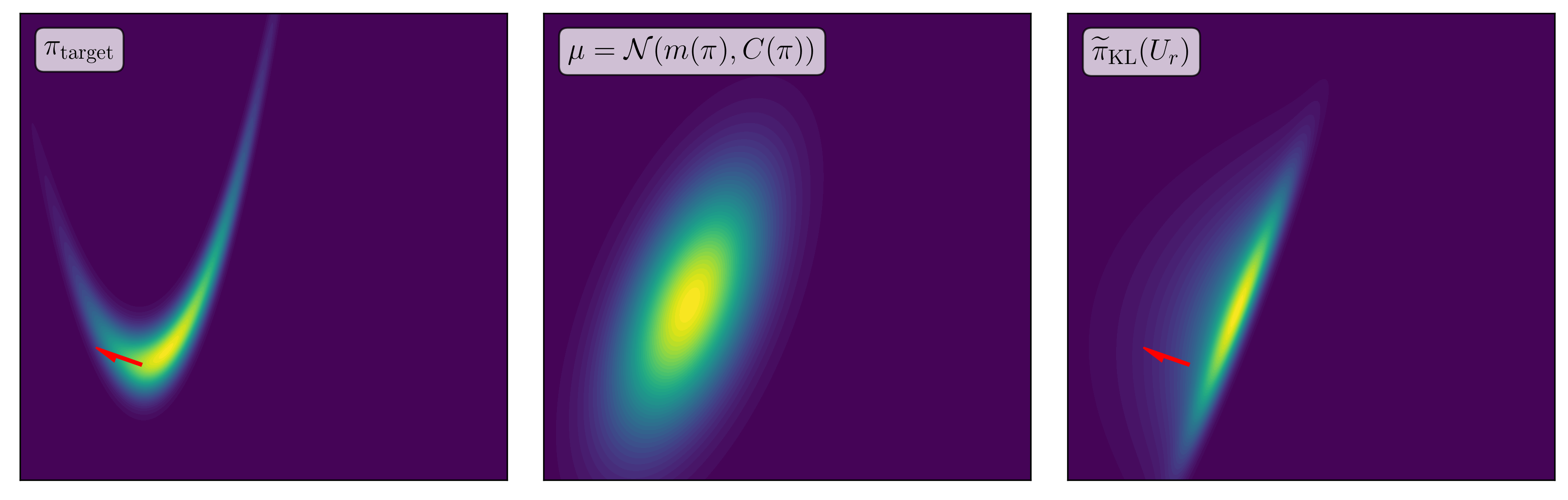}
\caption{ (Left) Probability density of target measure. (Centre) Optimal Gaussian prior with mean $m(\pi)$ and covariance $C(\pi)$. (Right) Approximate measure with feature~$U_r$ using optimal Gaussian prior.}
\label{fig:bananapiopt-bestgaussianprior}
\end{figure}

\begin{figure}[!h]
\centering\includegraphics[width=0.8\textwidth]{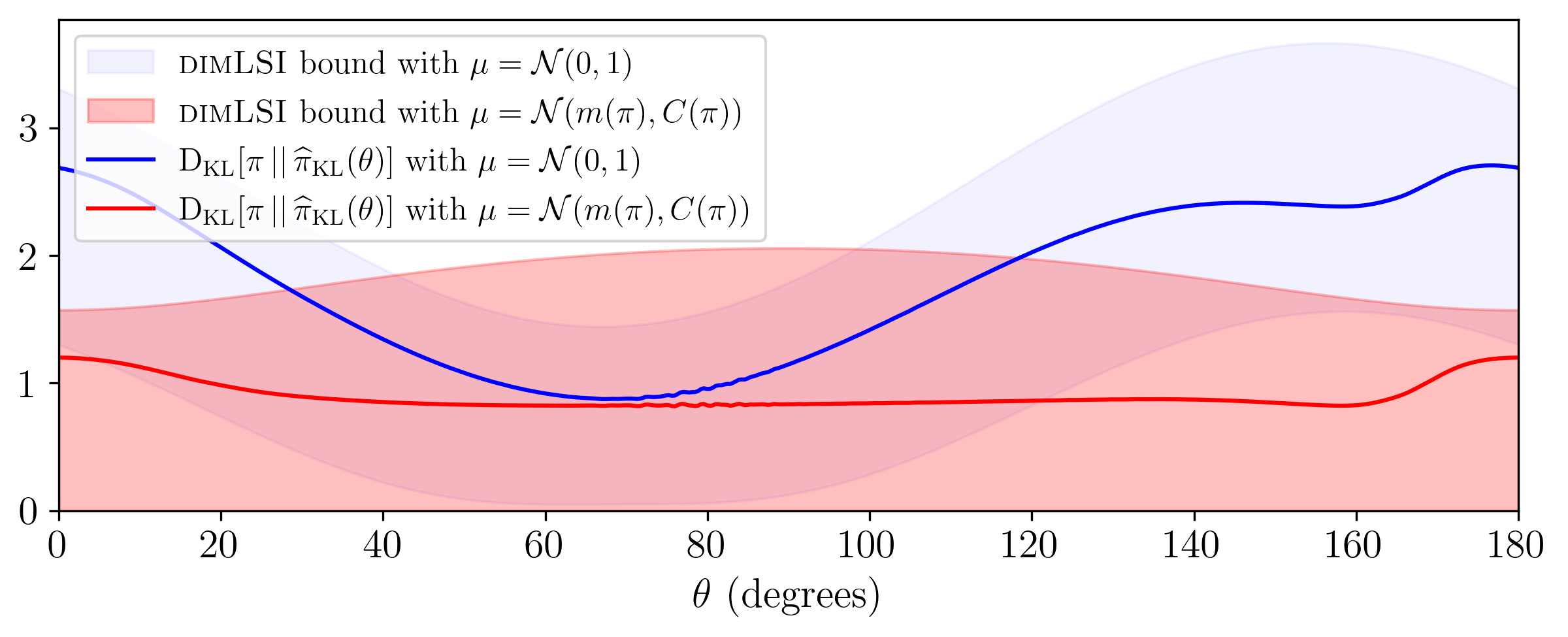}
\caption{(Rosenbrock) KL divergence $\Dkl(\pi|| \pioptKLr(U_1(\theta)))$ with $U_1(\theta)=(\cos(\theta),\sin(\theta))$ for $\theta\in[0^\circ,180^\circ)$ (red line) with the optimal Gaussian reference measure $\mu = \mathcal{N}(m(\pi), C(\pi))$ and its bounds obtained by the dimensional LSI of Theorem \ref{thm:dimCDR_tilted} shown by the shaded red region. We include for comparison the true approximation error (blue line) and dimensional LSI bounds (blue shade) in Theorem~\ref{thm:dimCDR} when constructing an approximation using the standard reference measure $\mu = \mathcal{N}(0,1)$.}
\label{fig:bananaKL-loss-bestgaussianprior}
\end{figure}

\subsection{Application to Bayesian inverse problems: the data-free setting}
\label{sec:bayesian}

For Bayesian inverse problems, the target measure is the posterior
$\d\pi^y(x) \propto \ell^y(x)\d\mu(x)$, where the likelihood function $x\mapsto\ell^y(x)$ is proportional to the probability density of observing a given data $y$ knowing $x$.
As noticed in \cite{SSC+15, ZCLSM22, Cui_Tong_2021}, in many practical settings the data~$y$ only provide information about a few linear features of~$x$. Choosing the prior~$\mu$ as the reference measure is natural here, and $U_r$ can be interpreted as the directions which capture the prior-to-posterior update.
The methodology in~\S\ref{sec:logSob} and \S\ref{sec:dimlogSob} provides an algorithm towards circumventing the impact of high-dimensionality in posterior sampling.

The \emph{data-free} approach proposed in~\cite{Cui_Zahm_2021} consists of searching for low-dimensional features~$V_r \in \R^{d \times r}$ which are informed by the \emph{average} realization of data. Formally, these features are characterized by
\begin{equation}
\label{eq:bayesloss}
\min_{\substack{V_r \in \R^{d \times r} \\ V_r^\trans V_r = I_r}} \, \E_{Y} \left[ \, \min_{\ell_r^Y:\,\R^r \to \R_+} \Dkl(\pi^Y \,||\, \widetilde\pi(V_r,\ell_r^Y)) \right] \quad \text{for }\quad \d\widetilde\pi(x \mid V_r,\ell_r^y) \propto \ell_r^y(V_r^\top x) \d\mu(x) \;.
\end{equation}
In contrast to~\eqref{eq:loss}, optimality is quantified with respect to the \emph{averaged} KL divergence with respect to data $Y$, with marginal law given by $\d\rho(y) \propto (\int \ell^y(x)\d\mu(x) ) \d y$.
The authors demonstrate an application of the logarithmic Sobolev inequality to derive a majorization to~\eqref{eq:bayesloss}. We show in Theorem~\ref{thm:dimCDRdatafree} that a tighter majorization can be attained by leveraging the dimensional LSI instead.
 The proofs of these statements are left to Appendix~\ref{sec:dimCDRdatafreeproof}.

\begin{theorem}
\label{thm:dimCDRdatafree}
Let $\d\pi^Y(x) \propto \ell^Y(x)\d\mu(x)$ be the posterior distribution of the random vector $X\sim\mu$ conditioned on~$Y$, where $\mu=\mathcal{N}(0,I_d)$ is the standard Gaussian prior and $\ell^y(x)$ is the likelihood function of observing $Y=y$ given $X=x$.
For any matrix $V_r \in \R^{d \times r}$ with $r \leq d$ orthonormal columns,
the measure $\d\widetilde\pi^{\KL,y}(x \mid V_r) \propto \ell^{\KL,y}_r(V_r^\top x) \d\mu(x)$ with $\ell^{\KL,y}_r(\theta_r) = \E_{X \sim \mu}[ \ell^y(X) \mid V_r^\top X = \theta_r]$ satisfies
\begin{equation}\label{eq:dimCDRdatafree}
\E_Y\left[ \Dkl(\pi^Y || \widetilde\pi^{\KL,Y}(V_r))  \right] \leq
\frac{1}{2} \ln\det \lb V_\perp^\top (I_d + H_\mathrm{df}) V_\perp \rb,
\end{equation}
where $V_\perp \in \R^{d \times (d-r)}$ denotes any orthonormal completion to $V_r$ and
where $H_\mathrm{df}\in\R^{d\times d}$ is the \emph{data-free diagnostic matrix} defined by
\begin{equation}
H_\mathrm{df} = \E_{X,Y}[\nabla_x \ln \ell^Y(X) \nabla_x \ln \ell^Y(X)^\top].
\end{equation}
In particular, the minimizer $V_r^\downarrow$ of the right-hand side of \eqref{eq:dimCDRdatafree} is given by $V_r^\downarrow=[v_1,\hdots,v_r]$, where $v_k$ is the eigenvector of $H_\mathrm{df}$ corresponding to the $k$-th largest eigenvalue. Denoting by $\lambda_k(H_\mathrm{df})$ the corresponding eigenvalue, \eqref{eq:dimCDRdatafree} yields
\begin{equation}
\label{eq:datafreeError}
 \E_Y\left[ \Dkl(\pi^Y \,||\, \pioptKLr\,^Y(V_r))  \right] \leq \frac{1}{2} \sum_{k=r+1}^d \ln(1 + \lambda_k(H_\mathrm{df})).
\end{equation}
\end{theorem}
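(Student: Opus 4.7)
\textbf{Proof plan for Theorem \ref{thm:dimCDRdatafree}.} The strategy mirrors the proof of Theorem \ref{thm:dimCDR} but is carried out conditionally on $Y$ and then averaged over $Y$. First I would fix $y$ and write $x = V_r\theta_r + V_\perp\theta_\perp$ so that, because $\mu=\mathcal{N}(0,I_d)$ factorizes as $\mu_r(\theta_r)\mu_\perp(\theta_\perp)$ under this orthogonal change of variables, the posterior splits as $\pi^y(\theta_r,\theta_\perp) = \pi_r^y(\theta_r)\,\pi^y(\theta_\perp\mid\theta_r)$ while its approximation splits as $\widetilde\pi^{\KL,y}(\theta_r,\theta_\perp) = \pi_r^y(\theta_r)\mu_\perp(\theta_\perp)$ (since $\ell_r^{\KL,y}$ is precisely the conditional expectation making the $\theta_r$ marginals match). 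Consequently
\[
\Dkl(\pi^y\,||\,\widetilde\pi^{\KL,y}(V_r)) = \int \Dkl\bigl(\pi^y(\cdot\mid\theta_r)\,\big\|\,\mu_\perp\bigr)\, \pi_r^y(\theta_r)\,\d\theta_r.
\]

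Next I would apply the dimensional LSI of Proposition \ref{thm:dimLSI} on $\R^{d-r}$ to the density $f(\theta_\perp) := \pi^y(\theta_\perp\mid\theta_r)/\mu_\perp(\theta_\perp)$, whose entropy against $\mu_\perp$ equals the inner KL above. A short computation using $\pi^y(\theta_\perp\mid\theta_r)\propto \ell^y(V_r\theta_r+V_\perp\theta_\perp)\mu_\perp(\theta_\perp)$ gives $\nabla_{\theta_\perp}\ln f(\theta_\perp) = V_\perp^\top\nabla_x\ln\ell^y(x)$ and hence $\nabla\ln f - \theta_\perp = V_\perp^\top\nabla_x\ln\ell^y(x) - V_\perp^\top x$. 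The dimensional LSI therefore yields
\[
\Dkl\bigl(\pi^y(\cdot\mid\theta_r)\,\big\|\,\mu_\perp\bigr) \leq \tfrac12 \E_{\pi^y(\cdot\mid\theta_r)}\!\bigl[\|V_\perp^\top X\|_2^2\bigr] - \tfrac{d-r}{2} + \tfrac12 \ln\det \E_{\pi^y(\cdot\mid\theta_r)}\!\bigl[\bigl(V_\perp^\top(\nabla_x\ln\ell^y(X)-X)\bigr)^{\otimes 2}\bigr].
\]
Integrating this bound against $\pi_r^y(\theta_r)\,\d\theta_r$ and then against $\rho(\d y)$, and using Jensen's inequality together with the concavity of $\ln\det$ on the positive-definite cone, I can push all nested expectations inside the $\ln\det$ to obtain an upper bound involving a single expectation under the joint law of $(X,Y)$.

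The crucial simplification is that the marginal of $X$ under the joint distribution of $(X,Y)$ is exactly the prior $\mu$, so $\E[\|V_\perp^\top X\|_2^2] = d-r$ and the first two terms cancel with the $-\tfrac{d-r}{2}$. For the $\ln\det$ term I expand $(\nabla_x\ln\ell^Y(X) - X)^{\otimes 2}$ and apply Gaussian integration by parts in $X$ (conditionally on $Y$) to show $\E_{\pi^Y}[X\nabla_x\ln\ell^Y(X)^\top] = \E_{\pi^Y}[XX^\top] - I_d$; averaging over $Y$ the right-hand side becomes $I_d - I_d = 0$ since the joint marginal of $X$ is $\mu$. The cross terms therefore vanish and the second moment contributes $I_d$, leaving $\E[(\nabla_x\ln\ell^Y(X)-X)^{\otimes 2}] = H_{\df} + I_d$. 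This produces exactly the bound \eqref{eq:dimCDRdatafree}.

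Finally, the minimization of $\ln\det(V_\perp^\top(I_d+H_\df)V_\perp)$ over orthonormal completions $V_\perp$ is a standard application of the Poincar\'e separation / Ky Fan min-principle for determinants: the minimum is attained when the columns of $V_\perp$ span the eigenspace of the $d-r$ smallest eigenvalues of $I_d + H_\df$, equivalently when $V_r^\downarrow$ contains the $r$ leading eigenvectors of $H_\df$, yielding the sum \eqref{eq:datafreeError}. The main obstacle I anticipate is the bookkeeping in step two --- verifying that the gradient of $\ln f$ in the $\theta_\perp$ coordinates correctly transports to the Euclidean gradient $V_\perp^\top\nabla_x\ln\ell^y$ once the Gaussian weights are absorbed --- and ensuring that the integration-by-parts identity, which is proved pointwise in $y$, is compatible with the subsequent Jensen step and yields the clean cancellation $I_d - I_d$ after averaging.
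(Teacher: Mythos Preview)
Your proposal is correct and follows essentially the same route as the paper. The only difference is packaging: the paper invokes Theorem~\ref{thm:dimCDR} as a black box to obtain, for each fixed $y$, the bound $\Dkl(\pi^y\,||\,\widetilde\pi^{\KL,y}(V_r)) \leq \tfrac12\trace(V_\perp^\top M^y V_\perp) - \tfrac{d-r}{2} + \tfrac12\ln\det(V_\perp^\top(2I_d + H^y - M^y)V_\perp)$ (using the identity $H(\pi^y)=H(\pi^y||\mu)+2I_d-M(\pi^y)$), and then averages over $Y$ with a single application of Jensen to $\ln\det$, using $\E_Y[M^Y]=I_d$ and $\E_Y[H^Y]=H_\df$; you instead re-derive the dimensional LSI bound from scratch on the $\theta_\perp$ block and apply Jensen twice (over $\theta_r$ and over $y$), arriving at the same $\ln\det(V_\perp^\top(I_d+H_\df)V_\perp)$ after the same integration-by-parts cancellation.
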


The nomenclature `data-free' refers to the fact that~$H_\df$ only requires an expectation over the joint law of $(X,Y)$ and does not depend on the observed realization $y$ of the data. Samples from the joint $(X,Y)$ can be easily obtained via Gibbs algorithm, by first sampling $X\sim\mu$ and then $Y \mid X\sim\pi^X$ where $\d\pi^x(y)\propto \ell^y(x)\d y$.

Compared to the majorant $ \E_Y[ \Dkl(\pi^Y || \widetilde\pi^{\KL,Y}(V_r)) ] \leq \frac{1}{2}\trace ((I_d - V_rV_r^\top) H_\df) $ obtained by \cite[\S 3.3]{Cui_Zahm_2021},~\eqref{eq:dimCDRdatafree} is strictly tighter by the log-determinant bound $\ln\det(I+A)\leq\trace(A)$. While both majorants are minimized by the same feature $V_r^\downarrow$, the certificate of Cui and Zahm only ensures that
\[
\E_Y \Dkl(\pi \,||\, \widetilde{\pi}^{\KL,Y}(V_r^\downarrow)) \leq \frac{1}{2} \sum_{k = r+1}^d \lambda_k(H_\df) ,
\]
whereas Theorem~\ref{thm:dimCDRdatafree} guarantees that the approximation error can be significantly less. In some instances this can be \emph{exponentially} better, such as for small~$r$ when the sum of the trailing eigenvalues is large. Consequently, a significantly lower dimension~$r$ suffices to achieve the same error tolerance than previously considered.

\begin{remark}
Proposition~\ref{thm:dimCDRexactgauss} extends to the data-free setting, that is,
$\E_Y\left[ \Dkl(\pi^Y || \widetilde\pi^{\KL,Y}(V_r))  \right] = \frac{1}{2} \ln\det \lb V_\perp^\top (I_d + H_\mathrm{df}) V_\perp \rb,$ for any $V_r$ whenever $\pi^Y$ is Gaussian.
\end{remark}

\begin{remark}
 As noticed in \cite{Baptista_Marzouk_Zahm_2022}, $\mathbb{E}_Y \Dkl(\pi^Y || \widetilde\pi^{\KL,Y}(V_r)) = I(X;Y)-I(V_r^\top X;Y)$ where $I(X,Y)$ is the \emph{mutual information} between $X$ and $Y$. Thus, the bound \eqref{eq:dimCDRdatafree} yields a lower bound on the mutual information between $V_r^\top X$ and $Y$ given by
 $$
  I(V_r^\top X;Y)\geq I(X;Y)-\frac{1}{2} \ln\det \lb V_\perp^\top (I_d + H_\mathrm{df}) V_\perp \rb.
 $$
\end{remark}

\begin{remark}
The reverse dimensional LSI~\eqref{eq:revdimLSI} results in a lower bound (not shown) involving the \emph{data-free lower diagnostic matrix}
\[
G_\df = \E_Y[ \E_{X \mid Y}[\nabla_x \ln \ell^Y(X)] \E_{X \mid Y}[\nabla_x \ln \ell^Y(X)]^\top ].
\]
This requires computing nested expectations with respect to the laws of~$X|Y$ and $Y$.
A looser bound can be obtained by lower bounding this matrix, in the Loewner sense, by $G'_\df = \E_{X,Y}[\nabla_x \ln \ell^Y(X)]\E_{X,Y}[\nabla_x \ln \ell^Y(X)]^\top$. However, $G'_\df = 0$ under the commonly used data likelihood model $Y\mid X \sim \mathcal{N}(g(X),~\sigma_\mathrm{obs}^2)$, with parameter-to-observation model~$g$, resulting in a vacuous bound.
\end{remark}

\subsubsection{Applications to Inverse Problems with Generative Modeling Priors}

Recent approaches to Bayesian inverse problems adopt an empirical Bayes perspective, wherein the prior is assumed to be supported on a low-dimensional manifold and learned from training samples via generative modeling algorithms, such as~GANs~\cite{goodfellowGenerativeAdversarialNetworks2014, venkatakrishnanPlugandPlayPriorsModel2013, shahSolvingLinearInverse2018}.
These priors can be expressed as the pushforward
of a low-dimensional latent variable~$Z$ taking values in $\R^\kappa,~\kappa \ll d$, so that $\mu_X = \phi_\# \mu_Z$ for some $\phi : \R^\kappa \to \R^d$. {Under the assumption that the class of learned transport maps~$\phi$ is sufficiently expressive, $\mu_Z$ can be assumed to be the standard Gaussian measure without loss of generality.} This form of prior significantly reduces the complexity of sampling the posterior, as it then suffices to sample the $\kappa$-dimensional latent posterior
\begin{equation}
\d\pi_Z^y(z) \propto \ell^y(\phi(z)) \d\mu_Z(z),
\end{equation}
from which posterior samples in the model space~$X$ can be obtained using the push-forward relation~$\pi^y_X = \phi_\# \pi^y_Z$.

\begin{figure}[!h]
\centering\includegraphics[width=0.95\textwidth]{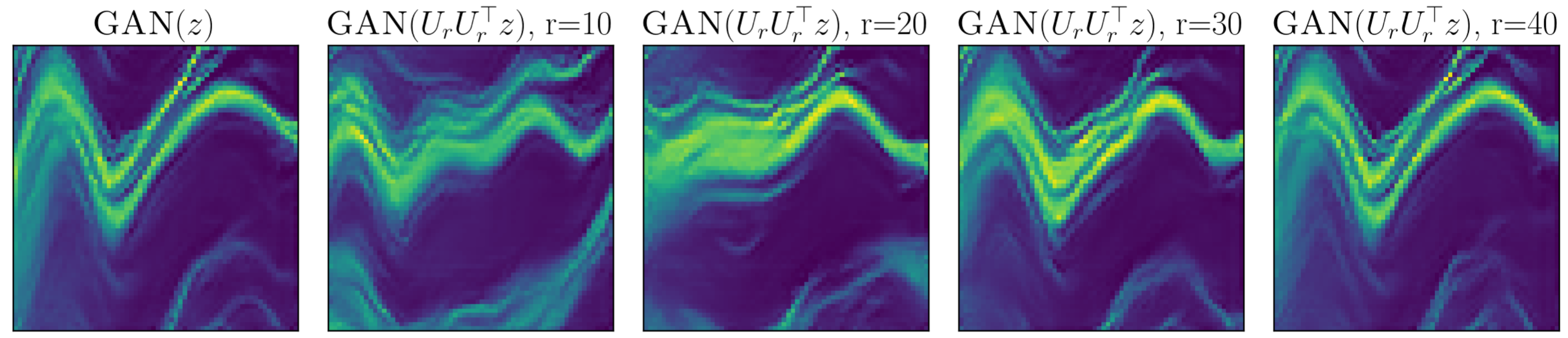}
\caption{(Left-most column) GAN decoder applied to prior sample $Z \sim \mu_Z$. (2nd to 5th columns) GAN decoder applied to same prior sample but projected onto features $U_r = V_r^\downarrow$. Increasing $r$ from left to right illustrates features in the nominal model space which are informed by data.}
\label{fig:xom_GAN_decode}
\end{figure}

We demonstrate that additional dimension reduction can be achieved by accounting for interactions between data, likelihood models, and generative priors.
{To illustrate this, we consider a geophysical inverse problem in which~$X$ represents a meshed discretization of the subsurface with $3600$ nodes.
We are provided by domain experts a trained GAN prior with a~$300$ dimensional latent space which, with some abuse of notation, we denote as the function $\textrm{GAN} : \R^{300} \to \R^{3600}$, so that the prior distribution is given by the push--forward $\textrm{GAN}_\# \mu_Z$. This prior generates realizations of the subsurface with sufficiently high fidelity for practical purposes, as exhibited by the regular, repeated layered structures seen in Figure~\ref{fig:xom_GAN_decode}.}
Using this GAN prior, we compute the error certificate~\eqref{eq:dimCDRdatafree} and visualize it in Figure~\ref{fig:xom_GAN_bound} as a function of the rank~$r$. This figure shows that data only inform relatively few latent coordinates, e.g., it suffices consider $r=50$ features to obtain an expected KL error of~$0.1$ nats.
As an additional benefit, domain practitioners can interpret the features which are ``data-informed'' by plotting the convergence of the GAN output with increasing $r$, as visualized going from the second to fifth columns in Figure~\ref{fig:xom_GAN_decode}.

\begin{figure}[!h]
\centering\includegraphics[width=0.55\textwidth]{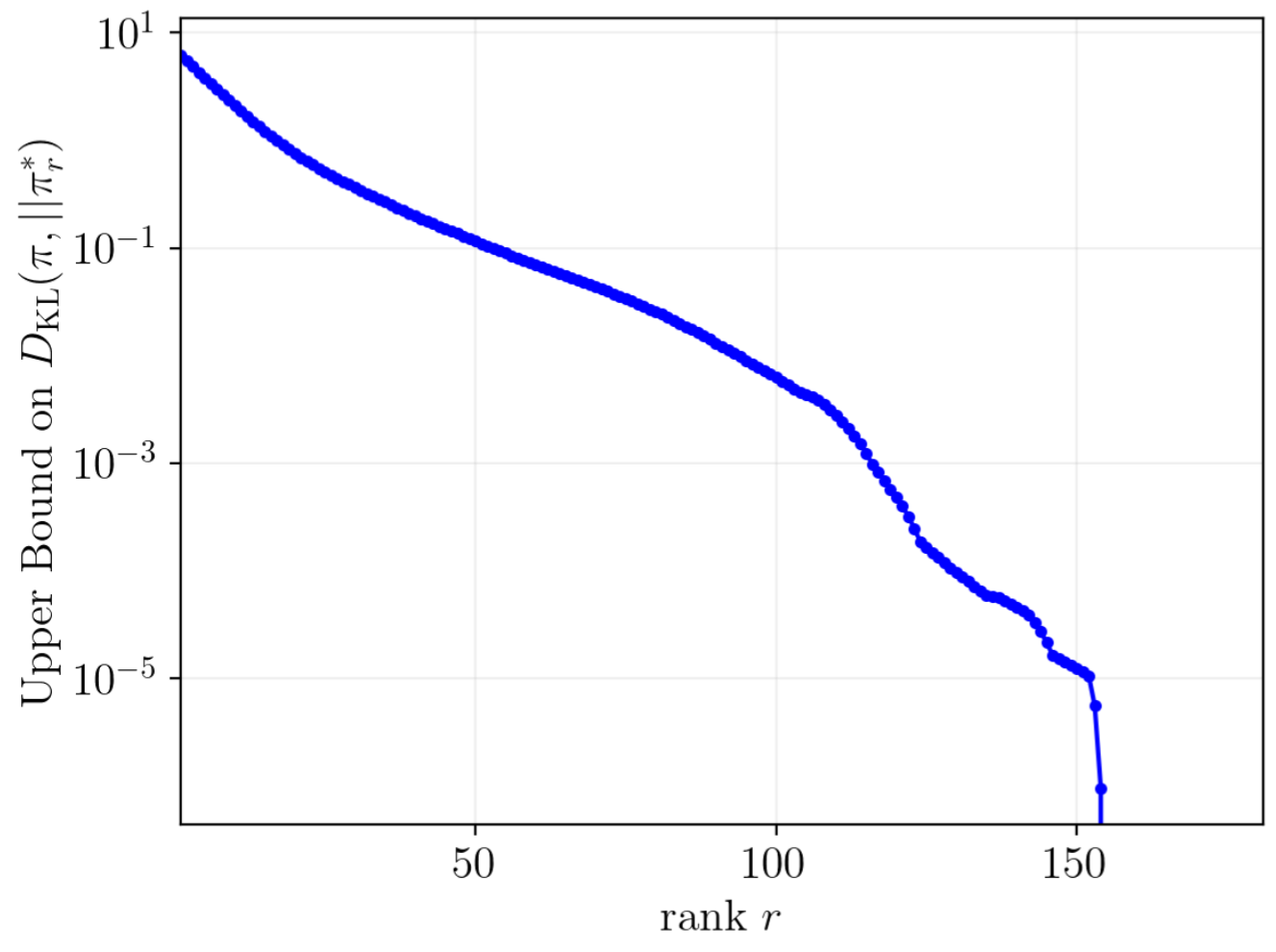}
\caption{The data-free approximation error certificate~\eqref{eq:datafreeError} versus rank~$r$ for the geophysical inverse problem with generative GAN prior.}
\label{fig:xom_GAN_bound}
\end{figure}

\section{Dimension reduction with the squared Hellinger distance}
\label{sec:Hell}

It is of interest to consider alternative error metrics in~\eqref{eq:loss} to the KL divergence.
Notably, \cite{Cui_Tong_2021, Cui_Dolgov_Zahm_2022}~considered the squared Hellinger distance
\begin{equation}
    \label{eq:defhell}
    \dhell(\pi, \mu)
    = \frac{1}{2}\int \left(\sqrt{\frac{\d\pi}{\d x}} - \sqrt{\frac{\d\mu}{\d x}} \right)^2 \d x = 1 - \int \sqrt{\frac{\d\pi}{\d\x}}\sqrt{\frac{\d\mu}{\d\x}}\d\x.
\end{equation}
{By expanding the square
and using the assumption that $\tfrac{\d\pi}{\d\mu}$ exists, we can relate this distance to the variance $\Var_\mu (\sqrt{\tfrac{\d\pi}{\d\mu} }) = 1 - \left(\int \sqrt{\tfrac{\d\pi}{\d\mu}} \d\mu \right)^{2} $ according to}
\begin{equation}\label{eq:Hell}
\dhell(\pi, \mu) = 1- \sqrt{1- \Var_\mu\lb \sqrt{\tfrac{\d\pi}{\d\mu}} \rb } .
\end{equation}
As the Poincaré inequality provides gradient-based bounds on variances, it is natural to employ it in this context to control the squared Hellinger distance (just as the logarithmic Sobolev inequality naturally relates to the KL divergence).
The Poincaré inequality, however, exhibits a ``dimension-free'' property similar to the LSI~\cite[\S3]{Chafai_2004}, and thus one may wonder whether dimensional refinements can be leveraged to obtain improved majorizations.
In this section, we discuss such refinements to the Poincar\'e inequality for the purpose of dimension reduction in the squared Hellinger distance.
Our presentation largely mirrors Section~\ref{sec:KL}.

\subsection{Certifiable bounds using the Poincar\'e Inequality}

Let us recall the Poincaré inequality for the standard Gaussian measure $\mu=\mathcal{N}(0,I_d)$.

\begin{proposition}[Theorem 4.7.2. in \cite{Bakry2014}]
\label{thm:poincare}
The standard Gaussian measure~$\mu=\mathcal{N}(0,I_d)$ satisfies
\begin{equation}
\label{eq:poincare}
\left \|\int \nabla f \d\mu \right\|_2^2 \leq \Var_\mu(f) \leq \int \| \nabla f \|_2^2\d\mu.
\end{equation}
for any smooth function $f:\R^d\rightarrow\R$.
We refer to the upper (resp. lower) bound as the (resp. reverse) \emph{Poincar\'e inequality}.
\end{proposition}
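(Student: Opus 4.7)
The plan is to exploit the Ornstein--Uhlenbeck semigroup $(P_t)_{t\geq 0}$ whose infinitesimal generator is $Lf = \Delta f - x\cdot\nabla f$, for which $\mu$ is invariant and which satisfies the standard integration-by-parts identity $-\int g\, Lg\, \d\mu = \int \|\nabla g\|_2^2\, \d\mu$. The crucial Gaussian ingredient is the commutation relation $\nabla P_t f = e^{-t}\, P_t \nabla f$, which follows by differentiating under the integral in the Mehler representation $P_t f(x) = \int f(e^{-t}x + \sqrt{1-e^{-2t}}\,y)\,\d\mu(y)$.

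First I would rewrite the variance as a time integral. Since $P_t f \to \int f\,\d\mu$ as $t\to\infty$,
\[
\Var_\mu(f) \;=\; -\int_0^\infty \frac{\d}{\d t} \int (P_t f)^2\, \d\mu\, \d t \;=\; 2\int_0^\infty \int \|\nabla P_t f\|_2^2\, \d\mu\, \d t,
\]
the second equality coming from $\frac{\d}{\d t}(P_t f)^2 = 2\,(P_t f)\, L(P_t f)$ combined with the integration-by-parts identity. Substituting the commutation relation yields the central representation
\[
\Var_\mu(f) \;=\; 2\int_0^\infty e^{-2t}\int \|P_t \nabla f\|_2^2\, \d\mu\, \d t .
\]

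From this single formula both bounds fall out by Jensen's inequality applied in opposite directions. For the upper (Poincar\'e) bound, the pointwise inequality $\|P_t v\|_2^2 \leq P_t(\|v\|_2^2)$ combined with $\mu$-invariance gives $\int \|P_t \nabla f\|_2^2\,\d\mu \leq \int \|\nabla f\|_2^2\,\d\mu$, and $2\int_0^\infty e^{-2t}\,\d t = 1$ closes the estimate. For the lower (reverse Poincar\'e) bound, the componentwise inequality $\|\int P_t \nabla f\,\d\mu\|_2^2 \leq \int \|P_t \nabla f\|_2^2\,\d\mu$, together with the identity $\int P_t \nabla f\,\d\mu = \int \nabla f\,\d\mu$ (again by $\mu$-invariance), produces the opposite direction with the same exponential integral.

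The main obstacle is justifying the commutation relation and the interchange of differentiation and integration cleanly under the stated smoothness hypothesis; for $f$ with controlled polynomial growth this is routine, and density extends both inequalities to every smooth $f$ for which the right-hand sides are finite. A cleaner but less generalizable alternative avoiding semigroup technicalities is to expand $f$ in the orthonormal multivariate Hermite basis $\{H_\alpha\}$ of $L^2(\mu)$, in which case $\Var_\mu(f) = \sum_{|\alpha|\geq 1} a_\alpha^2$, $\int \|\nabla f\|_2^2\,\d\mu = \sum_\alpha |\alpha|\, a_\alpha^2$, and $\int \nabla f\,\d\mu = (a_{e_1},\ldots,a_{e_d})$, making both inequalities immediate term by term.
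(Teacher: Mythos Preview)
Your argument is correct: the Ornstein--Uhlenbeck semigroup representation $\Var_\mu(f) = 2\int_0^\infty e^{-2t}\int \|P_t\nabla f\|_2^2\,\d\mu\,\d t$ together with the two Jensen inequalities cleanly yields both bounds, and the Hermite-expansion alternative is equally valid.

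There is, however, nothing to compare against: the paper does not supply its own proof of this proposition. It is stated with a reference to \cite[Thm.~4.7.2]{Bakry2014} and used as a black box. The only proof-adjacent remarks in the paper are (i) a footnote noting that the Poincar\'e inequality can be recovered by \emph{linearizing} the LSI (taking $f_\varepsilon = 1+\varepsilon g$ in \eqref{eq:LSI} and matching $\varepsilon^2$ terms), and (ii) the derivation in Appendix~\ref{sec:proofLSI} of the LSI itself via the \emph{heat} semigroup at finite time $t=1$, where $\mu$ appears as the time-one marginal rather than as an invariant measure. Your route via the OU semigroup, with $\mu$ as the invariant measure and the commutation $\nabla P_t = e^{-t}P_t\nabla$, is the more standard textbook derivation and has the advantage of producing both the Poincar\'e and reverse Poincar\'e bounds from a single identity; the linearization trick mentioned in the paper would require first proving the LSI and reverse LSI, so it is less direct.
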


{
The squared Hellinger distance $d^2_\Hell(\pi, \mu)$ in~\eqref{eq:Hell} involves the variance of~$\sqrt{\d\pi/\d\mu}$, which can be bounded by the Poincar\'e inequality. By letting $f=\sqrt{\d\pi/\d\mu}$ in \eqref{eq:poincare} and noting $\int \|\nabla f \|_2^2\d\mu = \int \|\nabla \ln f \|_2^2 f^2\d\mu$ by the logarithmic derivative, we obtain the upper bound}
$$
 1 - \sqrt{\lb 1 - \left\| \E_\pi\left[X \sqrt{\tfrac{\d\mu}{\d\pi}} \right] \right\|^2 \rb_+} \leq
 \dhell(\pi,\mu) \leq 1 - \sqrt{\lb 1 - \frac{\trace(H(\pi||\mu))}{4}   \rb_+},
$$
where $H(\pi||\mu)=\E_{\pi}[\nabla \ln (\frac{\d\pi}{\d\mu})\nabla \ln (\frac{\d\pi}{\d\mu})^\top]$ is the Fisher information matrix of $\pi$ relative to $\mu$ and $(t)_+ := \max(t,0)$.
The lower-bound is obtained by applying Stein's identity~\cite[Lemma 1]{steinEstimationMeanMultivariate1981} to the left-hand side of~\eqref{eq:poincare} to rewrite $\int \nabla f \d\mu = \int x f(x) \d\mu(x)$, substituting the choice of $f = \sqrt{\d\pi/\d\mu}$, and applying the change of measure $\sqrt{\frac{\d\pi}{\d\mu}} \d\mu = \sqrt{\frac{\d\mu}{\d\pi}}\d\pi$.

With another judicious choice of test function~$f$, the same Poincar\'e inequality yields a tractable upper-bound for the squared Hellinger distance between $\pi$ and a structured approximation $\widetilde\pi$ as in \eqref{eq:piTildeIntro}.

\begin{theorem}[Adapted from Theorem 2.1 in \cite{Li_Marzouk_Zahm_2023} with $\alpha = 1/2$]
\label{thm:optHell}
Let $\pi$ be a probability measure on $\R^d$ and let $\mu=\mathcal{N}(0,I_d)$  be the standard Gaussian measure.
Then, for every matrix $U_r \in \R^{d \times r}$ with $r < d$ orthonormal columns, the minimizer of $\ell_r \mapsto \dhell(\pi \,||\, \widetilde\pi(U_r,\ell_r))$ where $\d\widetilde\pi(x \mid U_r,\ell_r) \propto \ell_r(U_r^\top x) \d\mu(x)$ is
\begin{equation}
\label{eq:optHell}
\ell^\Hell_r(\theta_r \mid U_r) = \E_{X \sim \mu}\left[\left. \sqrt{\frac{\d\pi}{\d\mu}}(X) \,\right|\, U_r^\top X = \theta_r \right]^2 .
\end{equation}
{
The squared Hellinger metric between~$\pi$ and the measure $\widetilde\pi^\Hell(U_r) = \widetilde\pi(U_r,\ell^\Hell_r(U_r))$ is given by
\begin{align}
\dhell(\pi, \widetilde\pi^\Hell(U_r)) &\overset{\eqref{eq:defhell}}{=} 1 - \int \sqrt{\frac{\d\pi}{\d\mu}}\,\sqrt{\frac{\d\widetilde\pi^\Hell(U_r)}{\d\mu}} \d\mu \\
&\overset{\eqref{eq:optHell}}{=} 1 - \sqrt{\int \ell_r^\Hell(x_r|U_r) \d\mu_r(x_r)} \;
\label{eq:dhelloptval} ,
\end{align}
where $\mu_r$ is the $r$-dimensional standard Gaussian measure.
}
Furthermore, this squared Hellinger metric satisfies the inequality
\begin{equation}
\label{eq:optHellLoss}
\dhell(\pi, \widetilde\pi^\Hell(U_r))
\leq
1 - \sqrt{\lb 1 - \frac{\trace(H(\pi||\mu)) - \trace(U_r^\top H(\pi||\mu)U_r)}{4}   \rb_+},
\end{equation}
where $H(\pi||\mu)=\E_{\pi}[\nabla \ln (\frac{\d\pi}{\d\mu})\nabla \ln (\frac{\d\pi}{\d\mu})^\top]$ is the Fisher information matrix of $\pi$ relative to $\mu$.
\end{theorem}

The matrix~$U_r^*=[u_1,\hdots,u_r]$ whose columns are the $r$ leading eigenvectors of $H(\pi||\mu)$ is a minimizer of the right-hand side of \eqref{eq:optHellLoss}, just as with the LSI bound for the KL divergence (c.f.~\eqref{eq:CDRerror}). In fact, \cite{Li_Marzouk_Zahm_2023} show these features certify dimension reduction for \emph{all} Amari $\alpha$-divergences with $\alpha \in (0,1]$. This furnishes~$\widetilde\pi^\Hell(U_r^*)$ with the error certificate in the squared Hellinger distance
\begin{equation}
\label{eq:CDRhell}
\dhell(\pi, \widetilde\pi^\Hell(U_r^*)) \leq 1 - \sqrt{ \lb 1 - \frac{1}{4}\sum_{k=r+1}^d \lambda_k(H(\pi||\mu))\rb_+} \;.
\end{equation}
{
In the same spirit as Prop.~\ref{thm:LSI}, one would also like to derive a lower bound to the above using the reverse Poincar\'e inequality~\eqref{eq:poincare}. However, the resulting quantity involves an explicit reference to~$Z_\pi$, and thus is difficult to accurately estimate even with access to Monte Carlo samples of~$\pi$. For this reason, we choose not to include it in our presentation.}

\subsection{Dimensional Improvements to the Poincar\'e Inequality}

There are several formulations of dimensional Poincar\'e inequalities for the standard Gaussian~$\mu$ in the literature, including:
\begin{itemize}
\item the Bobkov-Ledoux~\cite{Bobkov_Ledoux_2009} inequality
\[
\Var_\mu(f) \leq 6 \int \| \nabla f \|_2^2 \d\mu - 6 \int \frac{\| \langle \nabla f, x \rangle \|_2^2}{d + \|x\|_2^2}\d\mu,
\]
\item the Bonnefont-Joulin-Ma inequality~\cite{Bonnefont_Joulin_Ma_2016}
\[
\Var_\mu(f) \leq \frac{d(d+3)}{d-1} \int \frac{\|\nabla f\|_2^2}{1+\|x\|_2^2}\d\mu,
\]
\item and the first inequality of Bolley-Gentil-Guillin~\cite{Bolley_Gentil_Guillin_2018}, who obtain
\[
\Var_\mu(f) \leq \int \|\nabla f\|_2^2 \d\mu - \frac{1}{2d} \lb \int (\|x\|_2^2  - d)f \d\mu \rb^2
\]
by linearizing the dimensional logarithmic Sobolev inequality.\footnote{Linearizing refers to the choice of test function $f_\epsilon = 1 + \epsilon g$, where $\int g \d\mu = 0$. Substituting into~\eqref{eq:dimLSI}, scaling~$\epsilon \to 0$, and matching terms of order $\epsilon^2$ recovers the desired inequality. A similar exercise applied to the LSI~\eqref{eq:LSI} recovers the Poincar\'e inequality~\eqref{eq:poincare}.}
\end{itemize}
These inequalities are not mutually comparable as one does not imply the other.

Although we do not explicitly document this, the Bobkov-Ledoux and the Bonnefont-Joulin-Ma inequalities produce worse majorizations than~\eqref{eq:optHellLoss} obtained from the non-dimensional Poincar\'e inequality~\eqref{eq:poincare}. We speculate this is due to their large constants --- whereas~\eqref{eq:poincare} is sharp for $f(x) = \sum_{k=1}^d x_k$, it is unclear whether the prefactors of both these inequalities can be further optimized.
Furthermore, it can be shown that the Bolley-Gentil-Guillin inequality produces a majorant which explicitly requires the normalizing constant of~$\pi$, and is thus of commensurate difficulty to compute when $\pi$ is known up to a constant.

Instead, we rely on another dimensional improvement to the Poincar\'e inequality proposed by Bolley, Gentil, and Guillin.
\begin{proposition}[Theorem 4.3 in \cite{Bolley_Gentil_Guillin_2018}]
\label{thm:dimpoincare}
The variance of a smooth function $f:\R^d \to \R$ with respect to the standard Gaussian~$\mu=\mathcal{N}(0,I_d)$
satisfies the \emph{dimensional Poincar\'e inequality}
\begin{equation}
\label{eq:dimpoincare}
\Var_\mu(f) \leq \int\| \nabla f \|_2^2 \d\mu - \int \frac{\|f - \int f\d\mu - \langle \nabla f, x \rangle \|_2^2}{d+\|x\|_2^2} \d\mu.
\end{equation}
\end{proposition}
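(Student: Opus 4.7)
My strategy would be semigroup interpolation along the Ornstein--Uhlenbeck semigroup $(P_t)_{t \geq 0}$ with generator $Lg = \Delta g - \langle x, \nabla g\rangle$, whose stationary measure is $\mu$. Combining $\frac{\d}{\d t}(P_t f)^2 = 2 P_t f \cdot L P_t f$ with the OU commutation $\nabla P_t f = e^{-t} P_t \nabla f$ gives the exact representation
\begin{equation*}
\Var_\mu(f) = 2 \int_0^\infty e^{-2t} \int \|P_t \nabla f\|_2^2 \, \d\mu \, \d t.
\end{equation*}
Applying the pointwise bound $\|P_t \nabla f(x)\|_2^2 \leq P_t(\|\nabla f\|_2^2)(x)$ (Cauchy-Schwarz against the Mehler kernel) and noting that $2\int_0^\infty e^{-2t}\,\d t = 1$ recovers the standard Poincar\'e inequality of Proposition~\ref{thm:poincare}. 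Any dimensional refinement must therefore come from sharpening this Cauchy-Schwarz step.

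To produce the correction, I would track the Jensen defect $P_t(\|\nabla f\|_2^2) - \|P_t\nabla f\|_2^2$ using Mehler's formula $P_t \nabla f(x) = \mathbb{E}_Y[\nabla f(e^{-t}x + \sqrt{1-e^{-2t}}\,Y)]$ with $Y \sim \mathcal{N}(0, I_d)$, then perform Gaussian integration by parts in $Y$ to re-express the defect in terms of the scalar \emph{affine defect} $f - \int f\,\d\mu - \langle \nabla f, x\rangle$, which vanishes exactly when $f$ is affine (and accounts for the saturation of Poincar\'e on $f(x)=\langle a,x\rangle+c$). The key pointwise step is a Cauchy-Schwarz estimate in which one pairs the relevant quadratic form in $\nabla^2 P_s f$ against the ``radial/mass'' vector $(x, \sqrt{d}) \in \R^{d+1}$, whose squared norm is precisely $d + \|x\|_2^2$; this single pairing is designed to deliver simultaneously the correct numerator and the correct denominator. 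Substituting back into the semigroup integral and using OU stationarity should then yield the claimed bound.

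The main obstacle is reconstructing this exact pairing. As the paper emphasizes, alternative dimensional refinements of the Poincar\'e inequality (Bobkov-Ledoux, Bonnefont-Joulin-Ma, and the linearization of the dimensional LSI) each commit to a different and weaker correction, so matching Proposition~\ref{thm:dimpoincare} requires isolating the specific affine defect $f - \int f\,\d\mu - \langle \nabla f, x\rangle$ together with the specific weight $(d + \|x\|_2^2)^{-1}$. A useful consistency check is that the inequality becomes an equality on affine $f$, where both sides of the final bound reduce to $\|a\|_2^2$; but separating the radial contribution $\langle x, \nabla f\rangle$ from the tangential one inside the Mehler integral is the technical crux, and where I would expect the bulk of the effort to lie in writing a full proof.
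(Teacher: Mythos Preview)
The paper does not supply its own proof of this proposition: it is stated as Theorem~4.3 of \cite{Bolley_Gentil_Guillin_2018} and simply cited, with no argument reproduced. There is therefore nothing in the paper to compare your proposal against.

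For what it is worth, your semigroup-interpolation strategy via the Ornstein--Uhlenbeck flow is the natural one and is in the spirit of the original reference. But as you yourself acknowledge, the proposal is not a proof: the crucial step---extracting precisely the affine defect $f-\int f\,\d\mu-\langle\nabla f,x\rangle$ with the weight $(d+\|x\|_2^2)^{-1}$ from the Jensen gap of the Mehler representation---is only described at the level of ``one should pair against the vector $(x,\sqrt{d})$,'' without the actual computation that shows this pairing closes. Until that step is written out explicitly, the argument remains a plausible outline rather than a verified derivation.
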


\begin{remark}
We are not aware of a \emph{reverse} dimensional Poincar\'e inequality. The proof technique used in~\cite[Thm 6.7.3]{Bakry2014} to derive the reverse dimensional LSI relies on an algebraic identity specific to the LSI entropy function $x \mapsto x \ln x$, which cannot be adapted here. Alternatively, linearizing the reverse dimensional LSI merely recovers the (non-dimensional) reverse Poincar\'e inequality in Proposition~\ref{thm:poincare}.
\end{remark}

Evidently~\eqref{eq:dimpoincare} improves on the Poincar\'e inequality, although the difference vanishes in the limit as $d \to \infty$. The dimensional Poincar\'e inequality~\eqref{eq:dimpoincare} is also tight for the test function $f(x) = \sum_{k=1}^d x_k$, which can be seen from direct calculation.

We apply the dimensional Poincar\'e inequality to obtain a majorization for the squared Hellinger approximation error. The proof can be found in Appendix~\ref{sec:dimCDRhellproof}.

\begin{theorem}
\label{thm:dimCDRhell}
Under the same setting as Theorem \ref{thm:optHell}, we have
\begin{equation}
\label{eq:dimCDRhell}
\dhell(\pi, \,\widetilde\pi^\Hell(U_r))
\leq
1 - \sqrt{\lb 1 - \frac{\trace(H(\pi||\mu)) - \trace(U_r^\top H(\pi||\mu)U_r)}{4}  + \delta_r(U_r, \,\dhell(\pi, \,\widetilde\pi^\Hell(U_r)) ) \rb_+}
\end{equation}
with
\begin{equation}
\label{eq:dimhelldelta}
\delta_r(U_r, y) = \frac{(1-(1-y)^2 - \frac{1}{2}\trace(M(\pi)) +\frac{1}{2}\trace(U_r^\top M(\pi) U_r) +\frac{1}{2}(d-r))^2}{\trace(M(\pi)) - \trace (U_r^\top M(\pi) U_r) + d-r},
\end{equation}
where $H(\pi||\mu)=\E_{\pi}[\nabla \ln (\frac{\d\pi}{\d\mu})\nabla \ln (\frac{\d\pi}{\d\mu})^\top]$ is the Fisher information matrix of $\pi$ relative to $\mu$ and $M(\pi) = \mathbb{E}_\pi[XX^\top]$ the second moment matrix.
\end{theorem}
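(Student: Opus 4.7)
The starting point is to obtain a closed-form expression for the squared Hellinger error in terms of a conditional expectation residual. Setting $f = \sqrt{\d\pi/\d\mu}$ and using the optimality of the profile $\ell_r^\Hell$ in~\eqref{eq:optHell}, a short calculation (already implicit in the derivation of Theorem~\ref{thm:optHell}) yields
\[
\dHell(\pi, \widetilde\pi^\Hell(U_r)) = 1 - \sqrt{1 - \|h\|_{L^2(\mu)}^2}, \qquad h := f - \E_\mu[f \mid U_r^\top X].
\]
The task therefore reduces to producing a sharp upper bound on $\|h\|_{L^2(\mu)}^2$.

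The natural next step is to apply the dimensional Poincar\'e inequality of Proposition~\ref{thm:dimpoincare} to the conditional measure of $X$ given $U_r^\top X = \theta_r$. Since $\mu = \mathcal{N}(0,I_d)$, this conditional is a standard Gaussian on the $(d-r)$-dimensional subspace $\mathrm{range}(U_\perp)$ parametrized by $z = U_\perp^\top x$. Applying~\eqref{eq:dimpoincare} in dimension $d-r$ to $z \mapsto f(U_r\theta_r + U_\perp z)$ and then integrating over $\theta_r$ against the marginal of $U_r^\top X$ produces
\[
\|h\|_{L^2(\mu)}^2 \leq \E_\mu\bigl[\|U_\perp^\top \nabla f\|_2^2\bigr] - \E_\mu\!\left[\frac{(h - \langle U_\perp^\top \nabla f, U_\perp^\top X\rangle)^2}{(d-r) + \|U_\perp^\top X\|_2^2}\right].
\]
The leading gradient term reproduces the (non-dimensional) majorant of Theorem~\ref{thm:optHell}: using $\nabla f = \tfrac{f}{2}\nabla\ln(\d\pi/\d\mu)$, it equals $\tfrac14 \trace((I_d - U_rU_r^\top) H(\pi\|\mu))$. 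The new content is the subtracted correction term $I$, which must be lower bounded by the computable quantity $\delta_r$ in~\eqref{eq:dimhelldelta}.

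The central difficulty --- and the only non-routine step --- is massaging $I$ into an expression involving $M(\pi)$ rather than moments of $\mu$. A naive Cauchy--Schwarz of the form $\E_\mu[G^2/w] \geq (\E_\mu[G])^2/\E_\mu[w]$ with constant test function produces the denominator $\E_\mu[(d-r)+\|U_\perp^\top X\|^2] = 2(d-r)$, which does not match $\delta_r$. The right move is a \emph{weighted} Cauchy--Schwarz,
\[
\E_\mu\!\left[\frac{G^2}{w}\right] \geq \frac{(\E_\mu[G f])^2}{\E_\mu[f^2 w]},
\]
with $G = h - \langle U_\perp^\top \nabla f, U_\perp^\top X\rangle$ and $w = (d-r)+\|U_\perp^\top X\|_2^2$. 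The weight $f^2 = \d\pi/\d\mu$ converts the denominator into $\E_\pi[w] = (d-r) + \trace((I_d - U_rU_r^\top) M(\pi))$, exactly matching the $\delta_r$ denominator. For the numerator, $\E_\mu[Gf]$ splits as $\E_\mu[hf] - \E_\mu[f\langle U_\perp^\top \nabla f, U_\perp^\top X\rangle]$: the first piece equals $\|h\|_{L^2(\mu)}^2$ by the tower property (since $\E_\mu[h \mid U_r^\top X] = 0$), while the second is $\tfrac12 \E_\mu[\langle \nabla f^2, U_\perp U_\perp^\top X\rangle]$, which Gaussian integration by parts evaluates to $\tfrac12(\E_\pi[\|U_\perp^\top X\|^2] - (d-r))$.

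Assembling these ingredients, and substituting $\|h\|_{L^2(\mu)}^2 = 1 - (1-\dHell(\pi, \widetilde\pi^\Hell(U_r)))^2$ into the numerator, recovers exactly the expression~\eqref{eq:dimhelldelta}; inverting the relationship between $\|h\|^2$ and $\dHell$ then yields the implicit majorization~\eqref{eq:dimCDRhell}. I expect the weighted Cauchy--Schwarz --- specifically the choice to weight by $f^2$ to convert a $\mu$-denominator into a $\pi$-denominator --- to be the step most likely to be missed on a first attempt, and also the step that explains why the resulting bound is implicit in $\dHell$ through the $\|h\|^2$ appearing in the numerator of $\delta_r$.
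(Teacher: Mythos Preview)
Your proposal is correct and follows essentially the same route as the paper. The only cosmetic difference is in the key inequality for the correction term: the paper first factors $f^2$ out of the numerator to rewrite the correction as a $\pi$-expectation and then applies Jensen's inequality for the jointly convex map $(a,b)\mapsto a^2/b$ (their Lemma~\ref{thm:jointconvexity}), whereas you phrase the same step as a weighted Cauchy--Schwarz under $\mu$ with weight $f$. Since $G_{\text{student}} = f\,G_{\text{paper}}$, both formulations yield the identical bound $\E_\pi[G_{\text{paper}}^2/w] \geq (\E_\pi[G_{\text{paper}}])^2/\E_\pi[w]$, and your subsequent evaluations of $\E_\mu[hf]=\|h\|^2$ and of the gradient term via Gaussian integration by parts match the paper's computations exactly.
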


\begin{remark}
 For an isotropic measure $\pi$ such that $M(\pi)=I_d$ we obtain
\[
\delta_r(U_r, y) = \frac{(1-(1-y)^2))^2}{2(d-r)}.
\]
\end{remark}

Observe that the function $\R^{d \times r} \times \R \ni (U_r, y) \mapsto \delta_r(U_r, y) \in \R$ defined in~\eqref{eq:dimhelldelta} is strictly positive --- the numerator is evidently positive, while the denominator is positive since $M(\pi) \succ 0$. By inspection, this implies that the majorant~\eqref{eq:dimCDRhell} improves upon the majorant~\eqref{eq:optHellLoss}. Unfortunately,  the squared Hellinger loss appears in both the left and right hand side of~\eqref{eq:dimCDRhell}, and is therefore difficult to optimize to determine linear features~$U_r$. Instead, one might hope to \emph{evaluate}~\eqref{eq:dimCDRhell} with the features~$U_r^*$ minimizing~\eqref{eq:optHellLoss} to obtain a tighter certificate for the approximation error. Surprisingly, to do so one cannot simply bootstrap the upper bound from~\eqref{eq:optHellLoss} to further bound the right hand side of~\eqref{eq:dimCDRhell}--- instead, one requires a~\emph{lower bound} on the approximation error in order to leverage the improvements from~\eqref{eq:dimCDRhell}.
Although non-vacuous lower bounds can be obtained from applying the reverse Poincar\'e inequality~\eqref{eq:poincare}, we do not expect such bounds to be easily computable.
It therefore remains an open question as to how we can make use of the improvements brought on by the dimensional Poincar\'e inequality, and we
leave exploring this direction to future work.

\appendix

\section{Necessary and sufficient conditions for first order optimality of~\eqref{eq:dimCDR_up}}
\label{sec:firstorder}

Encoding the orthonormality constraint into~\eqref{eq:dimCDR_up} with Lagrange multiplier $S=S^\top \in \R^{r \times r}$ and ignoring constant terms yields the Lagrangian
\[
L(U_r, S) = \frac{  \trace\lb U_r^\top M(\pi) U_r \rb + \ln\det\lb U_r^\top H(\pi)^{-1} U_r \rb + \trace \lb S (I_r - U_r^\top U_r)\rb }{2}.
\]
First-order optimality asserts that
\begin{equation}
\label{eq:lagrangefirst}
\nabla_{U_r} L(U_r, S) = M(\pi)U_r + H(\pi)^{-1} U_r \lb U_r^\top H(\pi)^{-1} U_r \rb^{-1} - U_r S = 0
\end{equation}
for all orthonormal $U_r$. Left multiplying~\eqref{eq:lagrangefirst} by $U_r^\top$ thus recovers
$
S = U_r^\top M(\pi) U_r + I_d,
$
and~\eqref{eq:lagrangefirst} is equivalently
\begin{equation}
\label{eq:fixedpoint}
U_r = (I_d - U_rU_r^\top)M(\pi) U_r + H(\pi)^{-1} U_r \lb U_r^\top H(\pi)^{-1} U_r \rb^{-1}.
\end{equation}
Re-arranging~\eqref{eq:fixedpoint} also gives the alternative fixed point condition
\begin{equation}
\label{eq:fixedpoint2}
U_r = H(\pi)\lb I_d - U_\perp U_\perp^\top M(\pi)\rb U_r (U_r^\top H(\pi)^{-1} U_r).
\end{equation}
We are not aware of any closed form solutions to~\eqref{eq:fixedpoint} or~\eqref{eq:fixedpoint2}.

{Left-multiplying \eqref{eq:fixedpoint2} by the transpose of the orthonormal matrix $U = [U_r, \; U_\perp]$, we obtain the system of equations}
\begin{align}
U_r^\top \left( H(\pi)( I_d - U_\perp U_\perp^\top M(\pi)) \right) U_r (U_r^\top H(\pi)^{-1}U_r) &= I_r \label{eq:critA} \\
U_\perp^\top \left( H(\pi)( I_d - U_\perp U_\perp^\top M(\pi)) \right) U_r &= 0 \; . \label{eq:critB}
\end{align}
Equation~\eqref{eq:critA} implies that $U_r^\top \left( H(\pi)( I_d - (I_d - U_rU_r^\top) M(\pi)) \right) U_r $ is symmetric, as it is the inverse of the symmetric matrix $U_r^\top H(\pi)^{-1} U_r$. Writing out this symmetry condition and re-arranging slightly obtains
\[
U_r^\top (H(\pi)M(\pi) - M(\pi) H(\pi)) U_r = (U_r^\top H(\pi) U_r)(U_r^\top M(\pi) U_r) - (U_r^\top M(\pi) U_r)(U_r^\top H(\pi) U_r) \;.
\]
{Introducing the matrix commutator notation $[A,B] = AB-BA$, expanding $H(\pi) = H(\pi||\mu)+2I_d - M(\pi)$ by~\eqref{eq:FIMidentity}, and noting $[M(\pi)-2I_d,\, M(\pi)] = 0 = [U_r^\top(M(\pi)-2I_d)U_r, \, U_r^\top M(\pi) U_r]$, the expression above can be succinctly written as
\begin{equation}
U_r^\top [H(\pi||\mu), M(\pi)]\, U_r = [U_r^\top H(\pi||\mu) U_r,\, U_r^\top M(\pi) U_r] \;. \label{eq:critA2}
\end{equation}
Applying the same expansion for $H(\pi)$ into~\eqref{eq:critB}, we have
\begin{equation}
\label{eq:critB2}
    U_\perp^\top (H(\pi||\mu)-M(\pi)) U_r - U_\perp^\top (H(\pi||\mu)+2I_d - M(\pi)) U_\perp U_\perp^\top M(\pi) U_r = 0
\end{equation}
}

If $U_r$ is a critical point of~\eqref{eq:dimCDR_up}, then it necessarily must satisfy~\eqref{eq:critA2}. Equation~\eqref{eq:critA2} can be satisfied if $U_r$ contains any $r$ eigenvectors of $H(\pi||\mu)$ or $M(\pi)$, and we examine both cases below:
\begin{enumerate}
\item suppose $U_r$ contains any $r$ eigenvectors of the second moment matrix~$M(\pi)$, so that $M(\pi)U_r = U_r S$ where $S = \mathrm{diag}(s_1, \ldots, s_r)$. Then $U_\perp^\top M(\pi) U_r = 0$ and~\eqref{eq:critB2} becomes
\[
U_\perp^\top H(\pi||\mu) U_r = 0\,.
\]
In general, this condition cannot be satisfied as vectors cannot be simultaneously orthogonal, $M(\pi)$-orthogonal, and $H(\pi||\mu)$-orthogonal, unless any pairs of these matrices commute.
\item suppose $U_r$ contains any $r$ eigenvectors of $H(\pi||\mu)$ such that $H(\pi||\mu)U_\perp = U_\perp T_\perp$, {where $T_\perp$ is the diagonal matrix of the corresponding eigenvalues}. Since $H(\pi||\mu)$ and $M(\pi)$ can be computed numerically for Gaussian~$\pi$ and~$\mu$, we computed these eigenvectors numerically and substituted into~\eqref{eq:critB2}. In general, we observed that this necessary condition is not satisfied.
\end{enumerate}
Both observations suggest that although the leading eigenvectors of the relative Fisher information matrix are globally optimal for~\eqref{eq:CDRbound}, they are not even local minima of~\eqref{eq:dimCDR_up}, even when~$\pi$ and~$\mu$ are Gaussian. As such, since closed form solutions are not available,  numerical optimization is required to obtain critical points of~\eqref{eq:dimCDR_up}.

\begin{remark}
Observe that~\eqref{eq:critA} allows the component $(U_r^\top H(\pi)^{-1} U_r)$ appearing in~\eqref{eq:dimCDR_up} to be replaced by a function involving only $H(\pi)$ and $M$ at optimality, which does not require computing matrix inverses. This may offer computational speedups, but we do not investigate this further in the current work.
\end{remark}

\section{Auxiliary Matrix Determinant Identity}
\label{sec:matrixdeterminant}

For any matrix $U_r \in \R^{d \times r}$ with orthonormal columns, and any orthonormal complement $U_\perp$ to $U_r$, Sylvester's determinant theorem allows us to write
\begin{align*}
&\det \lb U_\perp^\top H(\pi) U_\perp \rb \\
& = \det \lb U_\perp^\top (H(\pi) - I_d) U_\perp + I_{d-r}\rb  = \det \lb (H(\pi) - I_d) (U_\perp U_\perp^\top) + I_d \rb \\
& = \det \lb (H(\pi) - I_d) (I_d - U_r U_r^\top) + I_d \rb  = \det \lb H(\pi) -  H(\pi) \, U_r U_r^\top - (I_d - U_r U_r^\top) + I_d \rb\\
& = \det \lb H(\pi) -  H(\pi) \, U_r U_r^\top + U_r U_r^\top \rb = \det \lb H(\pi) \rb \det \lb I_d -  U_r U_r^\top +  H(\pi)^{-1} U_r U_r^\top \rb\\
& = \det \lb H(\pi) \rb \det \lb I_d +  (H(\pi)^{-1} U_r - U_r)U_r^\top \rb = \det \lb H(\pi) \rb \det \lb I_r +  U_r^\top (H(\pi)^{-1} U_r - U_r) \rb \\
& = \det \lb H(\pi) \rb \det \lb U_r^\top H(\pi)^{-1} U_r \rb,
\end{align*}
assuming $H(\pi)$ is invertible.

\section{Derivation of Functional Inequalities}

\subsection{Gaussian Logarithmic Sobolev Inequality}
\label{sec:proofLSI}
Numerous derivations of the Gaussian logarithmic Sobolev inequality are known, but we prefer the presentation of \cite{Bakry2014} as this also conveniently encapsulates the \emph{reverse} LSI. The following is a condensed version of \cite[Thm 5.5.2]{Bakry2014}, which is sufficient to obtain Proposition~\ref{thm:LSI}.
\begin{theorem}[Local Logarithmic Sobolev Inequality]
\label{thm:localLSI}
A Markov diffusion process $\{X_t,~t \geq 0\}$ satisfies a curvature dimension inequality $\textsc{CD}(0,\infty)$ if and only if it satisfies
the \emph{local logarithmic Sobolev inequality}
\begin{equation}
\label{eq:localLSI}
P_t(f \ln f) - P_tf \ln P_tf
\leq t P_t \lb \frac{\Gamma f}{f}\rb
\end{equation}
and the \emph{reverse local logarithmic Sobolev inequality}
\begin{equation}
\label{eq:revlocalLSI}
P_t(f \ln f) - P_tf \ln P_tf \geq t\, \frac{\Gamma(P_t f)}{P_tf }
\end{equation}
for smooth positive functions $f:\R^d \to \R_+$.
\end{theorem}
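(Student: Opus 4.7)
The plan is to follow the standard interpolation argument along the Markov semigroup at the heart of the Bakry--Émery calculus. Fix $t > 0$ and a smooth positive $f$, and introduce the interpolating functional
\[
\Psi(s) := P_s\!\left( (P_{t-s} f)\, \ln (P_{t-s} f) \right), \qquad s \in [0, t].
\]
By construction $\Psi(t) - \Psi(0) = P_t(f \ln f) - (P_t f)\ln(P_t f)$, which is the common left-hand side of both \eqref{eq:localLSI} and \eqref{eq:revlocalLSI}. A direct computation, using $\partial_s P_s = L P_s$ together with the diffusion chain rule $L(\phi(g)) = \phi'(g)\, L g + \phi''(g)\, \Gamma(g)$ applied to $\phi(u) = u \ln u$, yields
\[
\Psi'(s) \;=\; P_s\!\left( \frac{\Gamma(P_{t-s} f)}{P_{t-s} f} \right),
\]
so in particular $\Psi'(0) = \Gamma(P_t f)/P_t f$ and $\Psi'(t) = P_t(\Gamma f / f)$ --- precisely the two right-hand sides of interest.

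The next step is to show that $s \mapsto \Psi'(s)$ is non-decreasing on $[0, t]$ under $\textsc{CD}(0, \infty)$. Setting $g_s := P_{t-s} f$ and differentiating once more, the diffusion chain rules for $L$ and for $\Gamma$ applied to the ratio $\Gamma(g)/g$ produce several mixed terms that cancel by the diffusion property, leaving $\Psi''(s)$ as a positive multiple of $P_s(\Gamma_2(g_s)/g_s)$. Since $\textsc{CD}(0, \infty)$ is exactly the pointwise statement $\Gamma_2 \geq 0$, this gives $\Psi'' \geq 0$, so $\Psi'$ is non-decreasing. The fundamental theorem of calculus then yields the sandwich
\[
t\, \Psi'(0) \;\leq\; \int_0^t \Psi'(s)\, ds \;=\; \Psi(t) - \Psi(0) \;\leq\; t\, \Psi'(t),
\]
which is exactly the conjunction of \eqref{eq:revlocalLSI} and \eqref{eq:localLSI}.

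For the converse direction, I would Taylor expand both inequalities as $t \to 0^+$ using $P_t f = f + t L f + \tfrac{t^2}{2} L^2 f + O(t^3)$. The terms of orders $t^0$ and $t^1$ match identically on both sides, while the order-$t^2$ coefficient reduces, after the diffusion identity and an integration by parts in the carré du champ, to a nonnegative multiple of $\Gamma_2(f)/f$. Requiring the (resp. reverse) inequality for all smooth positive $f$ is then equivalent to $\Gamma_2 \geq 0$, i.e., to $\textsc{CD}(0, \infty)$. The two local inequalities produce the same second-order condition, so each is separately equivalent to $\textsc{CD}(0,\infty)$ and the claimed iff follows.

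The main obstacle is the second differentiation producing the identity $\Psi''(s) \propto P_s(\Gamma_2(g_s)/g_s)$: it requires applying the chain rules for $L$ and $\Gamma$ to the function $\Gamma(g)/g$ and carefully exploiting the diffusion property to see the cancellations that expose $\Gamma_2$. Once this identity is in hand, both directions of the equivalence follow cleanly from monotonicity of $\Psi'$ and a short Taylor expansion.
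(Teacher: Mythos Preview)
The paper does not actually prove this theorem: it is stated as ``a condensed version of \cite[Thm 5.5.2]{Bakry2014}'' and then specialized to the heat semigroup to derive Proposition~\ref{thm:LSI}. Your proposal is precisely the standard Bakry--\'Emery interpolation argument from that reference, so the approach matches what the paper implicitly relies on.

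One small correction worth flagging: the second derivative does not come out as a multiple of $P_s(\Gamma_2(g_s)/g_s)$. The clean identity is
\[
\Psi''(s) \;=\; 2\,P_s\!\left( g_s\,\Gamma_2(\log g_s) \right),
\]
obtained most easily by writing $\Gamma(g)/g = g\,\Gamma(\log g)$ and applying the diffusion chain rules to $\log g$ rather than to $g$ itself. This does not affect your argument, since $\textsc{CD}(0,\infty)$ asserts $\Gamma_2(h)\geq 0$ for \emph{all} smooth $h$ and in particular for $h=\log g_s$; but the expression you wrote would not emerge from the computation, and getting the form right also makes the converse Taylor expansion cleaner (it produces $\Gamma_2(\log f)\geq 0$ at order $t^2$, which by substituting $f=e^h$ recovers $\Gamma_2(h)\geq 0$).
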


We emphasize that the presentation as
\emph{local} logarithmic Sobolev inequalities,
where `local' refers to the law of the diffusion processes at time $t < \infty$,
is key to obtaining the reverse inequality. Although it is
customary to present the ergodic ($t \to \infty$) formulation of the logarithmic Sobolev inequality, this limit leads to a vacuous reverse inequality.

The central idea towards converting Eqs~\eqref{eq:localLSI} and \eqref{eq:revlocalLSI} into Proposition~\ref{thm:LSI} is to specialize to $d$-dimensional Brownian motion
\begin{equation}
\label{eq:heatprocess}
\d X_t = \d W_t, \quad X_0 = x_0 \;.
\end{equation}
While Brownian motion does not have an invariant measure, at any time $t > 0 $ the law of $X_t$ is isotropic Gaussian with mean~$x_0$ and marginal variance~$t$. It is straightforward to verify that this Markov diffusion process satisfies the $\mathrm{CD}(0,d)$ curvature dimension inequality\footnote{also synonymously referred to as the \emph{Bakry-\'Emery criterion}.}; see \cite[Definition 1.16.1]{Bakry2014} for precise details. In addition, note that
if a diffusion process satisfies $\textsc{CD}(\rho, n)$, then it also satisfies $\textsc{CD}(\rho', n')$ for any $n' \geq n$ and $\rho' \leq \rho$, hence standard Brownian motion satisfies the local logarithmic Sobolev inequality.

The \emph{Markov semigroup} operator of Brownian motion is given by
$
P_tf(x) := \E[ f(X_t) \mid X_0 = x] = \E_{Z \sim \mcN(0,I_d)}[f(x+tZ)],
$
while its \emph{carr\'e du champ} operator is
\begin{equation}
\label{eq:carreduchamp}
\Gamma f := \Gamma(f,f) = \frac{1}{2} \| \nabla f \|_2^2.
\end{equation}
Accordingly, fix $t = 1$ and, without loss of generality, consider the initial condition $x_0 = 0$ since, alternatively, we can choose the test functions $g(\cdot) = f(\cdot - x_0)$). Then, the law of $X_1 | X_0 = x_0$ is the $d$ dimensional isotropic Gaussian~$\mu$, and we have $\Ent_\mu(f) = \Ent_{P_1}(f)$, which matches the left hand side of both~\eqref{eq:localLSI} and~\eqref{eq:revlocalLSI}. Substituting the carr\'e du champ~\eqref{eq:carreduchamp} into~\eqref{eq:localLSI} and simplifying via the logarithmic derivative obtains the desired upper bound~\eqref{eq:LSI}. To obtain the reverse inequality, note $P_1f(x_0) = \E_{Z \sim \mu}[f(x_0+Z)]$ so that $\nabla_x P_1f(x_0) = \E_{Z \sim \mu}[\nabla_z f(x_0+Z)]$. Evaluating the carr\'e du champ $\Gamma P_1f(0) = \frac{1}{2} \| \E_{\mu}[\nabla f] \|_2^2$ obtains~\eqref{eq:revlocalLSI} and concludes the proof.

\subsection{Dimensional Gaussian Logarithmic Sobolev Inequality}\label{sec:fun_inequalities}

A proof of the dimensional logarithmic Sobolev inequality, and its reverse, for general Markov diffusion processes satisfying specific curvature-dimension inequalities is provided in \cite[Thm 6.7.4]{Bakry2014}. Specifying to Brownian motion, as above, then recovers the weaker dimensional Gaussian LSI of~\cite{Bakry_Ledoux_2006}.

Instead, we choose to follow the original presentations in Bakry and Ledoux \cite{Bakry_Ledoux_2006} and Dembo~\cite{Dembo_1990}, as their approach clarifies the dependence on the Gaussianity of the reference measure. Neither reference documents the reverse inequality, but these authors were undoubtedly aware of its existence. Since we were unable to find this result explicitly in the literature, we provide its statement and proof below.

The proof proceeds by transforming the Gaussian LSI into Euclidean form by choosing the test function $f(x) = ((2\pi)^{-d/2}\exp(-\frac{1}{2}\|x\|_2^2))^{-1} g(x)$, with $g : \R^d \to \R_+$ normalized as $\int g(x) \d\x = 1$, so that $\int f \d\mu = 1$.
Applying this~$f$ to the inequalities in~\eqref{eq:LSI}, {and recalling $\d\mu =(2\pi)^{-d/2}\exp(-\frac{1}{2}\|x\|_2^2) \d x$}, yields the \emph{Euclidean logarithmic Sobolev inequality}
\begin{equation}
\label{eq:euclideanLSI}
\int g(x)\ln g(x) \d x \leq \frac{1}{2} \int \frac{\| \nabla g(x) \|_2^2}{g(x)}\d x - \frac{d}{2}\ln(2\pi) - d,
\end{equation}
as well as the \emph{reverse Euclidean logarithmic Sobolev inequality}
\begin{equation}
\label{eq:reveuclideanLSI}
\int g(x)\ln g(x) \d x \geq \frac{1}{2} \left\| \int x g(x)\d x \right\|_2^2 - \frac{d}{2}\ln(2\pi) - \frac{1}{2} \int g \|x\|_2^2 \d x,
\end{equation}
{where we have used that $\int \tfrac{\partial}{\partial_{x_k}} g(x) \d\x = 0$ for all $1 \leq k \leq d$ since $g$ is assumed to be $L_1$ integrable.}

The goal is to bootstrap this to obtain dimensional improvements to the Euclidean LSI. The insight of~\cite{Bakry_Ledoux_2006, Dembo_1990} is to recognize that the test functions $g$ are probability density functions. This suggests considering $g(x) = (T^{-1})_\# f(x) = |\det \nabla T(x)| f(T(x))$ for $f:\R^d \to \R_+$ and $\int f(x)\d\x=1$, i.e., the push-forward of a probability density $f$ under diffeomorphisms $T^{-1} : \R^d \to \R^d$, and optimizing the bounds with respect to $T$ (or $T^{-1}$). Specifically, Bakry and Ledoux considered the map $T(x) = \sigma x$, optimizing over the scalar parameter $\sigma > 0$. Dembo~\cite{Dembo_1990} considered the more general construction of maps $T(x) = Ax$ with symmetric positive definite matrices $A = A^\top \succ 0$.

{Applying the test function $g = (T^{-1})_\# f$  with~$T(x)=Ax$ for some symmetric~$A\succ 0$ into~\eqref{eq:euclideanLSI}}, we obtain
\[
\int f(x)\ln f(x)\d\x \leq \frac{1}{2} \trace \lb A^2 \lb \int \frac{\nabla f(x)^{\otimes 2}}{f(x)} \d\x \rb \rb - \frac{d}{2}\ln(2\pi) - d - \frac{1}{2}\ln \det(A^2) \;.
\]
{Noting that the matrix~$A$ does not appear in the left hand side, we can minimize the right hand side with respect to this degree of freedom,}
and straightforward computation shows that $A^{-2} = \int (\nabla \ln f(x))^{\otimes 2} f(x)\d\x$ achieves the minimum. Substituting this optimal choice of~$A$ above, we obtain the \emph{dimensional} Euclidean LSI
\begin{equation}
\label{eq:dimEuclLSI}
\int f(x)\ln f(x)\d\x \leq \frac{1}{2} \ln \det \lb \int \frac{\nabla f(x)^{\otimes 2}}{f(x)} \d\x \rb  - \frac{d}{2}\ln(2\pi e).
\end{equation}
The same choice of test function~$g = (T^{-1})_\# f$ applied to~\eqref{eq:reveuclideanLSI} yields
\[
\int f(x)\ln f(x)\d\x \geq \frac{1}{2} \left\| A^{-1} \int x f(x)\d\x \right\|_2^2 - \frac{d}{2} \ln(2\pi) - \frac{1}{2} \int \|A^{-1}x\|_2^2 f(x)\d\x - \ln\det A.
\]
Following the same observation as above, the lower bound is maximized with $A^2 = \int x^{\otimes 2} f(x)\d\x - \lb \int xf(x)\d\x\rb^{\otimes 2}$, and at optimality  we obtain the \emph{dimensional} reverse Euclidean LSI
\begin{equation}
\label{eq:revdimEuclLSI}
\int f(x)\ln f(x)\d\x \geq -\frac{1}{2} \ln \det \lb \int x^{\otimes 2} f(x)\d\x - \lb \int xf(x)\d\x\rb^{\otimes 2}\rb -\frac{d}{2}\ln(2\pi e).
\end{equation}

It remains to convert these inequalities for the Gaussian measure by considering test functions $f(x) = g(x) (2\pi)^{d/2}\exp(-\frac{1}{2}\|x\|_2^2)$, normalized according to $\int f(x)\d\x = 1$ so that $\int g\d\mu = 1$. Substituting this into~\eqref{eq:dimEuclLSI} obtains~\eqref{eq:dimLSI}, and~\eqref{eq:revdimEuclLSI} obtains~\eqref{eq:revdimLSI}, by applying the Gaussian integration by parts identity $\int x_k g(x)\d\mu(x) = \int \tfrac{\partial}{\partial_{x_k}} g(x)\d\mu(x)$.

\section{Miscellaneous Proofs}

\subsection{Proof of Lower Bound in Theorem~\ref{thm:CDR}}
\label{sec:lowerCDRproof}

We only detail the proof of the lower bound in Theorem~\ref{thm:CDR} below since the upper bound is extensively documented in \cite{ZCLSM22}.

\begin{proof}
{Given a matrix $U_r\in\R^{d\times r}$ with orthonormal columns and $x \in \R^d$, we can write $x = U_r x_r + U_\perp x_\perp$ for some $x_r \in \R^r$ and $x_\perp \in \R^{d-r}$, and $U_\perp$ is any orthonormal complement of~$U_r$.
Consider the reverse LSI~\eqref{eq:LSI} for the $(d-r)$ dimensional Gaussian measure $\mu_{\perp}$, which states that
\begin{equation}
\label{eq:lsilowertmp}
 \frac{1}{2 \int f \d\mu_{\perp}}\left\| \int \nabla_{x_\perp} f \d\mu_\perp \right\|^2 \leq \int f \ln f \d\mu_\perp
\end{equation}
for sufficiently smooth positive functions $x_\perp \mapsto f(x_\perp)$.
Choose the test function
\begin{equation}\label{eq:def_f_proof}
    x_\perp \mapsto f(x_r, x_\perp)
 = \frac{\ell( U_r x_r + U_\perp x_\perp )}{\ell^\KL_r(x_r)} ,
\end{equation}
where we assume $\ell:\R^d\rightarrow\R_+$ is normalized so that $d\pi = \ell \d\mu$, and $\ell^\KL_r$ as in \eqref{eq:optKL}.
Note by definition of $\ell^\KL_r$ we have $\int f \d\mu_\perp = 1$.
With this choice of~$f$, we integrate both sides of inequality~\eqref{eq:lsilowertmp} with respect to $x_r$ against the probability measure $\d \pi_r(x_r) = \ell^\KL_r(x_r) \d\mu_r(x_r)$, where $\mu_r$ is the $r$-dimensional standard Gaussian measure.
The right-hand side~\eqref{eq:lsilowertmp} becomes
\begin{align}
\label{eq:tmp42609}
\int \lb \int f \ln f  \d\mu_{\perp } \rb \d\pi_r
&=\int   \ell(x) \ln  \frac{\ell(x)}{\ell^\KL_r(U_r^\top x)}   \d\mu(x)
 = \Dkl(\pi||\pioptKLr(U_r)) .
\end{align}
Using Jensen's inequality, the left-hand side of~\eqref{eq:lsilowertmp} satisfies
\begin{align*}
 \int \lb \frac{1}{2}\left\| \int \nabla_{x_\perp} f \d\mu_\perp \right\|^2 \rb \d\pi_r
 &\geq \frac{1}{2}\left\| \int \lb \int \nabla_{x_\perp} f \d\mu_\perp  \rb \d\pi_r  \right\|^2\\
 &=  \frac{1}{2}\left\| \int \lb \frac{1}{\ell^\KL_r} \int U_\perp^\top \nabla \ell  \d\mu_\perp  \rb \ell^\KL_r \d\mu_r  \right\|^2
 =  \frac{1}{2}\left\| U_\perp^\top \int \nabla \ln \ell  \d\pi   \right\|^2 \;,
\end{align*}
where in the first equality we applied the chain rule and the change of measure formula to~$d\pi_r$.
Noting
$$
 \int \nabla \ln \ell(x)  \d\pi(x) \overset{\d\pi=\ell\d\mu}{=} \int \nabla  \ell(x)  \d\mu(x) \overset{\text{Stein's identity} }{=} \int x \ell(x)\,\d\mu(x)  = \int x \, \d\pi(x)
$$
and combining the above relations yields the lower bound in \eqref{eq:CDRbound}.
}
\end{proof}

\subsection{Proof of Theorem~\ref{thm:dimCDR}}
\label{sec:dimCDRproof}

\begin{proof}
{Given a matrix $U_r\in\R^{d\times r}$ with orthonormal columns and $x \in \R^d$, we can write $x = U_r x_r + U_\perp x_\perp$ for some $x_r \in \R^r$ and $x_\perp \in \R^{d-r}$, and $U_\perp$ is any orthonormal complement of~$U_r$.  Consider the dimensional LSI \eqref{eq:dimLSI} with the $d-r$ dimensional standard Gaussian measure $\mu_{\perp}$, where
\begin{align}
\label{eq:tmp36289}
\int f \ln f  \d\mu_{\perp }
&\leq \frac{1}{2}\int \|x_\perp\|^2 f \d\mu_{\perp } - \frac{d-r}{2} +\frac{1}{2}\ln\det \lb \int \lb \nabla_{x_\perp} \ln f-x_\perp\rb^{\otimes 2}f \d\mu_\perp \rb \;.
\end{align}
As in the proof of Proposition~\ref{thm:LSI}, we choose the function $f:\R^{d-r}\rightarrow \R_+$ defined as in \eqref{eq:def_f_proof} with $\ell$ assumed to be normalized such that $d\pi=\ell\d\mu$.

With this choice of~$f$, we integrate the inequality above with respect to $x_r$ against the probability measure $\d\pi_r = \widetilde\ell^\KL_r \d\mu_r$, where $\mu_r$ is the $r$-dimensional standard Gaussian measure. Following the same computations as in~\eqref{eq:tmp42609}, the left-hand side of \eqref{eq:tmp36289} becomes $\int \lb \int f \ln f  \d\mu_{\perp } \rb \d\pi_r  = \Dkl(\pi||\pioptKLr(U_r))$.
For the first term in the right-hand of \eqref{eq:tmp36289}, we obtain
\begin{align*}
\int \lb \frac{1}{2}\int \|x_\perp\|^2 f \d\mu_{\perp } \rb \d\pi_r
&=\frac{1}{2}\int \|U_\perp^\top x\|^2 \ell \d\mu \overset{\d\pi=\ell\d\mu}{=}\frac{1}{2}\int \|U_\perp^\top x\|^2 \d\pi \;,
\end{align*}
where the first equality follows from the choice of~$f$ and the property~\eqref{eq:tmp42609} of the measure~$\pi_r$, and the second equality is obtained from a change of measure. For the right-most term in the right-hand side of \eqref{eq:tmp36289}, we use the concavity of the log-determinant function to obtain the inequality
\begin{align*}
\int\frac{1}{2}\ln\det \lb \int \lb \nabla_{x_\perp}\ln f-x_\perp\rb^{\otimes 2}f \d\mu_\perp \rb \d\pi_r
&\leq \frac{1}{2}\ln\det \lb\int \lb \int \lb \nabla_{x_\perp}\ln f-x_\perp\rb^{\otimes 2}f \d\mu_\perp \rb\d\pi_r \rb  \\
&= \frac{1}{2}\ln\det \lb \int \lb U_\perp^\top (\nabla\ln \ell -x) \rb^{\otimes 2} \ell\d\mu \rb \;,
\end{align*}
where the equality is obtained by the chain rule and the choice of~$f$.
Substituting the three above relations in \eqref{eq:tmp36289} yields the upper bound of \eqref{eq:dimCDRbound}. The lower bound is obtained in a similar way for the same choice of test function~$f$.
}
\end{proof}

\subsection{Proof of Proposition~\ref{thm:dim_monotone}}
\label{sec:proof_dim_monotone}
We require the following lemma.
\begin{lemma}
\label{thm:logdetinequality}
Let $v \in \R^d$ be such that $v^\top v > 0$ and $W \in \R^{d \times k}$, $1 \leq k < d$, be such that $W^\top W \succeq 0$.
Then we have the inequality
\begin{equation}
\label{eq:logdetinequality}
\ln\det v^\top v + \ln\det W^\top W \leq \ln\det \begin{pmatrix} v^\top v & v^\top W \\ W^\top v & W^\top v \end{pmatrix}.
\end{equation}
\end{lemma}

\begin{proof} Re-write~\eqref{eq:logdetinequality} as
\[
\ln\det \begin{pmatrix} v^\top v & \\ & I_{k} \end{pmatrix} + \ln\det \begin{pmatrix} 1 & \\ & W^\top W \end{pmatrix} \leq \ln\det \begin{pmatrix} v^\top v & v^\top W \\ W^\top v & W^\top v \end{pmatrix},
\]
where $I_k$ denotes the $k \times k$ identity matrix. Re-arranging then gives
\[
\ln \det \begin{pmatrix} 1 & \theta^\top \\ \theta & I_k \end{pmatrix} = \ln\det \lb I_{k+1} + \begin{pmatrix}0 \\ \theta\end{pmatrix} \begin{pmatrix} 0 & \theta^\top \end{pmatrix} \rb \geq 0
\]
with $\theta = \frac{1}{\sqrt{v^\top v}} (W^\top W)^{-\frac{1}{2}} W^\top v$. But since  $\det \lb I_{k+1} + \begin{pmatrix}0 \\ \theta\end{pmatrix} \begin{pmatrix} 0 & \theta^\top \end{pmatrix} \rb = 1 + \|\theta\|_2^2$, this concludes the proof.
\end{proof}

We now demonstrate that $U_r \mapsto \mcJ^\downarrow_{\KL}(U_r)$ is monotonically decreasing for nested sequences.
\begin{proof}[Proof of Proposition~\ref{thm:dim_monotone}]
Let us denote $U_r = \begin{pmatrix} u_1 & \ldots u_r \end{pmatrix}$ and $U_{\perp,r} = \begin{pmatrix} u_{r+1} & \ldots & u_d \end{pmatrix} \in \R^{d \times (d-r)}$ for $1 \leq r < d$; {we carry the subscript~$r$ in our notation since it will be necessary to distinguish between $U_{\perp,r}$ and $U_{\perp,r+1}$ in the following}. Then, we have
\begin{equation}
\label{eq:loss1}
\mcJ^\downarrow_{\KL}(U_r) = \frac{1}{2} \trace U_{\perp,r}^\top M(\pi) U_{\perp,r} - \frac{d-r}{2} + \frac{1}{2}\ln\det \lb U_{\perp,r}^\top \; H(\pi) U_{\perp,r} \rb
\end{equation}
and
\begin{equation*}
\mcJ^\downarrow_{\KL}(U_{r+1}) = \frac{1}{2} \trace U_{\perp,r+1}^\top M(\pi) U_{\perp,r+1} - \frac{d-r-1}{2} + \frac{1}{2}\ln\det \lb U_{\perp,r+1}^\top \, H(\pi) U_{\perp,r+1} \rb.
\end{equation*}
Demonstrating  $\mcJ^\downarrow_{\KL}(U_{r+1}) \leq \mcJ^\downarrow_{\KL}(U_r)$ amounts to the inequality
\[
1 - u_{r+1}^\top M(\pi) u_{r+1} + \ln\det \lb U_{\perp,r+1}^\top \; H(\pi) U_{\perp,r+1} \rb \leq \ln\det \lb U_{\perp,r}^\top H(\pi) U_{\perp,r} \rb.
\]
The key idea is to recognize that by defining the orthonormal sub-matrix
\begin{equation*}
\label{eq:V}
V = \begin{pmatrix} u_1 & \ldots & u_r & u_{r+2} & \ldots & u_d \end{pmatrix} \in \R^{d \times (d-1)},
\end{equation*}
we have
\[
0 \leq \mcJ^\downarrow_{\KL}(V) = \frac{u_{r+1}^\top M(\pi) u_{r+1} - 1}{2} + \frac{1}{2}\ln\det \lb u_{r+1}^\top H(\pi) u_{r+1} \rb,
\]
where non-negativity is ensured since
\[
\Dkl(\pi\,\|\,\pioptKLr(U^\downarrow_{d-1})) \overset{\eqref{eq:dimCDRbound}}{\leq} \min_{U_{d-1}^\top U_{d-1} = I_{d-1}} \mcJ^\downarrow_{\KL}(U_{d-1}) \leq \mcJ^\downarrow_{\KL}(V).
\]
It therefore suffices to show
\[
\ln\det \lb u_{r+1}^\top H(\pi) u_{r+1}\rb  + \ln\det \lb U_{\perp, r+1}^\top \; H(\pi) U_{\perp, r+1}\rb   \leq \ln\det \lb U_{\perp, r}^\top \; H(\pi) U_{\perp, r}\rb \;,
\]
which follows from applying Lemma~\ref{thm:logdetinequality} with $v = H(\pi)^{\frac{1}{2}} u_{r+1}$ and $W = H(\pi)^{\frac{1}{2}}U_{\perp,r+1}$.
\end{proof}

\subsection{Proof of Theorem \ref{thm:dimCDR_tilted}}\label{proof:dimCDR_tilted}

\begin{proof}
{
 Let $T(x)=C(\pi)^{-1/2}(x-m(\pi))$ be the affine map such that $\pi_0 = T_\sharp\pi$ is centered and isotropic, that is, $m(\pi_0)=0$ and $M(\pi_0)=I_d$. For any $U_r\in\R^{d\times r}$ such that $U_r^\top C(\pi) U_r=I_r$, we let $W_r=C(\pi)^{1/2}U_r$ so that $W_r^\top W_r=I_d$.
 Applying Theorem \ref{thm:dimCDR} to $\pi_0$ yields
 \begin{equation}\label{eq:tmp1350427}
  \Dkl( \pi_0 || \pioptKLr_0( W_r ) ) \leq \frac{1}{2}\ln\det (  H(\pi_0)  ) - \frac{1}{2}\ln\det ( W_r^\top H(\pi_0) W_r )  ,
 \end{equation}
 where $\pioptKLr_0(W_r)=\pioptKLr_0(\cdot\mid W_r)$ is defined by $\d\pioptKLr_0(x \mid W_r) = \ell^{\KL}_{0,r}( W_r^\top x ) \d\mu_0(x)$, where $\mu_0=\mathcal{N}(0,I_d)$ and
 \begin{equation}\label{eq:tmp10579}
     \ell^{\KL}_{0,r}(\theta_r ) = \E_{X \sim \mu_0}\left[ \frac{\d\pi_0}{\d\mu_0}(X) \Big|  W_r^\top X = \theta_r \right] .
 \end{equation}
 Next, we show that \eqref{eq:tmp1350427} yields \eqref{eq:dimCDR_tilted}.
 First, because $T^{-1}(x) = C(\pi)^{1/2}x + m(\pi)$ is affine, we have that $|\det\nabla T^{-1}(x)|=|\det C(\pi)^{1/2}|$ is constant so that $\d\pi_0(x) \propto \d\pi(T^{-1}(x))$. Thus we can write
 $$
  H(\pi_0) =
  \E_{\pi_0}\left[ C(\pi)^{1/2}\nabla\ln\left(\frac{\d\pi}{\d x}\circ T^{-1}\right)\nabla\ln\left(\frac{\d\pi}{\d x}\circ T^{-1}\right)^\top C(\pi)^{1/2} \right]
  = C(\pi)^{1/2} H(\pi) C(\pi)^{1/2} .
 $$
 Because $V_r=C(\pi)^{1/2} W_r$, we deduce that the right-hand sides of \eqref{eq:tmp1350427} and \eqref{eq:dimCDR_tilted} are identical.
 To show that the left-hand sides are identical too, we notice that
 $$
  \Dkl( \pi || \pioptKLr(U_r))
  = \Dkl( T_\sharp \pi || T_\sharp \pioptKLr(U_r)  )
  = \Dkl( \pi_0 || \pioptKLr_0( W_r) ),
 $$
 where we used the fact that
 \begin{align*}
  T_\sharp \pioptKLr( \cdot | U_r)(x)
  &\propto  \pioptKLr( T^{-1}(x) | U_r) \\
  (\text{Equation }\eqref{eq:pioptKL})\quad&\propto  \ell^{\KL}_r( U_r^\top T^{-1}(x) ) \mu( T^{-1}(x) ) \\
  (\text{Equation }\eqref{eq:optKL})\quad&\propto  \E_{Y \sim \mu}\left[\left. \frac{\d \pi }{\d \mu  }( Y ) \right| U_r^\top Y = U_r^\top T^{-1}(x) \right]  \mu_0(x) \\
  (Y \leftarrow T^{-1}(X))\quad&\propto  \E_{X \sim \mu_0}\left[\left. \frac{\d\pi_0 }{\d\mu_0  }( X ) \right| U_r^\top T^{-1}(X) = U_r^\top T^{-1}(x) \right]  \mu_0(x)  \\
  (W_r=C(\pi)^{1/2}U_r) \quad&\propto  \E_{X \sim \mu_0}\left[\left. \frac{\d\pi_0 }{\d\mu_0  }( X ) \right| W_r^\top X = W_r^\top x \right]  \mu_0(x)  \\
  (\text{Equation }\eqref{eq:tmp10579})\quad&\propto \ell^{\KL}_{0,r}(W_r^\top x)\mu_0(x) \\
  &\propto  \pioptKLr_0( x| W_r)
 \end{align*}
 Finally, evaluating \eqref{eq:dimCDR_tilted} at $U_r = U_r^\downarrow$ yields \eqref{eq:dimCDR_tilted_optimal} and concludes the proof.}
\end{proof}

\subsection{Proof of Theorem~\ref{thm:dimCDRdatafree}}
\label{sec:dimCDRdatafreeproof}
\begin{proof}
{
For every fixed realization of random variable~$Y$, consider Theorem~\ref{thm:dimCDR}, formulated using~\eqref{eq:dimCDRperp_up}, with target measure~$\pi^Y$ and approximation~$\widetilde{\pi}^Y(V_r) := \widetilde{\pi}^{\KL,Y}(V_r)$ for any orthonormal sub-matrix~$V_r \in \R^{d \times r}$. Upon substituting~\eqref{eq:FIMidentity}, this yields the inequality
\[
\Dkl(\pi^Y \,||\, \widetilde\pi^Y(V_r)) \leq \frac{1}{2} \trace( V_\perp^\top M^Y V_\perp ) - \frac{d-r}{2}
+ \frac{1}{2} \ln\det \lb V_\perp^\top (2 I_d + H^Y-M^Y) V_\perp \rb,
\]
where $M^Y := \E_{X | Y \sim \pi^Y}[XX^\top]$ and $H^Y := \E_{X|Y \sim \pi^Y}[ \nabla \ln \ell^Y(X) \nabla \ln \ell^Y(X)^\top]$ denote matrices which are measurable functions of the random variable~$Y$. Integrating both sides of the inequality with respect to the law $\rho$ of this random variable, and applying Jensen's inequality to the concave function $S \mapsto \ln\det S$, we obtain
\[
\E_{Y \sim \rho}[\Dkl(\pi^Y \,||\, \widetilde\pi^Y(V_r))] \leq \frac{1}{2} \trace( V_\perp^\top \E_{Y \sim \rho}[M^Y] V_\perp ) - \frac{d-r}{2}
+ \frac{1}{2} \ln\det \lb V_\perp^\top (2 I_d + \E_{Y \sim \rho}[H^Y-M^Y]) V_\perp \rb \;.
\]
Noting $\E_{Y \sim \rho}[M^Y] = \E_{X \sim \mu}[XX^\top] = I_d$ and $\E_{Y \sim \rho}[H^Y] = H^\df$ concludes the proof.
}
\end{proof}

\subsection{Proof of Theorem~\ref{thm:dimCDRhell}}
\label{sec:dimCDRhellproof}

The following lemma is used in the proof of Theorem~\ref{thm:dimCDRhell}.
\begin{lemma}
\label{thm:jointconvexity}
For all $\mu$-integrable functions $f$ and $g$
we have the inequality
$
\frac{\E_{X \sim \mu} [f(X)]^2}{\E_{X \sim \mu}[g(X)]}
\leq
\E_{X \sim \mu} \left[ \frac{f(X)^2}{g(X)} \right].
$

\end{lemma}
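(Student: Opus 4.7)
The plan is to recognize the statement as a direct consequence of the Cauchy--Schwarz inequality applied to a clever decomposition, with the implicit hypothesis that $g>0$ $\mu$-almost everywhere (so that $f/\sqrt{g}$ and $\sqrt{g}$ are well defined and the denominator $\E_\mu[g]$ is strictly positive). Equivalently, one may view the inequality as an instance of Jensen's inequality applied to the perspective function $\phi(x,y) = x^2/y$, which is jointly convex on $\R \times \R_{>0}$; I would opt for the Cauchy--Schwarz route as it is more elementary.

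First I would write the trivial identity $f(X) = \frac{f(X)}{\sqrt{g(X)}} \cdot \sqrt{g(X)}$ and apply the Cauchy--Schwarz inequality in $L^2(\mu)$ to the two factors, obtaining
\begin{equation*}
\E_{X \sim \mu}[f(X)]^2 = \E_{X \sim \mu}\!\left[\frac{f(X)}{\sqrt{g(X)}} \cdot \sqrt{g(X)}\right]^2 \leq \E_{X \sim \mu}\!\left[\frac{f(X)^2}{g(X)}\right] \cdot \E_{X \sim \mu}[g(X)].
\end{equation*}
Dividing both sides by $\E_{X \sim \mu}[g(X)] > 0$ yields the claim directly.

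There is essentially no obstacle here, as the whole argument is a one-line Cauchy--Schwarz. The only subtlety worth flagging in the proof is the measurability and positivity assumption on $g$: the statement is vacuous (or requires a convention such as $0/0 = 0$ and $a/0 = +\infty$ for $a>0$) on the exceptional set where $g$ vanishes, and one should state upfront that $g>0$ $\mu$-a.e.\ so that both sides are well defined. Once this is in place, the inequality follows without further work.
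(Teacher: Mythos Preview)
Your proof is correct. The paper takes the route you mention in passing but do not pursue: it invokes Jensen's inequality for the jointly convex perspective function $(x,y)\mapsto x^2/y$ on $\R\times\R_{>0}$ (citing Boyd--Vandenberghe). Your Cauchy--Schwarz argument is arguably more elementary, since it sidesteps the need to verify joint convexity of the perspective map; the paper's approach, on the other hand, makes the structure clearer and generalizes immediately to any jointly convex $\phi$. Both are one-liners and require the same implicit positivity assumption on $g$ that you rightly flag.
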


\begin{proof}
This follows from Jensen's inequality as $(x,y) \mapsto \frac{x^2}{y}$ is jointly convex\footnote{A bivariate function $(x,y) \mapsto f(x,y)$ is \emph{jointly convex} if for all $\lambda \in [0,1]$, and for all pairs $(x_1,y_1)$ and $(x_2, y_2)$, it satisfies the inequality $f(\lambda x_1 + (1-\lambda)x_2, \lambda y_1 + (1-\lambda) y_2) \leq \lambda f(x_1, y_1) + (1-\lambda) f(x_2, y_2)$.} (see \cite[\S 3.2.6]{convexopt}).
\end{proof}

\begin{proof}[Proof of Theorem~\ref{thm:dimCDRhell}.]
The dimensional Poincar\'e inequality~\eqref{eq:dimpoincare} for the $(d-r)$ dimensional standard Gaussian measure~$\mu_\perp$ states that
\begin{equation}
\label{eq:tmpdimpoincare}
\int f^2 \d\mu_\perp - \left( \int f \d\mu_\perp \right)^2
\leq \int \| \nabla_{x_\perp} \ln f \|^2 f^2 \d\mu_\perp - \int \frac{(1 - \frac{1}{f}\int f \d\mu_\perp - \langle \nabla_{x_\perp} \ln f, x_\perp \rangle)^2}{d-r+\|x_\perp\|^2}f^2 \d\mu_\perp
\end{equation}
for any sufficiently smooth function $(x_r, x_\perp) \mapsto f(x_r, x_\perp)$ with fixed~$x_r \in \R^r$ and fixed orthonormal sub-matrix $U_r \in \R^{d \times r}$. Choose
\[
f(x_r, \x_\perp) = \sqrt{\ell(U_rx_r + U_\perp x_\perp)} \;,
\]
where $U_\perp$ is any orthonormal complement of~$U_r$ and $\ell: \R^d \to \R^+$ is such that $d\pi = \ell \d\mu$. {Without loss of generality, we can choose~$\ell$ normalized so that $\int \ell \d\mu = 1$ since the dimensional Poincar\'e inequality~\eqref{eq:dimpoincare} is homogeneous with respect to multiplicative scalings of~$f$.}
For this choice of test function~$f$, we next integrate both sides of the inequality with respect to~$x_r$ with law~$\mu_r$ given by the $r$-dimensional isotropic Gaussian measure. The left hand side of the inequality in~\eqref{eq:tmpdimpoincare} then becomes
{
\begin{align*}
\E_{x_r \sim \mu_r}\left[ \int f^2 \d\mu_\perp - \left( \int f \d\mu_\perp \right)^2
\right] &= \int f^2 \d\mu - \int \left( \int f \d\mu_\perp \right)^2 \d\mu_r \\
&\overset{\eqref{eq:optHell}}{=} 1 - \int \ell_r^\text{Hell}\d\mu_r  \\
&\overset{\eqref{eq:dhelloptval}}{=}  1 - (1-d_\mathrm{Hell}^2(\pi\,,\,\tilde\pi_r^{\mathrm{Hell}}(U_r)))^2\;,
\end{align*}
where we used the fact that $\int f^2 \d\mu = 1$ by construction. The first term in right-hand side of~\eqref{eq:tmpdimpoincare} becomes
\[
\E_{\mu_r} \left[ \int \| \nabla_{x_\perp} \ln f \|^2 f^2 \d\mu_\perp\right] =
\int \|\nabla_{x_\perp} \ln f\|^2 f^2 \d\mu = \int \frac{1}{4}\| U_\perp \nabla \ln \ell \|^2 \d\pi,
\]
after applying the change of measure $\d\pi = f^2 \d\mu$ and the chain-rule to the gradient. Meanwhile, applying Lemma~\ref{thm:jointconvexity} to the second term in the right-hand side of~\eqref{eq:tmpdimpoincare}, we obtain
\begin{align*}
\E_{\mu_r} &\left[ - \int \frac{(1 - \frac{1}{f}\int f \d\mu_\perp - \langle \nabla_{x_\perp} \ln f, x_\perp \rangle)^2}{d-r+\|x_\perp\|^2}f^2 \d\mu_\perp  \right]
= - \int \frac{(1 - \frac{1}{f}\int f \d\mu_\perp - \langle \nabla_{x_\perp} \ln f, x_\perp \rangle)^2}{d-r+\|x_\perp\|^2} \d\pi \\
&\overset{Lemma~\ref{thm:jointconvexity}}{\leq} -\frac{(1 - \E_{\pi}[\frac{1}{\sqrt{\ell}} \int \sqrt{\ell} \d\mu_\perp] - \frac{1}{2}\E_{X \sim \pi}[\langle U_\perp^\top \nabla \ln \ell, U_\perp^\top X \rangle)^2 ]}{d - r + \E_{X \sim \pi}[ \|U_\perp^\top X\|^2]} \;,
\end{align*}
where once again we have applied the chain rule and the change of measure.
The numerator can be further simplified, first by noting that
\[
\E_{\pi}\left[ \frac{1}{\sqrt{\ell}}\int \sqrt{\ell}\d\mu_\perp \right] \overset{\d\pi = \ell \d\mu}{=} \int \int \left(\int \sqrt{\ell}\d\mu_\perp \right) \sqrt{\ell} \d\mu_\perp \d\mu_r  \overset{\eqref{eq:optHell}}{=} \int \ell_r^\Hell \d\mu_r \overset{\eqref{eq:dhelloptval}}{=} (1-d_\mathrm{Hell}^2(\pi\,,\,\tilde{\pi}_r^\mathrm{Hell}(U_r)))^2 \;.
\]
Secondly, we also note that
\[
\int \nabla \ln \ell(x) x^\top \d\pi(x) \overset{\d\pi=\ell\d\mu}{=} \int \nabla \ell(x) x^\top \d\mu(x) = \int xx^\top \d\pi(x) - I,
\]
where the second equality stems from applying Stein's identity with the standard Gaussian~$\mu$, since
\[
\int \frac{\partial \ell}{\partial x_i}(x) x_j \,\d\mu(x) = \int (x_i \ell(x) - x_i \delta_{ij}) x_j \d\mu(x) \overset{\d\pi=\ell\d\mu}{=} \int x_i x_j \d\pi(x) - \delta_{ij} \;,
\]
for any $1 \leq i,j \leq d$, where $\delta_{ij} = 1$ if $i=j$ and $0$ otherwise.
}
\end{proof}

\section*{Acknowledgments}
ML and YM thank Brent Wheelock, Dimitar Trenev, Kevin Daly, Grant Seastream, and Tuan  Tran at ExxonMobil for the code and discussions related to Bayesian inference for geophysical applications with generative modeling priors.
ML also thanks Nicolas Boumal for discussions on the ManOpt forum related to optimization on matrix manifolds. {We also thank the two anonymous reviewers for their invaluable feedback which immensely improved the quality of this manuscript. In particular, we wish to thank them for pointing out references~\cite{stamInequalitiesSatisfiedQuantities1959a, Dembo_1990}, as well as for correcting numerous grammatical and mathematical typos.}

\section*{Funding}
ML and YMM acknowledge support from the US Department of Energy, Office of Advanced Scientific Computing Research, under grant DE-SC0023187 and from the ExxonMobil Technology and Engineering Company. YMM further acknowledges support from the US Department of Energy, Office of Advanced Scientific Computing Research, under grants DE-SC0021226 and DE-SC0023188. TC is supported by the ARC Discovery Project DP210103092. OZ acknowledges support from the ANR JCJC project MODENA (ANR-21-CE46-0006-01).

{
\bibliographystyle{siam}
\bibliography{references}
}

\end{document}